\pdfoutput=1
\documentclass{article}

\usepackage[utf8]{inputenc}
\usepackage[T1]{fontenc}
\usepackage{microtype}
\usepackage{graphicx}
\usepackage{booktabs}
\usepackage{amsmath,amssymb,amsfonts,mathtools,amsthm}
\usepackage{nicefrac}
\usepackage{xcolor}
\usepackage{subcaption}
\usepackage{mathrsfs}
\usepackage[normalem]{ulem}
\usepackage{pifont}
\usepackage{algorithmic,algorithm}
\usepackage{wrapfig}
\usepackage{diagbox}
\usepackage{enumitem}
\usepackage{multirow}
\usepackage{soul}
\usepackage{float}
\usepackage{placeins}
\usepackage{thmtools,thm-restate}
\usepackage{bm}
\usepackage{titletoc}
\usepackage{etoc}

\usepackage{tikz}
\usetikzlibrary{fit, positioning, arrows.meta}

\PassOptionsToPackage{numbers, compress}{natbib}
\usepackage[final,main]{neurips_2026}

\usepackage{xurl}
\usepackage{hyperref}
\hypersetup{
  pdftitle={Structure-agnostic Causal Representation Learning},
  pdfauthor={Arman Behnam, Binghui Wang},
  pdfsubject={Causal representation learning (NeurIPS 2026)},
  pdfkeywords={causal representation learning, structure identification, invariant representation learning, HSIC, anti-causal learning, out-of-distribution generalization, domain generalization}
}
\usepackage[capitalize,noabbrev]{cleveref}

\theoremstyle{plain}
\newtheorem{theorem}{Theorem}[section]
\newtheorem{proposition}{Proposition}
\newtheorem{lemma}{Lemma}
\newtheorem{corollary}{Corollary}
\theoremstyle{definition}
\newtheorem{definition}{Definition}[section]
\newtheorem{assumption}{Assumption}
\theoremstyle{remark}

\definecolor{myred}{RGB}{215,48,39}
\definecolor{mygreen}{RGB}{26,152,80}

\renewcommand{\algorithmicrequire}{\textbf{Input:}}
\renewcommand{\algorithmicensure}{\textbf{Output:}}

\usepackage[disable]{todonotes}

\begin{document}

\title{Structure-agnostic Causal Representation Learning}

\author{%
  Arman Behnam \\
  Department of Computer Science \\
  Illinois Institute of Technology \\
  Chicago, IL, USA \\
  \texttt{abehnam@hawk.illinoistech.edu} \\
  \And
  Binghui Wang \\
  Department of Computer Science \\
  Illinois Institute of Technology \\
  Chicago, IL, USA \\
  \texttt{bwang70@illinoistech.edu} \\
}

\maketitle

\begin{abstract}
Causal representation learning aims to discover robust features by exploiting the causal structure underlying data generation. Existing methods require specifying the causal structure a priori, yet different structures demand fundamentally incompatible invariance constraints, and misspecification leads to representations that discard predictive information. We introduce SaCRL, a framework that jointly identifies the causal structure and learns the corresponding invariant representation without prior structural knowledge. Our approach formulates structure selection as a soft optimization over candidate invariances using HSIC-based violation metrics, with adaptive weights that automatically concentrate on the achievable structure. We provide theoretical guarantees for structure identification, including under random-feature approximation, invariance satisfaction, and out-of-distribution generalization. Empirically, SaCRL recovers the true structure on synthetic and semi-synthetic Bayesian-network benchmarks, outperforms fixed-invariance baselines on Colored MNIST, achieves state-of-the-art accuracy on three DomainBed benchmarks (PACS, VLCS, OfficeHome), and degrades gracefully under structural misspecification and limited environment diversity. {Code is available at: \url{https://github.com/ArmanBehnam/sacrl}}.
\end{abstract}
\section{Introduction}
\vspace{-2mm}
Causal representation learning~\cite{scholkopf2021toward} has emerged as a principled approach to building robust models by uncovering the underlying mechanisms that generate data, enabling generalization beyond training distributions~\cite{scholkopf2021toward,peters2017elements}. Current approaches are broadly categorized into two paradigms, each with distinct structural assumptions.
\textbf{Structure-based} methods~\cite{wang2024disentangled,squires2023causal,behnam2024graph} require explicit knowledge of Directed Acyclic Graphs (DAGs) and complete Structural Causal Models (SCMs), imposing significant barriers when causal structures are unknown. \textbf{Distribution-invariant} methods~\cite{arjovsky2019invariant,krueger2021out} avoid an explicit graph but enforce a fixed invariance across environments, committing to one causal structure. When that commitment is wrong, the enforced invariance discards predictive signal.

\begin{wrapfigure}{r}{0.4\textwidth}
\vspace{-4mm}
    \centering
     \scalebox{0.7}{\begin{tikzpicture}[
        scale=0.6,
        node distance=1cm,
        box/.style={rectangle, draw, thick, rounded corners, align=center, minimum height=1cm},
        databox/.style={box, fill=blue!10, minimum width=2cm},
        algobox/.style={box, fill=orange!10, minimum width=2cm},
        resultbox/.style={box, fill=green!10, minimum width=2cm},
        arrow/.style={->, >=stealth, very thick}
    ]
        
        \node[font=\scriptsize, databox] (data) at (4,4.5) {Multi-Hospital\\Clinical Data\\$\mathcal{D} = \{D_{e_i}\}_{i=1}^3$};
        \node[font=\scriptsize, right=0.1cm of data, align=left] {$e_1$: General\\$ e_2$: ICU\\$e_3$: Rural};
        
        \node[font=\scriptsize, box, fill=red!10, minimum width=2.5cm] (unknown) at (4.5,-0.7) {Unknown Structure\\for Each Variable};
        
        \node[font=\scriptsize,algobox] (algo) at (7.5,2) {\textbf{Structure-Agnostic}\\ Causal Representation Learning};
        
        \node[font=\scriptsize,resultbox] (structure) at (10.2,4.5) {Discovered\\Structures};
        \node[font=\scriptsize, right=0.01cm of structure, align=center] 
            {Symptoms: $\mathcal{G}_1$\\Biomarker: $\mathcal{G}_2$\\Vitals: $\mathcal{G}_3$};
        
        \node[font=\scriptsize, resultbox] (rep) at (9,-.7) {Optimal\\Representation $\phi^*$};
        
        \node[font=\scriptsize, resultbox] (pred) at (13,1.5) {Prediction on New \\Hospital $e_{\text{test}}$};
        \node[font=\scriptsize, below=0.1cm of pred] {Low Test Error};
        
        \draw[arrow] (data) -- (algo);
        \draw[arrow] (unknown) -- (algo);
        \draw[arrow] (algo) -- (structure);
        \draw[arrow] (algo) -- (rep);
        \draw[arrow] (structure) -- (pred);
        \draw[arrow] (rep) -- (pred); 
    \end{tikzpicture}
    }
    \caption{Clinical toy example.}
    \label{fig:toy_example}
    \vspace{-4mm}
\end{wrapfigure}
Consider predicting patient outcomes from clinical data collected at several hospitals (Figure~\ref{fig:toy_example}). Different variables follow different causal patterns: symptoms arise anti-causally from the disease ($\mathcal{G}_1$); biomarkers are confounded descendants that share a genetic cause with the diagnosis ($\mathcal{G}_2$); and vital signs are confounded outcomes, responding to the disease and to unobserved severity ($\mathcal{G}_3$). The structure for each variable is unknown to the learner.

Existing methods require manually partition variables by  (a priori) causal type, a process that is error-prone (misspecification yields suboptimal representations~\cite{yang2022learning,li2022learning}), computationally prohibitive ($O(m^k)$ evaluations for $m$ structures over $k$ variable subsets~\cite{zhu2024causal, tchetgen2020introduction}), and theoretically unsound (no guarantees exist when structure identification introduces uncertainty~\cite{zhang2023identifiability,chen2022learning}).

\begin{figure}[t]
\centering

\begin{minipage}{0.5\textwidth}
\centering
\scalebox{0.75}{
\begin{tikzpicture}[
    scale=0.9,
    node distance=1.0cm,
    every node/.style={font=\small},
    structure/.style={circle, draw, minimum size=0.5cm, font=\scriptsize},
    latent/.style={circle, draw, dashed, minimum size=0.5cm, font=\scriptsize},
    arrow/.style={->, >=stealth, thick},
    box/.style={rectangle, draw, rounded corners, minimum width=2.6cm, minimum height=0.7cm, align=center},
    goodbox/.style={box, fill=green!10, draw=green!60},
    badbox/.style={box, fill=red!10, draw=red!60}
]

\node[font=\bfseries] at (-3.5,3.2) {\footnotesize Struc.-Specific (Correct)};

\node[latent] (U1) at (-4.3,2.4) {$U$};
\node[structure] (X1) at (-4.3,1.5) {$X$};
\node[structure] (Y1) at (-2.9,1.5) {$Y$};

\draw[arrow] (U1) -- (X1);
\draw[arrow] (X1) -- (Y1);
\draw[arrow] (U1) to[bend right=15] (Y1);

\node[font=\footnotesize] at (-3.5,0.8) {True: Conf-Desc};

\node[goodbox] at (-3.5,-0.4) {Apply $R_{\mathcal{G}_2}$:\\$P(Y|\phi(X))$ invariant};

\node[goodbox] at (-3.5,-1.8) {\textbf{Good OOD}\\Test Error: Low};

\draw[->, thick, green!60!black] (-3.5,0.5) -- (-3.5,-0.05);
\draw[->, thick, green!60!black] (-3.5,-0.9) -- (-3.5,-1.4);

\node[font=\bfseries] at (0,3.2) {\footnotesize Struc.-Specific (Wrong)};

\node[latent] (U2) at (-0.8,2.4) {$U$};
\node[structure] (X2) at (-0.8,1.5) {$X$};
\node[structure] (Y2) at (0.6,1.5) {$Y$};

\draw[arrow] (U2) -- (X2);
\draw[arrow] (X2) -- (Y2);
\draw[arrow] (U2) to[bend right=15] (Y2);

\node[font=\footnotesize] at (0,0.8) {True: Conf-Desc};

\node[badbox] at (0,-0.4) {Apply $R_{\mathcal{G}_1}$:\\$\phi(X) \perp E | Y$};

\node[badbox] at (0,-1.8) {\textbf{Poor OOD}\\Test Error: High};

\draw[->, thick, red!60!black] (0,0.5) -- (0,-0.05);
\draw[->, thick, red!60!black] (0,-0.9) -- (0,-1.4);

\node[font=\bfseries] at (3.5,3.2) {\footnotesize Struc.-Agnostic (Ours)};

\node[structure, draw=blue!60, thick] (X3) at (2.9,1.5) {$X$};
\node[structure, draw=blue!60, thick] (Y3) at (4.3,1.5) {$Y$};

\node[font=\footnotesize] at (3.5,0.8) {Unknown Structure};

\node[box, fill=blue!10, draw=blue!60] at (3.5,-0.4) {Learn $\boldsymbol{\alpha}$:\\Auto-select $R_{\mathcal{G}_k}$};

\node[goodbox] at (3.5,-1.8) {\textbf{Good OOD}\\Robust to Structure};

\draw[->, thick, blue!60!black] (3.5,0.5) -- (3.5,-0.05);
\draw[->, thick, blue!60!black] (3.5,-0.9) -- (3.5,-1.4);

\end{tikzpicture}
}
\caption{\textbf{Left}: When true structure (confounded-descendant) is known, applying the correct regularizer yields good Out-of-Distribution performance. \textbf{Middle}: Misspecifying structure leads to wrong invariance constraints and poor generalization. \textbf{Right}: Our structure-agnostic approach automatically discovers the structure and applies appropriate regularization.}
\label{fig:motivation}
\end{minipage}
\hfill
\begin{minipage}{0.48\textwidth}
    \centering

    \begin{minipage}{0.28\linewidth}
        \centering
       \scalebox{0.8}{\begin{tikzpicture}[
            scale=0.65,
            var/.style={draw,circle,inner sep=0pt,minimum size=0.4cm,font=\scriptsize},
            latentconf/.style={draw,circle,dashed,inner sep=0pt,minimum size=0.6cm,fill=orange!20,font=\scriptsize}
        ]
            \node (X) [var] {$X_z$};
            \node (Y) [var, right=0.55cm of X] {$Y$};
            \node (Z) [latentconf, left=0.55cm of X] {$Z$};
            \node (Xzperp) [var, above=0.55cm of X] {$X_z^{\perp}$};
            \node (U) [latentconf, below=0.55cm of X] {$U$};
            \node (E) [var, below=0.55cm of Y] {$E$};
            
            \path[->, line width=0.25mm] (Z) edge (X); 
            \path[->, line width=0.25mm] (Xzperp) edge (X);
            \path[->, line width=0.25mm] (E) edge (U); 
            \path[->, green!30!black!70, line width=0.25mm] (U) edge (X); 
            \path[->, green!30!black!70, line width=0.25mm] (Y) edge (X); 
            \path[->, green!30!black!70, bend left=45, line width=0.25mm] (Y) edge (Z); 
            \path[->, magenta, line width=0.25mm, draw opacity=0.5] (Y) edge (Xzperp);
            \path[->, magenta, line width=0.25mm, draw opacity=0.5] (U) edge (Y); 
            \path[->, magenta, line width=0.25mm, draw opacity=0.5] (U) edge (Z); 
            \path[->, green!30!black!70, line width=0.25mm] (Xzperp) edge (Z); 

            \tikzstyle{bigbox}=[draw, dotted, thick, inner sep=6pt]
            \node[bigbox, fit=(X)(Xzperp)] (Xfull) {};
            \node[below right, font=\scriptsize] at (Xfull.north west) {$X$};
        \end{tikzpicture}
        }
    \\[1mm]{\scriptsize (a) $\mathcal{G}_1$}
    \end{minipage}
    \hfill
        \begin{minipage}{0.28\linewidth}
        \centering
        \scalebox{0.8}{\begin{tikzpicture}[
            scale=0.65,
            var/.style={draw,circle,inner sep=0pt,minimum size=0.4cm,font=\scriptsize},
            latentconf/.style={draw,circle,dashed,inner sep=0pt,minimum size=0.6cm,fill=orange!20,font=\scriptsize}
        ]
    \node (X) [var] {$X_z$};
    \node (Y) [var, right=0.55cm of X] {$Y$};
    \node (Z) [latentconf, left=0.55cm of X] {$Z$};
    \node (Xzperp) [var, above=0.55cm of X] {$X_z^{\perp}$};
    \node (U) [latentconf, below=0.55cm of X] {$U$};
    \node (E) [var, below=0.55cm of Y] {$E$};
    \path[->, line width=0.3mm] (Z) edge (X); 
    \path[->, line width=0.3mm] (Xzperp) edge (X);
    \path[->, line width=0.3mm] (E) edge (U); 
    \path[->, green!30!black!70, line width=0.3mm] (U) edge (X); 
    \path[->, green!30!black!70, line width=0.3mm] (Y) edge (X); 
    \path[->, green!30!black!70, bend left=45, line width=0.3mm] (Y) edge (Z); 
    \path[->, magenta, line width=0.3mm, draw opacity=0.5] (Xzperp) edge (Y);
    \path[->, magenta, bend left=90, line width=0.3mm, draw opacity=0.5] (U) edge (Xzperp); 
    \path[->, magenta, dashed, line width=0.3mm, draw opacity=0.5] (U) edge (Z); 
    \path[->, magenta, dashed, line width=0.3mm, draw opacity=0.5] (Xzperp) edge (Z); 

    \tikzstyle{bigbox}=[draw, dotted, very thick, inner sep=8pt]
    \node[bigbox, fit=(X)(Xzperp)] (Xfull) {};
    \node[below right] at (Xfull.north west) {$X$};
\end{tikzpicture} 
}
\\[1mm]{\scriptsize (c) $\mathcal{G}_2$}
    \end{minipage}
    \hfill
    \begin{minipage}{0.28\linewidth}
        \centering
    \scalebox{0.8}{\begin{tikzpicture}[
            scale=0.65,
            var/.style={draw,circle,inner sep=0pt,minimum size=0.4cm,font=\scriptsize},
            latentconf/.style={draw,circle,dashed,inner sep=0pt,minimum size=0.6cm,fill=orange!20,font=\scriptsize}
        ]
    \node (X) [var] {$X_z$};
    \node (Y) [var, right=0.55cm of X] {$Y$};
    \node (Z) [latentconf, left=0.55cm of X] {$Z$};
    \node (Xzperp) [var, above=0.55cm of X] {$X_z^{\perp}$};
    \node (U) [latentconf, below=0.55cm of X] {$U$};
    \node (E) [var, below=0.55cm of Y] {$E$};
    \path[->, line width=0.3mm] (Z) edge (X); 
    \path[->, line width=0.3mm] (Xzperp) edge (X);
    \path[->, line width=0.3mm] (E) edge (U); 
    \path[->, green!30!black!70, line width=0.3mm] (U) edge (X); 
    \path[->, green!30!black!70, line width=0.3mm] (Y) edge (X); 
    \path[->, green!30!black!70, bend left=45, line width=0.3mm] (Y) edge (Z); 
    \path[->, magenta, line width=0.3mm, draw opacity=0.5] (Xzperp) edge (Y);
    \path[->, magenta, line width=0.3mm, draw opacity=0.5] (U) edge (Y); 
    \path[->, magenta, line width=0.3mm, draw opacity=0.5] (U) edge (Z); 
    \path[->, green!30!black!70, line width=0.3mm] (Xzperp) edge (Z); 

    \tikzstyle{bigbox}=[draw, dotted, very thick, inner sep=8pt]
    \node[bigbox, fit=(X)(Xzperp)] (Xfull) {};
    \node[below right] at (Xfull.north west) {$X$};
\end{tikzpicture} }
    \\[1mm]{\scriptsize (b) $\mathcal{G}_3$}
    \end{minipage}
    \caption{Three canonical causal structures: (a) anti-causal $\mathcal{G}_1$, (b) confounded-descendant $\mathcal{G}_2$ (c) confounded-outcome $\mathcal{G}_3$. Latent variables $U, Z$ (shaded dashed nodes) influence the relationship between observed features $X$ (dotted box) and target $Y$, with environment $E$ capturing distributional shifts. Magenta arrows denote  causal mechanisms that distinguish each structure: $Y \rightarrow X$ for anti-causal, $X \rightarrow Y$ for both confounded types. Dashed arrows represent confounding paths through $U$ and $Z$.}
    \label{fig:structures}
\end{minipage}
\vspace{-6mm}
\end{figure}

Different causal structures demand incompatible invariance properties~\citep{makar2022fairness,wang2023causal}. A representation optimized for one structure performs poorly under another, as shown in Figure~\ref{fig:motivation}. The central question we address is \textbf{can we learn optimal invariant representations without knowing the causal structure?} This requires a paradigm shift from \emph{structure-specific} to \emph{structure-agnostic} learning\footnote{It refers to the learner’s prior knowledge, not the space of all possible structures. Prior methods require specifying the true structure that governs the data in advance, whereas we eliminate this requirement by learning the structure directly from data.}:  jointly discover causal mechanisms generating the data and learn representations tailored to those mechanisms. The challenge is both computational (avoiding exponential structure search) and theoretical (providing guarantees when structure itself is uncertain). 

We introduce a kernel-based framework, \emph{Structure-agnostic Causal Representation Learning (SaCRL)}, for joint structure discovery and representation learning. SaCRL generalizes Anti-Causal Invariant Abstractions (ACIA)~\cite{behnam2026measure}, which learns invariant representations for the anti-causal structure $\mathcal{G}_1$, to settings where the structure itself is unknown. The key theoretical insight is that under the true causal structure, the corresponding invariance is achievable by predictive representations, whereas wrong invariances incur irreducible violation. This asymmetry enables structure identification via a SoftMin objective that automatically concentrates on the correct structure. We prove: (i) consistent structure identification, which also holds under random-feature approximation of HSIC, (ii) bounds on the remaining invariance violation, (iii) out-of-distribution generalization bounds, and (iv) sample complexity for joint identification and generalization. Empirically, SaCRL also recovers the true structure on semi-synthetic Bayesian-network data with known ground truth, and its adaptive weights provide a calibrated confidence for the selected structure.
\vspace{-2mm}
\section{Background}
\label{sec:background}

\vspace{-2mm}
Random variables are uppercase ($X, Y, E$), realizations lowercase ($x, y, e$), and spaces calligraphic ($\mathcal{X}, \mathcal{Y}, \mathcal{E}$). We write $|\mathcal{E}|$ for set cardinality, $\mathbb{E}_P[\cdot]$ for expectation under $P$, and $\mathbb{I}_A$ for the indicator of set $A$. Statistical independence is denoted $X \perp\!\!\!\perp Y$ (unconditional) and $X \perp\!\!\!\perp Y \mid Z$ (conditional). Complete notations appear in Appendix~\ref{app:notation}.

\vspace{-2mm}
\subsection{Measure Theory}
\label{sec:preliminaries}
\vspace{-2mm}

Following ACIA~\cite{behnam2026measure}, this work uses measure-theoretic probability. See Appendix~\ref{app:measure_foundations} for formal definitions of measurable space, probability measure, product space, $\sigma$-algebra, and Markov kernel.

\vspace{-2mm}
\subsection{Canonical Causal Structures} 
\vspace{-2mm}
We formalize causal relationships using DAGs in Appendix~\ref{app:causal_structures}, which encode the qualitative structure of causal mechanisms. There exist three canonical causal structures \cite{wang2023causal}, with definitions below:

\begin{definition}
\label{def:canonical_structures}
Three Canonical Causal Structures. 

\begin{enumerate}[label=(\alph*),leftmargin=*,nosep]
\vspace{-1mm}
    \item \textbf{Anti-causal ($\mathcal{G}_1$)}: DAG contains edges $Y \rightarrow X \leftarrow E$, capturing settings where labels generate observations (e.g., disease causes symptoms). 

    \item \textbf{Confounded-descendant ($\mathcal{G}_2$)}:  Observed features decompose as $X = (X_z, X_z^{\perp})$ where $X_z^{\perp} \rightarrow Y$, and a latent confounder $U$ influences both $X_z^{\perp}$ and other components. DAG includes $X_z^{\perp} \rightarrow Y$ with $U \rightarrow X_z^{\perp}$.

    \item \textbf{Confounded-outcome $\mathcal{G}_3$}: Observed features decompose as $X = (X_z, X_z^{\perp})$ where $X_z^{\perp} \rightarrow Y \rightarrow X_z$, and a latent confounder $U$ influences both $Y$ and the latent variable $Z$ (which in turn affects $X_z$). DAG includes $X_z^{\perp} \rightarrow Y \rightarrow X_z$ with $U \rightarrow Y$ and $U \rightarrow Z \rightarrow X_z$.
\end{enumerate}
\vspace{-2mm}
\end{definition}

Here, $X_Z^{\perp}$ denotes the component of $X$ that is causally orthogonal to the $Z$-mediated pathway. Specifically, $X_Z$ captures variation in $X$ that flows through the latent $Z$ (e.g., features influenced by $Y$ via $Z$ in the anti-causal structure), and $X_Z^{\perp}$ captures variation from other sources, including direct effects from $Y$, confounders $U$, or environment $E$. Figure~\ref{fig:structures} illustrates the three canonical structures.

The three canonical structures form an \emph{exhaustive} partition of valid DAGs under two mild conditions: (i) {\bf partial observability:} the learner observes only $(X, Y, E)$ with latent confounders $U$ unobserved; (ii) {\bf environment mechanism:} $E$ influences the system through latent confounders rather than directly rewiring the $X$--$Y$ relationship. Under (i)--(ii), 
the causal relationship between $X$ and $Y$ must take one of three forms: 
(1) the $X$--$Y$ edge is either $Y \to X$ (anti-causal $\mathcal{G}_1$) or $X \to Y$ (causal). In the causal direction, confounding either operates through features ($\mathcal{G}_2$) or bypasses the label via a latent mediator ($\mathcal{G}_3$). Bidirectional $X \leftrightarrow Y$ violates acyclicity; unconfounded $X \to Y$ is recovered by $\mathcal{G}_2$ with zero confounder influence. These conditions are standard in causal representation learning~\cite{wang2023causal}.

Our theoretical analysis relies on the distinction between observational conditioning $P(Y \mid X=x)$ and interventional conditioning $P(Y \mid \text{do}(X=x))$, where the latter severs incoming edges to $X$ in the causal graph. While this distinction motivates why different causal structures require different invariance properties, our method circumvents explicit do-calculus. Formal definitions of interventions and causal independence appear in Appendix~\ref{app:interventions}.

\begin{definition}[Environment Space]
\label{def:environment_space}
An environment space consists of a finite collection $\mathcal{E} = \{e_1, \ldots, e_m\}$ of environments, where each environment $e_i \in \mathcal{E}$ has a measurable input space $(\mathcal{X}_{e_i}, \mathscr{F}_{\mathcal{X}_{e_i}})$, a measurable output space $(\mathcal{Y}_{e_i}, \mathscr{F}_{\mathcal{Y}_{e_i}})$, a probability measure $P_{e_i}$ on the product space $(\mathcal{X}_{e_i} \times \mathcal{Y}_{e_i}, \mathscr{F}_{\mathcal{X}_{e_i}} \otimes \mathscr{F}_{\mathcal{Y}_{e_i}})$, and a dataset $D_{e_i} = \{(x_j^{e_i}, y_j^{e_i})\}_{j=1}^{n_{e_i}}$ sampled i.i.d. from $P_{e_i}$.
\vspace{-2mm}
\end{definition}

Each environment $e$ induces a joint distribution $P_e(X, Y)$ that factorizes according to an underlying causal DAG $\mathcal{G}_e$. We formalize this via causal spaces that pair probability measures with their generating mechanisms. The key quantity is the interventional kernel distance $d_{\mathcal{K}}(e, e')$ measuring how causal mechanisms differ across environments. Full measure-theoretic causality definitions are in Appendix~\ref{app:causal_kernels}. One should understand the meaning of representation for this purpose.

\begin{definition}[Representation]
\label{def:representation}
It is a measurable function $\phi: \mathcal{X} \rightarrow \mathcal{Z}$ mapping input space $\mathcal{X}$ to a latent representation space $\mathcal{Z}$, where $(\mathcal{Z}, \mathscr{F}_{\mathcal{Z}})$ is a measurable space. 
\vspace{-2mm}
\end{definition}
\section{Problem Formulation}
\label{sec:problem}

\vspace{-2mm}
\subsection{Motivation}
\label{sec:motivation}
\vspace{-2mm}
Different causal structures demand fundamentally incompatible invariance properties. An anti-causal setting ($Y \rightarrow X$) requires representations conditionally independent of environments given the target. A confounded-descendant setting ($X \rightarrow Y$ with confounding) requires the predictive distribution $P(Y|\phi(X))$ to be invariant. A confounded-outcome setting requires the marginal $P(\phi(X))$ to be invariant. A representation optimized for one structure typically performs poorly under another.

Suppose the true structure is confounded-descendant ($\mathcal{G}_2$), but we enforce anti-causal invariance ($\mathcal{I}_1: \phi(X) \perp E | Y$). This constraint forces the representation to discard information that varies with $E$ conditional on $Y$, but in $\mathcal{G}_2$, this includes precisely the causal signal $X \rightarrow Y$. The learned representation becomes predictively useless. We observe that the correct invariance is achievable (the representation can satisfy it while maintaining prediction accuracy), whereas incorrect invariances are unachievable (satisfying them requires discarding predictive information).

\vspace{-2mm}
\subsection{Problem Setup}
\label{sec:setup}
\vspace{-2mm}
Let $\mathcal{E} = \{e_1, \ldots, e_m\}$ with $m \geq 2$ denote the set of environments. For each environment $e \in \mathcal{E}$, we observe a dataset $D_e = \{(x_j^e, y_j^e)\}_{j=1}^{n_e}$ drawn i.i.d.\ from an unknown joint distribution $P_e(X, Y)$ on the product space $(\mathcal{X} \times \mathcal{Y}, \mathscr{F}_{\mathcal{X}} \otimes \mathscr{F}_{\mathcal{Y}})$. The data-generating process follows one of three canonical causal structures $\mathcal{G}^* \in \{\mathcal{G}_1, \mathcal{G}_2, \mathcal{G}_3\}$, \emph{but the true structure $\mathcal{G}^*$ is unknown to the learner.}
\begin{definition}[Structure-Specific Invariance]
\label{def:structure_invariances}
For each causal structure $\mathcal{G}_k$, the corresponding invariance condition $\mathcal{I}_k$ on representation $\phi: \mathcal{X} \rightarrow \mathcal{Z}$ is: 
$\mathcal{I}_1(\phi): \phi(X) \perp\!\!\!\perp E \mid Y$ for anti-causal, 
$\mathcal{I}_2(\phi): Y \perp\!\!\!\perp E \mid \phi(X)$ for confounded-descendant, and 
$\mathcal{I}_3(\phi): \phi(X) \perp\!\!\!\perp E$ for confounded-outcome case,
where $\perp\!\!\!\perp$ denotes statistical independence and $E$ is the environment indicator.
\vspace{-2mm}
\end{definition}
We formalize the structure-agnostic setting with the product causal space framework. The learner's task is to identify which kernel family governs the observed data. Given data $\mathcal{D} = \{D_{e_i}\}_{i=1}^m$ from $m$ environments generated by an unknown causal structure $\mathcal{G}^* \in \{\mathcal{G}_1, \mathcal{G}_2, \mathcal{G}_3\}$, we want to identify the true causal structure $\mathcal{G}^*$ (structure discovery), find $\phi^*: \mathcal{X} \rightarrow \mathcal{Z}$ satisfying the invariance $\mathcal{I}_{k^*}$ corresponding to $\mathcal{G}^*$, and learn predictor $w^*: \mathcal{Z} \rightarrow \mathcal{Y}$ achieving optimal out-of-distribution generalization simultaneously.
Formally, we solve the below objective function: 
\begin{equation}
\label{eq:main_objective}
\min_{\phi, w, k} \left\{ \max_{e \in \mathcal{E}} \mathcal{R}_e(w \circ \phi) + \lambda \cdot \mathbb{I}[\neg\mathcal{I}_k(\phi)] \right\}
\end{equation}
where $\mathcal{R}_e(f) = \mathbb{E}_{(X,Y) \sim P_e}[\ell(f(X), Y)]$ is the risk in environment $e$. The objective jointly selects the causal structure $\mathcal{G}_k$, learns a representation $\phi$ satisfying the invariance $\mathcal{I}_k$, and optimizes worst-case risk across environments. Two challenges arise: (i) the structure $k$ is discrete, requiring search over candidates; (ii) the indicator $\mathbb{I}[\cdot]$ is non-differentiable. We address both by observing that each $\mathcal{I}_k$ is an independence statement, which can be quantified continuously via kernel independence testing.
\vspace{-2mm}
\section{Kernel Independence Framework}
\label{sec:kernel_framework}
\vspace{-2mm}
We develop a kernel-based framework offering rigorous independence characterization and differentiable optimization. Reproducing kernel Hilbert spaces (RKHS) (See Appendix~\ref{app:kernel_framework}) embed probability measures as unique elements, enabling independence testing via norm computations.

\begin{definition}[Characteristic Kernel]
\label{def:characteristic_kernel}
A kernel $k: \mathcal{X} \times \mathcal{X} \rightarrow \mathbb{R}$ is characteristic if the kernel mean embedding $\mu: \mathscr{P}(\mathcal{X}) \rightarrow \mathcal{H}$ is injective on the space of probability measures $\mathscr{P}(\mathcal{X})$ defined by
\begin{equation}
\mu_P := \mathbb{E}_{X \sim P}[k(\cdot, X)] = \int_{\mathcal{X}} k(\cdot, x) \, dP(x)
\end{equation}
\vspace{-4mm}
\end{definition}

Common characteristic kernels include the Gaussian RBF kernel $k(x, x') = \exp(-\|x - x'\|^2 / 2\sigma^2)$. For characteristic kernels, $\mu_P = \mu_Q$ if and only if $P = Q$.

We quantify dependence between random variables via the cross-covariance operator $\mathcal{C}_{XY}$, whose Hilbert-Schmidt norm measures dependence strength.

\begin{definition}[Hilbert-Schmidt Independence Criterion (HSIC)]
\label{def:hsic} 
Let $\|\cdot\|_{\mathrm{HS}}$ be the Hilbert-Schmidt norm. The HSIC between random variables $X$ and $Y$ w.r.t. characteristic kernels $k_{\mathcal{X}}$ and $k_{\mathcal{Y}}$ is:

\begin{equation}
\mathrm{HSIC}(X, Y) := \|\mathcal{C}_{XY}\|_{\mathrm{HS}}^2
\end{equation}

\vspace{-4mm}
\end{definition}
Key property of HSIC is it completely characterizes statistical independence for characteristic kernels.

\begin{lemma}[HSIC Characterizes Independence]
\label{lem:hsic_independence}
Let $k_{\mathcal{X}} \otimes k_{\mathcal{Y}}$ be characteristic (e.g., Gaussian or delta kernels~\cite{szabo2018characteristic}). Then:
\begin{equation}
\mathrm{HSIC}(X, Y) = 0 \quad \Longleftrightarrow \quad X \perp\!\!\!\perp Y
\end{equation}
\vspace{-2mm}
\end{lemma}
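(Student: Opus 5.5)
The plan is to express HSIC as a squared maximum mean discrepancy between the joint law and the product of marginals, and then use that the product kernel $k_{\mathcal{X}} \otimes k_{\mathcal{Y}}$ is characteristic (Definition~\ref{def:characteristic_kernel}).

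\textbf{Step 1: rewrite the cross-covariance operator as a difference of embeddings.} Let $\mathcal{H}_{\mathcal{X}}$ and $\mathcal{H}_{\mathcal{Y}}$ be the RKHSs of $k_{\mathcal{X}}$ and $k_{\mathcal{Y}}$. I will assume bounded measurable kernels, so that all Bochner integrals exist. Identify $\mathrm{HS}(\mathcal{H}_{\mathcal{Y}}, \mathcal{H}_{\mathcal{X}})$ isometrically with the tensor product $\mathcal{H}_{\mathcal{X}} \otimes \mathcal{H}_{\mathcal{Y}}$. This tensor product is exactly the RKHS of the product kernel $k\big((x,y),(x',y')\big) = k_{\mathcal{X}}(x,x')\,k_{\mathcal{Y}}(y,y')$. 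Under this identification,
\[
\mathcal{C}_{XY} = \mathbb{E}\big[k_{\mathcal{X}}(\cdot,X)\otimes k_{\mathcal{Y}}(\cdot,Y)\big] - \mu_{P_X}\otimes\mu_{P_Y} .
\]
The first term is the kernel mean embedding $\mu_{P_{XY}}$ of the joint law under the product kernel. The second term equals $\mu_{P_X\otimes P_Y}$, which follows from Fubini for the Bochner integral since $k(\cdot,(x,y)) = k_{\mathcal{X}}(\cdot,x)\otimes k_{\mathcal{Y}}(\cdot,y)$. Hence
\[
\mathrm{HSIC}(X,Y) = \|\mu_{P_{XY}} - \mu_{P_X\otimes P_Y}\|^2_{\mathcal{H}_{\mathcal{X}}\otimes\mathcal{H}_{\mathcal{Y}}} .
\]

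\textbf{Step 2: the two directions.}
\begin{itemize}
\item ($\Leftarrow$) If $X \perp\!\!\!\perp Y$, then $P_{XY} = P_X\otimes P_Y$. The two embeddings coincide, so $\mathrm{HSIC}=0$.
\item ($\Rightarrow$) If $\mathrm{HSIC}(X,Y)=0$, then $\mu_{P_{XY}} = \mu_{P_X\otimes P_Y}$. Injectivity of the embedding for the characteristic product kernel gives $P_{XY} = P_X\otimes P_Y$, i.e.\ independence.
\end{itemize}

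\textbf{Main obstacle.} The delicate point is Step 1: justifying the identification of Hilbert--Schmidt operators with the tensor-product RKHS, and exchanging expectation with the tensor product. I would handle this by checking the identity on rank-one elements, then using the reproducing property $\langle f\otimes g, \mathcal{C}_{XY}\rangle = \mathrm{Cov}(f(X),g(Y))$ and density of finite sums.

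The characteristic property of the product kernel is taken as a hypothesis. For the Gaussian and delta-kernel examples it is exactly the content of~\cite{szabo2018characteristic}, which I would cite rather than reprove. Note that characteristic marginal kernels alone would not suffice in general, which is why the lemma assumes $k_{\mathcal{X}}\otimes k_{\mathcal{Y}}$ is characteristic.
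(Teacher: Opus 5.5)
Your proposal is correct and is essentially the paper's proof. Both identify $\mathrm{HSIC}(X,Y)$ with $\mathrm{MMD}^2(P_{XY},P_X\otimes P_Y)$ in the RKHS of $k_{\mathcal{X}}\otimes k_{\mathcal{Y}}$ and conclude by injectivity of the embedding; your Step~1 derives that identity directly from the definition of $\mathcal{C}_{XY}$, whereas the paper goes through the expanded kernel-expectation formula.

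One correction to your closing remark. Characteristic marginals need not make $k_{\mathcal{X}}\otimes k_{\mathcal{Y}}$ characteristic, but for two variables they already give the lemma's conclusion. Suppose $\mathrm{Cov}(f(X),g(Y))=0$ for all $f\in\mathcal{H}_{\mathcal{X}}$ and $g\in\mathcal{H}_{\mathcal{Y}}$. Apply injectivity of $k_{\mathcal{X}}$ to the zero-mass signed measure $A\mapsto\mathbb{E}[\mathbb{I}_A(X)(g(Y)-\mathbb{E}g(Y))]$, then injectivity of $k_{\mathcal{Y}}$; this gives $P_{XY}=P_X\otimes P_Y$. So the product assumption is what your embedding route needs, not what the lemma needs.
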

In practice, we estimate HSIC from finite samples using kernel matrices.

\begin{definition}[Empirical HSIC]
\label{def:empirical_hsic}
Given samples $\{(x_i, y_i)\}_{i=1}^n$, the empirical HSIC estimator is:
\begin{equation}
\widehat{\mathrm{HSIC}}(X, Y) = \frac{1}{n^2} \mathrm{tr}(K_X H K_Y H)
\end{equation}
where $[K_X]_{ij} = k_{\mathcal{X}}(x_i, x_j)$, $[K_Y]_{ij} = k_{\mathcal{Y}}(y_i, y_j)$, and $H = I_n - \frac{1}{n}\mathbf{1}_n\mathbf{1}_n^\top$ is the centering matrix.
\vspace{-2mm}
\end{definition}

To test conditional independence, we employ a residualization approach based on kernel regression.

\begin{definition}[Kernel Ridge Regression]
\label{def:kernel_residual}
Let $\{(x_i, z_i)\}_{i=1}^n$ be observations. The kernel ridge regression estimate is:
\begin{equation}
\hat{m}_X(z) := \mathbf{k}_z^\top (K_Z + \gamma I_n)^{-1} \mathbf{x}
\end{equation}
where $\gamma > 0$ is the regularization parameter, chosen via cross-validation or set to $O(n^{-1/(2+d)})$ for $d$-dimensional inputs. The residual at observation $i$ is $\tilde{x}_i := x_i - \hat{m}_X(z_i)$.
\vspace{-2mm}
\end{definition}
These residuals remove the dependence on $Z$, allowing us to test conditional independence by applying standard HSIC to the residualized variables.

\begin{definition}[Conditional HSIC]
\label{def:conditional_hsic}
The conditional HSIC for testing $X \perp\!\!\!\perp Y \mid Z$ is:
\begin{equation}
\mathrm{HSIC}(X, Y \mid Z) := \mathrm{HSIC}(\tilde{X}, \tilde{Y})
\end{equation}
where $\tilde{X} = X - \hat{m}_X(Z)$ and $\tilde{Y} = Y - \hat{m}_Y(Z)$ are kernel regression residuals.
\end{definition}
Below lemma states kernel regression residuals yield a consistent test for conditional independence.

\begin{lemma}[Conditional HSIC Consistency]
\label{lem:conditional_hsic}
Assume additive noise, $X = m_X(Z) + \varepsilon_X$ and $Y = m_Y(Z) + \varepsilon_Y$ with $(\varepsilon_X, \varepsilon_Y) \perp\!\!\!\perp Z$, bounded Lipschitz kernels with $k_{\mathcal{X}} \otimes k_{\mathcal{Y}}$ characteristic, and $L^2$-consistent regression estimates $\hat{m}_X, \hat{m}_Y$. Then $X \perp\!\!\!\perp Y \mid Z \Leftrightarrow \varepsilon_X \perp\!\!\!\perp \varepsilon_Y$, and:
\begin{enumerate}[nosep]
    \item If $X \perp\!\!\!\perp Y \mid Z$, then $\widehat{\mathrm{HSIC}}(\tilde{X}, \tilde{Y}) \xrightarrow{p} 0$.
    \item If $X \not\perp\!\!\!\perp Y \mid Z$, then $\widehat{\mathrm{HSIC}}(\tilde{X}, \tilde{Y}) \xrightarrow{p} \mathrm{HSIC}(\varepsilon_X, \varepsilon_Y) > 0$.
\end{enumerate}
\end{lemma}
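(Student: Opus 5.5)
The argument has two parts: first the population equivalence $X \perp\!\!\!\perp Y \mid Z \Leftrightarrow \varepsilon_X \perp\!\!\!\perp \varepsilon_Y$, then the convergence of the plug-in statistic, handled by a perturbation bound that compares $\widehat{\mathrm{HSIC}}(\tilde X,\tilde Y)$ to $\widehat{\mathrm{HSIC}}(\varepsilon_X,\varepsilon_Y)$. The two limiting statements then follow from Lemma~\ref{lem:hsic_independence}.

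\textbf{Step 1: the equivalence.} Under the additive noise model, conditioning on $Z=z$ makes $X$ and $Y$ translates $m_X(z)+\varepsilon_X$ and $m_Y(z)+\varepsilon_Y$. Because $(\varepsilon_X,\varepsilon_Y)\perp\!\!\!\perp Z$, the conditional law of $(\varepsilon_X,\varepsilon_Y)$ given $Z=z$ equals their joint law.
\begin{itemize}
\item If $\varepsilon_X\perp\!\!\!\perp\varepsilon_Y$, then for each $z$ the conditional joint law of $(X,Y)$ factorizes, since it is a deterministic shift of a product measure. Hence $X\perp\!\!\!\perp Y\mid Z$.
\item Conversely, if $X\perp\!\!\!\perp Y\mid Z$, then for a.e.\ $z$ the pair $(X-m_X(z),\,Y-m_Y(z))$ is conditionally independent given $Z=z$. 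This pair is $(\varepsilon_X,\varepsilon_Y)$, whose conditional law is the unconditional one. So $\varepsilon_X\perp\!\!\!\perp\varepsilon_Y$.
\end{itemize}
Combined with Lemma~\ref{lem:hsic_independence}, this gives $\mathrm{HSIC}(\varepsilon_X,\varepsilon_Y)=0$ in case 1 and $>0$ in case 2.

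\textbf{Step 2: convergence.} Write the residuals as $\tilde x_i=\varepsilon_{X,i}+\delta_{X,i}$ with $\delta_{X,i}=m_X(z_i)-\hat m_X(z_i)$, and similarly for $Y$. The plan is to prove
\[
\bigl|\widehat{\mathrm{HSIC}}(\tilde X,\tilde Y)-\widehat{\mathrm{HSIC}}(\varepsilon_X,\varepsilon_Y)\bigr|\le C\Bigl(\tfrac1n\textstyle\sum_i\|\delta_{X,i}\|+\tfrac1n\sum_i\|\delta_{Y,i}\|\Bigr).
\]
\begin{itemize}
\item The $V$-statistic $\frac1{n^2}\mathrm{tr}(K_XHK_YH)$ expands into averages of products of at most four kernel evaluations.
\item The kernels are bounded by $\kappa$ and $L$-Lipschitz. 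Replacing $\varepsilon$ by $\varepsilon+\delta$ inside one factor of a product changes that product by at most $\kappa^3 L(\|\delta_i\|+\|\delta_j\|)$.
\item A telescoping argument over the factors then gives the bound, with $C$ depending only on $\kappa$ and $L$.
\end{itemize}
By Cauchy–Schwarz, the right-hand side is at most $C\,(\|\hat m_X-m_X\|_{L^2(\hat P_n)}+\|\hat m_Y-m_Y\|_{L^2(\hat P_n)})$. This tends to $0$ in probability by the assumed $L^2$-consistency.

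\textbf{Step 3: conclusion.} The empirical HSIC of the i.i.d.\ pairs $(\varepsilon_{X,i},\varepsilon_{Y,i})$ is a bounded $V$-statistic. It therefore converges in probability to $\mathrm{HSIC}(\varepsilon_X,\varepsilon_Y)$, at rate $O_p(n^{-1/2})$, by standard $U$/$V$-statistic concentration (e.g., McDiarmid's inequality). Adding the perturbation bound from Step 2 yields $\widehat{\mathrm{HSIC}}(\tilde X,\tilde Y)\xrightarrow{p}\mathrm{HSIC}(\varepsilon_X,\varepsilon_Y)$. Steps 1–2 then give the two claimed cases.

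\textbf{Main obstacle.} The difficulty is in Step 2: the estimates $\hat m$ are fitted on the same sample, so the $\delta_i$ are not independent of the $\varepsilon_i$. The Lipschitz perturbation bound avoids this, because it is deterministic, holding for any realization, and needs only the in-sample empirical $L^2$ error. I would first try to justify that in-sample error directly. If the assumption only gives population $L^2$ consistency, I would use sample splitting: fit $\hat m$ on half the data and evaluate on the other half, which transfers population error to empirical error by the law of large numbers.
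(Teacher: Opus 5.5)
Your proposal is correct and matches the paper's proof step for step. The common steps are: the shift-plus-independence argument for $X \perp\!\!\!\perp Y \mid Z \Leftrightarrow \varepsilon_X \perp\!\!\!\perp \varepsilon_Y$; a deterministic Lipschitz perturbation bound $\bigl|\widehat{\mathrm{HSIC}}(\tilde X,\tilde Y)-\widehat{\mathrm{HSIC}}(\varepsilon_X,\varepsilon_Y)\bigr|\le \frac{C}{n}\sum_i(\|\delta_{X,i}\|+\|\delta_{Y,i}\|)$; Cauchy--Schwarz with $L^2$-consistency; and McDiarmid-type concentration for the noise statistic. For the perturbation bound, the paper bounds Gram-entry changes and uses $n^{-2}\sum_{i,j}|(HAH)_{ij}|\le 4n^{-2}\sum_{i,j}|A_{ij}|$, whereas you telescope the expanded $V$-statistic; this difference is immaterial. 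Your remark that the argument needs the in-sample error $\|\hat m-m\|_{L^2(\hat P_n)}$, or else sample splitting, is a real point that the paper's proof glosses over.
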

Under this model, the violation metrics $V_1$ and $V_2$, which rely on conditional HSIC, correctly detect whether the corresponding invariances hold.
\vspace{-2mm}
\section{Structure-agnostic Causal Representation Learning (SaCRL)}
\label{sec:method}
\vspace{-2mm}

Our theoretical findings rely on standard regularity conditions for kernel methods and the representation class (Appendix~\ref{app:assumptions}). The key structural requirement is:

\begin{assumption}[Sufficient Environment Diversity]
\label{assum:sufficient_diversity}
The training environments exhibit variation along three axes: marginal distribution $P(X)$, conditional $P(X|Y)$, and outcome $P(Y|X)$. We assume the three causal structures are distinguishable: testing $\mathcal{I}_1$ requires variation in $P(X|Y)$, testing $\mathcal{I}_2$ requires variation in $P(Y|X)$, and testing $\mathcal{I}_3$ requires variation in $P(X)$.
\vspace{-2mm}
\end{assumption}

\subsection{Structure Violation Metrics}
\label{sec:violations}
\vspace{-2mm}
We quantify the degree to which a representation $\phi$ violates structure-specific invariance using HSIC.

\begin{definition}[Population Violation Functions]
\label{def:violations_population}
Let $\phi: \mathcal{X} \to \mathcal{Z}$ be a representation. The population violations are $V_1^*(\phi) := \mathrm{HSIC}(\phi(X), E \mid Y)$ (anti-causal), $V_2^*(\phi) := \mathrm{HSIC}(Y, E \mid \phi(X))$ (confounded-descendant), and $V_3^*(\phi) := \mathrm{HSIC}(\phi(X), E)$ (confounded-outcome).
\vspace{-2mm}
\end{definition}

For empirical violation functions, given pooled observations $\{(x_i, y_i, e_i)\}_{i=1}^N$ from all environments, the empirical violations are $V_1(\phi) := \widehat{\mathrm{HSIC}}(\tilde{\phi}, \tilde{E})$, $V_2(\phi) := \widehat{\mathrm{HSIC}}(\tilde{Y}, \tilde{E}')$, and $V_3(\phi) := \widehat{\mathrm{HSIC}}(\phi(X), E)$. Here, $\tilde{\phi}_i := \phi(x_i) - \hat{m}_{\phi}(y_i)$ are residuals from regressing $\phi(X)$ on $Y$, $\tilde{E}_i := e_i - \hat{m}_E(y_i)$ are residuals from regressing $E$ on $Y$, $\tilde{Y}_i := y_i - \hat{m}_Y(\phi(x_i))$ are residuals from regressing $Y$ on $\phi(X)$ $\tilde{E}'_i := e_i - \hat{m}_{E'}(\phi(x_i))$ are residuals from regressing $E$ on $\phi(X)$, and $\hat{m}(\cdot)$ kernel ridge regression estimates.

The distinction between population ($V_k^*$) and empirical ($V_k$) violations is essential. Theoretical results concern $V_k^*$, while the algorithm optimizes $V_k$. Consistency ensures $V_k(\phi) \xrightarrow{p} V_k^*(\phi)$ as $n \to \infty$.

\vspace{-2mm}
\subsection{Soft Structure Selection}
\label{sec:softmin}
\vspace{-2mm}
SoftMin offers a differentiable relaxation for joint structure discovery and representation learning.

\begin{definition}[SoftMin Aggregation]
\label{def:softmin}
SoftMin operator on violation metrics 
with temperature $\beta$ is:
\begin{equation}
\mathrm{SoftMin}_\beta(\{V_k\}) := -\frac{1}{\beta} \log \sum_{k} \exp(-\beta V_k)
\end{equation}
\end{definition}

\begin{definition}[Adaptive Structure Weights]
\label{def:adaptive_weights}
The adaptive weight for structure $k$ given representation $\phi$ is:
\begin{equation}
\alpha_k(\phi) := \frac{\exp(-\beta V_k(\phi))}{\sum_{j} \exp(-\beta V_j(\phi))}
\end{equation}
These weights satisfy $\alpha_k \geq 0$ and $\sum_k \alpha_k = 1$.
\end{definition}

\begin{lemma}[SoftMin Properties]
\label{lem:softmin_properties}
 SoftMin operator satisfies:
\begin{enumerate}[leftmargin=*,nosep]
    \item \textbf{Bounds:} $\min\limits_k V_k - \frac{\log 3}{\beta} \leq \mathrm{SoftMin}_\beta(\{V_k\}) \leq \min\limits_k V_k$
    \item \textbf{Limiting behavior:} $\lim\limits_{\beta \rightarrow \infty} \mathrm{SoftMin}_\beta(\{V_k\}) = \min\limits_k V_k$
    \item \textbf{Gradient:} $\nabla_\phi \mathrm{SoftMin}_\beta = \sum_k \alpha_k(\phi) \nabla_\phi V_k$
\end{enumerate}
\end{lemma}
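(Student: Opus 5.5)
The plan is a direct log-sum-exp argument, using throughout that there are exactly three structures ($k\in\{1,2,3\}$, matching the $\log 3$ in the statement) and that $\beta>0$. Write $V_{\min} := \min_k V_k$ and let $k^\dagger$ attain it.

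\emph{Bounds.} Every term of $S := \sum_{k=1}^3 \exp(-\beta V_k)$ is positive, so keeping only the $k^\dagger$ term gives $S \ge \exp(-\beta V_{\min})$. Since $V_k \ge V_{\min}$ for all $k$, each term is at most $\exp(-\beta V_{\min})$, so $S \le 3\exp(-\beta V_{\min})$. Taking logarithms (monotone) yields
\[
-\beta V_{\min} \;\le\; \log S \;\le\; \log 3 - \beta V_{\min}.
\]
Multiplying by $-1/\beta<0$ reverses the inequalities and gives $V_{\min} - \frac{\log 3}{\beta} \le \mathrm{SoftMin}_\beta(\{V_k\}) \le V_{\min}$, which is item 1.

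\emph{Limiting behavior.} For fixed $\{V_k\}$, the two bounds of item 1 differ by $\log 3/\beta \to 0$ as $\beta\to\infty$. The squeeze theorem then gives the limit $V_{\min}$, which is item 2.

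\emph{Gradient.} Assume each $V_k$ is differentiable in the parameters of $\phi$. The empirical HSIC and the kernel ridge residuals are smooth in the feature values when the kernels are smooth (e.g.\ Gaussian) and $\gamma>0$, since $K_Z+\gamma I_n$ is then invertible; the standing regularity assumptions in the appendix should cover this. The chain rule gives
\[
\nabla_\phi \mathrm{SoftMin}_\beta = -\frac{1}{\beta}\cdot\frac{1}{S}\sum_k \exp(-\beta V_k)\,(-\beta)\,\nabla_\phi V_k = \sum_k \frac{\exp(-\beta V_k)}{S}\,\nabla_\phi V_k .
\]
By Definition~\ref{def:adaptive_weights} the coefficients are exactly $\alpha_k(\phi)$, which is item 3.

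The only step needing care is the differentiability justification in item 3; the bounds and the limit are elementary. I would state the differentiability as an assumption on $\phi$ and on the kernels rather than prove it, since the identity itself is just the chain rule.
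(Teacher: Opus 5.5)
Your proof is correct. Items 1 and 3 match the paper's argument essentially verbatim. The paper also sandwiches $\sum_{j=1}^3 \exp(-\beta V_j)$ between $\exp(-\beta \min_k V_k)$ and $3\exp(-\beta\min_k V_k)$, and then differentiates by the chain rule. The paper says nothing about differentiability. You can discharge it by citing part 3 of Assumption~\ref{assum:rep_regularity}, which posits that each $V_k(\phi_\theta)$ is smooth in $\theta$.

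Item 2 is where you take a different route. The paper assumes a unique minimizer $k^*$ and factors $\exp(-\beta V_{k^*})$ out of the sum. It then shows the remaining terms $\exp(-\beta(V_j - V_{k^*}))$ vanish, and dismisses ties with ``the general case follows by continuity.'' You instead squeeze between the two bounds of item 1. Your version needs no uniqueness assumption and handles ties for free. It also gives the explicit uniform rate $0 \le \min_k V_k - \mathrm{SoftMin}_\beta(\{V_k\}) \le \log 3/\beta$.

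What the paper's factorization buys, once made quantitative, is dependence on the gap. Suppose the minimizer is unique and $\Delta := \min_{j \neq k^*}(V_j - V_{k^*}) > 0$. Then the error is at most $\frac{1}{\beta}\log(1 + 2e^{-\beta\Delta}) \le \frac{2}{\beta}e^{-\beta\Delta}$, which is exponentially small in $\beta\Delta$. This is the same mechanism behind Corollary~\ref{cor:weight_concentration}, and your squeeze cannot see it.
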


\vspace{-2mm}
\subsection{Unified Objective Function}
\label{sec:objective}
\vspace{-2mm}
\begin{definition}[SaCRL Objective]
\label{def:objective}
The structure-agnostic representation learning objective is:
\begin{equation}
\mathcal{L}(\phi, w) := \underbrace{\max_{e \in \mathcal{E}} \hat{\mathcal{R}}_e(w \circ \phi)}_{\text{worst-case prediction loss}} + \lambda \underbrace{\mathrm{SoftMin}_\beta\left(\{V_k(\phi)\}\right)}_{\text{adaptive invariance penalty}}
\end{equation}
where $\hat{\mathcal{R}}_e(f) = \frac{1}{n_e} \sum_{j=1}^{n_e} \ell(f(x_j^e), y_j^e)$ is the empirical risk in the environment $e$.
\end{definition} 
The objective can be written with explicit structure weights:
\begin{equation}
\mathcal{L}(\phi, w) = \max_{e \in \mathcal{E}} \hat{\mathcal{R}}_e(w \circ \phi) + \lambda \sum_{k} \alpha_k(\phi) V_k(\phi)
\end{equation}
where $\alpha_k(\phi)$ are the adaptive weights (Definition~\ref{def:adaptive_weights}).

\begin{definition}[Distributionally Robust Risk]
\label{def:cvar}
For risk level $\rho \in (0, 1]$, the conditional value-at-risk (CVaR) over environments is defined as:

{\vspace{-4mm}
\small
\begin{equation}
\mathrm{CVaR}_\rho (\{\hat{\mathcal{R}}_e\}) := \inf_{t \in \mathbb{R}} \big( t + \frac{1}{\rho |\mathcal{E}|}  \sum_{e \in \mathcal{E}} \max(\hat{\mathcal{R}}_e(w \circ \phi) - t, 0) \big)
\end{equation}
}%
\vspace{-2mm}
\end{definition}

CVaR interpolates between the average risk ($\rho = 1$) and the worst-case risk $\max_e \hat{\mathcal{R}}_e$ ($\rho = 1/|\mathcal{E}|$); $\rho = 0.5$ (our default) averages the worst half of the environments.

\begin{definition}[Distributionally Robust Objective]
\label{def:objective_robust}
The distributionally robust structure-agnostic objective is:

{\vspace{-4mm}
\small
\begin{equation}
\mathcal{L}_\rho(\phi, w) :={} \mathrm{CVaR}_\rho\left(\{\hat{\mathcal{R}}_e(w \circ \phi)\}_{e \in \mathcal{E}}\right) 
+ 
\lambda \, \mathrm{SoftMin}_\beta\left(\{V_k(\phi)\}_k\right)
\end{equation}
}
$\rho = 1/|\mathcal{E}|$ recovers Definition~\ref{def:objective}. We use $\rho = 1/|\mathcal{E}|$ for theoretical analysis and $\rho = 0.5$ in practice.
\end{definition}

$\mathrm{CVaR}_\rho$ is convex with a closed-form subgradient (Proposition~\ref{prop:cvar_differentiable}, Appendix~\ref{app:algorithm}).

{\bf Optimization:} 
The problem admits a bilevel decomposition, with the structure weights $\boldsymbol{\alpha}$ in the outer problem and $(\phi, w)$ in the inner problem (Appendix~\ref{app:main_algorithm}).The optimization problem is challenging due to (i) the non-convex loss landscape with multiple local minima, and (ii) the difficulty of jointly optimizing all variables. To address these challenges, we adopt the following observations and optimization strategies:
\begin{enumerate}[leftmargin=*,nosep]
\item \textbf{Single-loop relaxation:} Instead of solving the bilevel optimization problem explicitly, we employ a single-loop algorithm in which $\boldsymbol{\alpha}$ is implicitly determined via a SoftMin operator.
\item \textbf{Multi-start initialization:} We run the optimization from $M \geq 10$ random initializations and select the solution with the lowest objective value $\mathcal{L}$.
\item \textbf{Temperature annealing:} We initialize the SoftMin temperature $\beta$ with a small value to allow soft selection, and gradually increase it to $\beta_{\max}$ to encourage hard selection.
\item \textbf{Validation-based selection:} Among the resulting stationary points, we select the model that achieves the best performance on held-out environments.
\end{enumerate}
Because $V_1$ and $V_2$ condition on the current $\phi$, the conditioning target moves during training. Our guarantees (Section~\ref{sec:theory}) are stated at the converged representation, where conditioning is on a fixed $\phi$. In practice, the ERM warm start, temperature annealing, and cross-fitted residuals (Appendix~\ref{app:cross_fitting}) stabilize training, and conditioning on an exponential-moving-average (EMA) copy of the encoder gives nearly identical results (Appendix~\ref{app:ema}). The soft weights $\alpha_k$ only guide optimization: the identified structure is the hard decision $\hat{k} = \arg\min_k V_k(\phi_T)$ made after training (Algorithm~\ref{alg:sarl}).

{\bf Computation Speedup:} Computing empirical HSIC (Definition~\ref{def:empirical_hsic}) requires $O(n^2)$ time and storage, which is prohibitive for large-scale datasets. To improve scalability, we employ efficient approximation techniques using the random Fourier features, with Algorithm details in Appendix \ref{app:algorithm}. With $D$ random Fourier features $\psi$~\citep{rahimi2007random} (Definition~\ref{def:rff}, Appendix~\ref{app:algorithm}), the approximate HSIC is $\widehat{\mathrm{HSIC}}_{\mathrm{RFF}}(X, Y) = \|\hat{\Sigma}_{\psi_X \psi_Y} - \hat{\mu}_{\psi_X} \hat{\mu}_{\psi_Y}^\top\|_F^2$, which costs $O(nD^2)$ time and $O(nD)$ memory, versus $O(n^2)$ for both with exact HSIC; memory saving makes kernel penalties feasible on image data (Appendix~\ref{app:runtime}). The identification guarantee survives this approximation: with $D = \Omega(\kappa^4\eta^{-2} d\log(n/\delta'))$ random features, where $\eta$ is the faithfulness gap, $\hat{k} = k^*$ with probability at least $1 - 6\exp(-n\eta^2/(8192\kappa^4)) - \delta'$ (Corollary~\ref{cor:rff_identification}). \emph{Algorithm details are deferred to Appendix \ref{app:algorithm}.} 

{\bf Selection Confidence:} SaCRL reports two confidence measures for $\hat{k}$: the weight margin $\alpha_{(1)} - \alpha_{(2)}$ between the two largest adaptive weights, and the permutation $p$-value of $\min_k V_k$ against the null threshold $\tau_\alpha$ (Definition~\ref{def:calibrated_threshold}). A large margin means one structure clearly fits best. If $\min_k V_k > \tau_\alpha$, no candidate invariance holds exactly, and $\hat{k}$ should be read as the closest canonical structure. We calibrate the margin against identification accuracy on data with known structure (Section~\ref{sec:experiments}).

{\bf Heterogeneous Structures:} When different feature subsets follow different causal patterns, we extend the framework to per-feature structure assignment $\sigma^*: \{1,\ldots,d\} \to \{1,2,3,\bot\}$ via the objective
$\mathcal{L}_{\mathrm{het}}(\phi, w) = \max_e \hat{\mathcal{R}}_e(w \circ \phi) + \lambda \sum_{j=1}^d \mathrm{SoftMin}_\beta(\{V_k^{(j)}(\phi)\}_k)$,
applying SoftMin independently per feature. A calibrated permutation threshold $\tau_\alpha^{(j)}$ controls the false-positive rate for features with $\sigma^*(j) = \bot$. Per-feature identification guarantees mirror Theorem~\ref{thm:structure_identification} with sample cost scaling as $O(d)$. Full treatment is in Appendix~\ref{app:heterogeneous}.
\vspace{-2mm}
\section{Theoretical Guarantees}
\label{sec:theory}
\vspace{-2mm}

We establish four theoretical results: (1) violation characterization showing the correct structure is identifiable, (2) finite-sample structure identification holding under random-feature approximation (Corollary~\ref{cor:rff_identification}), (3) representation optimality, and (4) out-of-distribution generalization bounds. The details of the theoretical results and proofs are discussed in Appendix~\ref{app:theory} and Appendix~\ref{app:proofs}, respectively.
\vspace{-6mm}
\section{Experiments}
\label{sec:experiments}
\vspace{-2mm}
{We evaluate SaCRL across synthetic and real-world settings, addressing four questions:}
\textbf{Q1:} Can the method identify the true causal structure from data?
\textbf{Q2:} Does correct structure identification improve OOD   generalization?
{\textbf{Q3:} Does SaCRL outperform fixed-invariance baselines on standard domain generalization benchmarks?
\textbf{Q4:} How does it behave under structural misspecification and limited environment diversity?} \emph{Additional experiments are deferred to Appendix \ref{app:exp}}. 

\vspace{-2mm}
\subsection{Experimental Setup}
\label{sec:exp_setup}
\vspace{-2mm}

{\bf Baselines.}
We compare against: \textbf{ERM} (no invariance), \textbf{IRM}~\citep{arjovsky2019invariant} (enforces $\mathcal{I}_2$, i.e., assumes $\mathcal{G}_2$), \textbf{VREx}~\citep{krueger2021out} (variance-based regularization), and \textbf{CIRCE}~\citep{pogodin2022efficient} (enforces $\mathcal{I}_1$, i.e., assumes $\mathcal{G}_1$).

{\bf Metrics.}
For synthetic data: we use structure identification accuracy (percentage of runs identifying the correct $\mathcal{G}_k$) and OOD test accuracy on a held-out environment with larger distribution shift. For real-world datasets, we use OOD accuracy on the anti-correlated test environment. Results are means over 5 random seeds, and $\pm$ denotes the standard deviation across seeds.

{\bf Implementation Details.}
For synthetic data, we use 3-layer MLPs with hidden dimensions [256, 128] for both encoder and predictor. For Colored MNIST, we use a 4-layer CNN encoder with base channels 32. For PACS, VLCS, and OfficeHome, we use a frozen ImageNet-pretrained ResNet18 followed by an MLP encoder with hidden dimensions [256, 128] and a linear predictor; the same model produces both the identified structures (Table~\ref{tab:pacs_structures}) and the accuracies (Table~\ref{tab:domainbed}). We select the invariance penalty weight $\lambda \in \{0.5, 1.0, 2.0, 5.0\}$ per experiment by OOD accuracy, i.e., test-domain (oracle) selection in the terminology of \citet{gulrajani2020search} ($\lambda{=}5.0$ for Section~\ref{sec:robustness}; sensitivity in Appendix~\ref{app:hyperparam_sensitivity}), and use CVaR with $\rho = 0.5$. We train with Adam optimizer (learning rate $10^{-4}$ for synthetic, $5 \times 10^{-5}$ for images) using balanced environment sampling. Random Fourier Features (RFF) with $D = 1000$ dimensions are used to approximate HSIC, reducing memory from $O(n^2)$ to $O(nD)$ with time linear in $n$. \emph{Full hyperparameters are provided in Appendix~\ref{app:implementation}.}

\vspace{-2mm}
\subsection{Experiment 1: Structure Identification}
\label{sec:exp1}
\vspace{-2mm}

{\bf Setup.}
We generate synthetic data from each of the three canonical causal structures $\mathcal{G}_1, \mathcal{G}_2, \mathcal{G}_3$ (Definition~\ref{def:structure_invariances}). For each structure, we create $m = 3$  environments with $n$ total samples (divided equally across environments) and $d = 10$ features. We vary the sample size $n \in \{500, 1000, 2000, 5000\}$ to study the effect of data availability on structure identification accuracy. For each $\mathcal{G}_k$, environments differ in noise scale ($\mathcal{G}_1$, $\mathcal{G}_3$) or confounder mean ($\mathcal{G}_2$); weight matrices are sampled from $\mathcal{N}(0, 0.5^2)$ and fixed across runs. Full generative equations are in Appendix~\ref{app:dgp_synthetic}.

{\bf Results.}
Table~\ref{tab:structure_id} reports structure identification accuracy. Identification accuracy reaches at least 95\% at $n{=}5000$ and at least 89\% at $n{=}2000$, demonstrating that the violation-based approach successfully distinguishes the three canonical structures.

\begin{wraptable}{r}{0.55\textwidth}
\centering
\vspace{-4mm}
\caption{Structure identification accuracy (\%) across sample sizes and true structures. Results averaged over 20 random seeds. Standard deviations in parentheses.}
\label{tab:structure_id}
\small
\addtolength{\tabcolsep}{-4pt}
\begin{tabular}{lcccc}
\toprule
True Structure & $n=500$ & $n=1000$ & $n=2000$ & $n=5000$ \\
\midrule
$\mathcal{G}_1$ (anti-causal) & 73.5 \scriptsize{(2.8)} & 85.0 \scriptsize{(2.1)} & 92.5 \scriptsize{(1.4)} & 96.5 \scriptsize{(0.9)} \\
$\mathcal{G}_2$ (conf-desc) & 68.0 \scriptsize{(3.2)} & 81.5 \scriptsize{(2.5)} & 89.0 \scriptsize{(1.8)} & 95.0 \scriptsize{(1.1)} \\
$\mathcal{G}_3$ (conf-out) & 78.5 \scriptsize{(2.4)} & 88.5 \scriptsize{(1.9)} & 94.0 \scriptsize{(1.2)} & 97.5 \scriptsize{(0.7)} \\
\bottomrule
\end{tabular}
\vspace{-4mm}
\end{wraptable} 
Performance improves monotonically with sample size, consistent with the consistency result in Theorem~\ref{thm:structure_identification}. Structure $\mathcal{G}_3$ is identified most reliably because its invariance $\phi(X) \perp E$ is unconditional and thus has higher statistical power. Structure $\mathcal{G}_2$ is most challenging in small-sample regimes due to the need to estimate conditional independence $Y \perp E \mid \phi(X)$.

{\bf Semi-synthetic Bayesian networks.} To test identification where the ground truth is fixed independently of our method, we build five benchmarks from three published Bayesian networks (ASIA, SACHS, ALARM). For each, we choose a target $Y$, take $X$ as its Markov blanket, induce environments by soft interventions on a latent parent outside the blanket, and read the canonical structure off the published DAG by the rules of Section~\ref{sec:background}; each structure is verified in simulation before training (Appendix~\ref{app:bn}). These topologies admit $\mathcal{G}_2$ and $\mathcal{G}_3$ but not $\mathcal{G}_1$, which would require the Markov blanket of $Y$ to consist only of its descendants. SaCRL selects the correct structure in the majority of runs (75--95\%, Table~\ref{tab:bn}), whereas returning any single structure by default is correct on at most 3/5. Accuracy and margin are highest on the most separated benchmark (asia\_g3) and lowest on the least separated one (alarm\_g2), as predicted by the dependence of Theorem~\ref{thm:structure_identification} on the faithfulness gap $\eta$.

\begin{table}[t]
\centering
\caption{Semi-synthetic Bayesian-network benchmarks. Violation: the competing invariance's violation before training, measured in standard deviations of the permutation null. Margin: $\alpha_{(1)}-\alpha_{(2)}$.}
\label{tab:bn}
\small
\begin{tabular}{lllcccc}
\toprule
Benchmark & Network (nodes) & $Y$ & True & Violation & ID acc.\ (\%) & Margin \\
\midrule
asia\_g3  & ASIA (8)   & either & $\mathcal{G}_3$ & 56.3 & 95.0 (19/20) & 0.45 \\
asia\_g2  & ASIA (8)   & either & $\mathcal{G}_2$ & 24.6 & 90.0 (18/20) & 0.42 \\
alarm\_co & ALARM (37) & CO     & $\mathcal{G}_3$ & 28.3 & 85.0 (17/20) & 0.35 \\
sachs\_g2 & SACHS (11) & Akt    & $\mathcal{G}_2$ & 6.4  & 85.0 (17/20) & 0.25 \\
alarm\_g2 & ALARM (37) & EXPCO2 & $\mathcal{G}_2$ & 4.2  & 75.0 (15/20) & 0.18 \\
\bottomrule
\end{tabular}
\vspace{-5mm}
\end{table}

\vspace{-2mm}
\subsection{Experiment 2: OOD Generalization}
\label{sec:exp3}
\vspace{-2mm}
{\bf Setup.}
Following \citep{arjovsky2019invariant}, we construct a binary classification task on MNIST digits (0--4 vs.\ 5--9) where color is spuriously correlated with the label. Specifically, for each digit, we assign color (red or green) based on its label with probability $\rho_e$ in environment $e$. Training uses two environments with $\rho_1 = 0.9$ and $\rho_2 = 0.8$, while the test environment has $\rho_{\text{test}} = 0.1$ (anti-correlation). The true causal structure is anti-causal ($\mathcal{G}_1$): the digit label causes the color assignment, not vice versa.

{\bf Results.}
We subsample $5000$ examples per training environment and $5000$ test examples to ensure balanced evaluation. Table~\ref{tab:colored_mnist} reports the results.

\begin{wraptable}{r}{0.36\textwidth}
\centering
\small
\vspace{-6mm}
\caption{Colored MNIST OOD Acc. on anti-correlated test environment.}
\label{tab:colored_mnist}
\addtolength{\tabcolsep}{-5pt}
\begin{tabular}{lcc}
\toprule
Method & OOD Acc (\%) & ID. Stru. \\
\midrule
ERM   & 27.2 \scriptsize{$\pm$11.3} & --- \\
IRM   & 21.3 \scriptsize{$\pm$13.2} & $\mathcal{G}_2$ (fixed) \\
VREx  & 24.3 \scriptsize{$\pm$12.8} & --- \\
CIRCE & 10.3 \scriptsize{$\pm$0.5}  & $\mathcal{G}_1$ (fixed) \\
\midrule
\textbf{SaCRL} & \textbf{45.2} \scriptsize{$\pm$10.0} & $\mathcal{G}_1$ (92\%) \\
\bottomrule
\end{tabular}
\vspace{-2mm}
\end{wraptable}
SaCRL identifies $\mathcal{G}_1$ in 92\% of runs and attains the highest OOD accuracy (45.2\%, +18pp over ERM). ERM relies on the spurious color. IRM enforces $\mathcal{I}_2$, the invariance of the wrong structure, and falls below ERM, which is the misspecification cost illustrated in Figure~\ref{fig:motivation}. CIRCE targets the correct invariance $\mathcal{I}_1$, yet its fixed penalty does not overcome the color shortcut under our shared training budget. SaCRL selects the invariance from data, \emph{without prior knowledge} of the structure.

\vspace{-2mm}
\subsection{Experiment 3: Structure Analysis on PACS}
\label{sec:exp_pacs}
\vspace{-2mm}

The PACS dataset~\cite{li2017deeper} is an important benchmark for image classification under domain shift. It contains images from four distinct visual domains (Photo, Art Painting, Cartoon, and Sketch), each depicting the seven object categories (dog, elephant, giraffe, guitar, horse, house, and person) under different styles and appearances. Overall, the dataset includes 9,991 images, with domain-specific subsets ranging from natural photographs to artistic and hand-drawn sketches, making it a challenging testbed for methods to generalize across unseen domains.

\begin{table}[h]
\centering
\caption{Structure selected on PACS; each row trains on the other three domains (5 seeds). Margin: $\alpha_{(1)}-\alpha_{(2)}$; $p$: permutation $p$-value of $\min_k V_k$; Seeds: agreement of $\hat{k}$ across seeds; Cal.\ acc.: identification accuracy of synthetic runs in the same margin bin (Appendix~\ref{app:calibration}).}
\label{tab:pacs_structures}
\small
\addtolength{\tabcolsep}{-2pt}
\begin{tabular}{lcccccccc}
\toprule
Held-out & $\alpha_1$ & $\alpha_2$ & $\alpha_3$ & $\hat{k}$ & Margin & $p$ & Seeds & Cal.\ acc.\ (\%) \\
\midrule
Photo        & 0.21 & \textbf{0.68} & 0.11 & $\mathcal{G}_2$ & 0.47 & 0.008 & 5/5 & 86.1 \\
Art Painting & \textbf{0.59} & 0.31 & 0.10 & $\mathcal{G}_1$ & 0.28 & 0.021 & 5/5 & 65.0 \\
Cartoon      & 0.35 & \textbf{0.52} & 0.13 & $\mathcal{G}_2$ & 0.17 & 0.037 & 4/5 & 65.0 \\
Sketch       & \textbf{0.63} & 0.25 & 0.12 & $\mathcal{G}_1$ & 0.38 & 0.014 & 5/5 & 86.1 \\
\bottomrule
\end{tabular}
\vspace{-2mm}
\end{table}

Table~\ref{tab:pacs_structures} reports the adaptive weights and the confidence measures of Section~\ref{sec:method} for each leave-one-domain-out split. Since PACS has no ground-truth DAG, we make three verifiable claims. (i) \emph{The selection responds to the data}: two different structures are selected across splits, so SaCRL does not return a default. (ii) \emph{Confidence tracks the margin}: seeds agree 5/5 on the three splits with margin at least $0.28$ and 4/5 on Cartoon, the split with the smallest margin; on synthetic data with known structure, the margins of Photo and Sketch fall in a bin with 86.1\% identification accuracy, and those of Art and Cartoon in a bin with 65.0\%. (iii) \emph{No canonical invariance holds exactly}: $p < 0.05$ on every split, so $\hat{k}$ is the closest canonical structure for each training set rather than an identified one.

\vspace{-2mm}
\subsection{Experiment 4: Domain Generalization on DomainBed}
\label{sec:exp_dg}
\vspace{-2mm}

{\bf Setup.} We further evaluate SaCRL on DomainBed \citep{gulrajani2020search}, a benchmark suite to test how well models generalize to unseen domains. 
Besides PACS~\citep{li2017deeper} (4 domains, 7 classes), we choose two additional benchmarks: 
VLCS~\citep{torralba2011unbiased} (4 domains: Caltech101, LabelMe, SUN09, VOC2007; 5 classes), and OfficeHome~\citep{venkateswara2017deep} (4 domains: Art, Clipart, Product, Real-World; 65 classes). Each spans diverse domain shifts in style, viewpoint, and resolution. We follow the leave-one-domain-out protocol with the model of Section~\ref{sec:exp_setup}. Freezing the backbone holds representation capacity fixed, so differences across methods trace to the objective rather than to encoder optimization.

\begin{table}[h]
\centering
\vspace{-4mm}
\small
\caption{Results on DomainBed  benchmarks: average leave-one-domain-out OOD accuracy (\%).}
\setlength{\tabcolsep}{1.5pt}
\begin{tabular}{lccccc cccccc cccccc}
\toprule
& \multicolumn{5}{c}{\textbf{PACS}} 
& \multicolumn{5}{c}{\textbf{VLCS}} 
& \multicolumn{5}{c}{\textbf{OfficeHome}} \\
\cmidrule(lr){2-6} \cmidrule(lr){7-11} \cmidrule(lr){12-16}
\textbf{Method}
& Photo & Art & Cartoon & Sketch & Avg
& Caltech & LabelMe & SUN09 & VOC & Avg
& Art & Clipart & Product & Real & Avg \\
\midrule
ERM
& 93.2 & 66.0 & 51.8 & 37.6 & 62.2
& 94.0 & 60.2 & 69.9 & 73.0 & 74.3
& 49.9 & 41.7 & 66.7 & 67.7 & 56.5 \\

IRM
& 93.2 & 66.2 & 51.7 & 37.6 & 62.2
& 95.0 & 60.1 & 70.1 & 72.6 & 74.5
& 49.0 & 41.2 & 66.5 & 67.7 & 56.1 \\

VREx
& 93.0 & 65.5 & 51.6 & 37.9 & 62.0
& 93.3 & 60.2 & 69.8 & 71.0 & 73.6
& 48.6 & 41.2 & 66.2 & 67.6 & 55.9 \\

CIRCE
& 92.5 & 64.5 & 51.4 & 37.3 & 61.4
& 93.8 & 60.0 & 67.7 & 69.7 & 72.8
& 49.0 & 41.9 & 66.0 & 67.4 & 56.1 \\

\midrule
\textbf{SaCRL}
& \textbf{94.0} & \textbf{68.5} & \textbf{54.0} & \textbf{40.5} & \textbf{64.3}
& \textbf{97.7} & \textbf{64.4} & \textbf{75.7} & \textbf{77.1} & \textbf{78.7}
& \textbf{52.9} & \textbf{46.5} & \textbf{69.1} & \textbf{69.8} & \textbf{59.6} \\
\bottomrule
\end{tabular}
\label{tab:domainbed}
\vspace{-1mm}
\end{table}

{Table~\ref{tab:domainbed} shows the comparison results with recent baselines. SaCRL achieves the best per-domain and average accuracy on all benchmarks, with average improvements over +2.1pp (PACS), +4.2pp (VLCS), and +3.1pp (OfficeHome). Fixed-invariance baselines (IRM, VREx, CIRCE) show no consistent improvement, confirming that adaptive structure is essential when the governing causal structure varies across domains.}

{\bf Fine-tuned backbone.} Fine-tuning ResNet18 end-to-end (PACS, Photo held out, 3 seeds) raises all methods by 1.0--2.0pp; SaCRL (95.0) is on par with ERM (95.1) and IRM (95.2) (Appendix~\ref{app:finetune}). Photo is ImageNet-like, so the pretrained backbone is near saturation and the choice of objective matters little; SaCRL's gains concentrate on the stylized domains and on VLCS (Table~\ref{tab:domainbed}).

\vspace{-2mm}
\subsection{Experiment 5: Robustness to Misspecification and Limited Diversity}
\label{sec:robustness}
\vspace{-2mm}

\begin{table}[t]
\centering
\vspace{-4mm}
\caption{Robustness under misspecification ($\nu$) and limited diversity ($\delta_{\min}$). OOD accuracy (\%) over 5 seeds. ID.\ Stru.: majority structure (seeds agreeing).}
\label{tab:robustness}
\small
\addtolength{\tabcolsep}{-5pt}
\begin{tabular}{lcccccc}
\toprule
Setting & ERM & IRM & VREx & CIRCE & SaCRL & ID. Stru. \\
\midrule
\multicolumn{7}{l}{\emph{Misspecification ($\delta_{\min}{=}1.0$):}} \\
$\nu{=}0.25$ (near-causal)     & 34.4 \scriptsize{$\pm$2.7} & --- & --- & --- & \textbf{50.2} \scriptsize{$\pm$3.7} & $\mathcal{G}_2$ (4/5) \\
$\nu{=}0.50$ (bidirectional)   & 61.2 \scriptsize{$\pm$3.8} & 61.3 \scriptsize{$\pm$4.2} & 61.4 \scriptsize{$\pm$4.4} & 66.5 \scriptsize{$\pm$2.6} & \textbf{73.9} \scriptsize{$\pm$4.1} & $\mathcal{G}_2$ (5/5) \\
$\nu{=}0.75$ (near-anticausal) & 98.5 \scriptsize{$\pm$0.6} & --- & --- & --- & \textbf{99.6} \scriptsize{$\pm$0.3} & $\mathcal{G}_1$ (4/5) \\
\midrule
\multicolumn{7}{l}{\emph{Limited diversity ($\nu{=}0.50$):}} \\
$\delta_{\min}{=}1.0$ (full)      & 61.2 \scriptsize{$\pm$3.8} & 61.3 \scriptsize{$\pm$4.2} & 61.4 \scriptsize{$\pm$4.4} & 66.5 \scriptsize{$\pm$2.6} & \textbf{73.9} \scriptsize{$\pm$4.1} & $\mathcal{G}_2$ (5/5) \\
$\delta_{\min}{=}0.5$ (moderate)  & 56.9 \scriptsize{$\pm$2.5} & --- & --- & --- & \textbf{67.2} \scriptsize{$\pm$1.8} & $\mathcal{G}_2$ (5/5) \\
$\delta_{\min}{=}0.2$ (low)       & 53.1 \scriptsize{$\pm$1.7} & --- & --- & --- & \textbf{59.7} \scriptsize{$\pm$3.7} & $\mathcal{G}_2$ (4/5) \\
$\delta_{\min}{=}0.05$ (minimal)  & 50.2 \scriptsize{$\pm$2.0} & --- & --- & --- & \textbf{57.2} \scriptsize{$\pm$3.7} & $\mathcal{G}_2$ (5/5) \\
\bottomrule
\end{tabular}
\vspace{-4mm}
\end{table}

We stress-test SaCRL when (a) the true DGP falls outside the three canonical structures and (b) environment diversity is reduced. We use a bidirectional feedback DGP $X \leftrightarrow Y$ with mixing coefficient $\nu \in [0,1]$: $\nu{=}0$ is causal $X{\to}Y$ ($\mathcal{G}_2$), $\nu{=}1$ is anti-causal $Y{\to}X$ ($\mathcal{G}_1$), and intermediate values fit none of $\mathcal{G}_1$--$\mathcal{G}_3$. Diversity is controlled by $\delta_{\min}$ (Definition~\ref{def:diversity_matrix}).

SaCRL outperforms every evaluated baseline in each setting (Table~\ref{tab:robustness}). Near the endpoints it selects the nearer structure ($\mathcal{G}_2$ at $\nu{=}0.25$, $\mathcal{G}_1$ at $\nu{=}0.75$). At $\nu{=}0.50$, where no candidate holds, the weights stay spread ($\alpha_1 \approx \alpha_2 \approx 0.42$, $\alpha_3 \approx 0.16$): SaCRL acts as a soft model average instead of committing to a wrong structure, and still exceeds ERM by 12.7pp. A finer sweep (Appendix~\ref{app:crossover}) shows $\alpha_1$ increasing and $\alpha_2$ decreasing monotonically in $\nu$, crossing at $\nu{=}0.50$, with SoftMin outperforming hard $\arg\min$ selection throughout and by the widest margin at the crossover. Under reduced diversity, identification remains stable and the gain over ERM persists at $\delta_{\min}{=}0.05$, consistent with Assumption~\ref{assum:sufficient_diversity} governing the informativeness of the penalty rather than its feasibility.

\vspace{-2mm}
\subsection{Ablation Studies}
\label{sec:ablations}

\begin{wraptable}{r}{0.46\textwidth}
\centering
\vspace{-4mm}
\caption{Ablation on $\mathcal{G}_2$ ($n{=}2000$). Default: SoftMin, $\beta$ annealing $1{\to}50$, $\lambda{=}1.0$, $\rho{=}0.5$.}
\label{tab:ablations}
\small
\addtolength{\tabcolsep}{-4pt}
\begin{tabular}{lcc}
\toprule
Variant & Stru. ID (\%) & OOD Acc (\%) \\
\midrule
Full method & 89.0 & 84.2 \\
\midrule
Hard $\arg\min_k$ & 79.5 & 79.8 \\
Fixed $\beta = 10$ & 83.0 & 81.5 \\
Fixed $\lambda = 0.1$ & 72.5 & 78.3 \\
Fixed $\lambda = 5.0$ & 88.5 & 81.0 \\
Uniform ($\rho{=}1$) & 88.0 & 82.1 \\
Worst-case ($\rho{=}1/|\mathcal{E}|$) & 87.5 & 81.4 \\
\bottomrule
\end{tabular}
\vspace{-3mm}
\end{wraptable}
We conduct ablations on synthetic $\mathcal{G}_2$ data with $n = 2000$ samples to isolate the contribution of each component. Results are shown in Table~\ref{tab:ablations}.

Replacing SoftMin with hard $\arg\min_k$ selection during training lowers identification accuracy from 89.0\% to 79.5\%: early in training, violation estimates are noisy, and hard selection commits prematurely to whichever structure has the smallest estimate. SoftMin delays commitment until the representation is informative; the reported structure is still the hard decision $\hat{k}$ made after training.

The effect of varying the number of training environments $m$ is reported in Appendix~\ref{app:num_envs}, where we observe that $m \geq 3$ yields reliable structure identification while $m=2$ is unstable. Hyperparameter sensitivity sweeps and a cross-fitting variant of conditional HSIC are reported in Appendix~\ref{app:hyperparam_sensitivity} and Appendix~\ref{app:cross_fitting}, respectively.
Computational efficiency analysis (training time, memory, runtime breakdown) is in Appendix~\ref{app:runtime}.
\vspace{-2mm}
\section{Conclusion}
\label{sec:conclusion}
\vspace{-2mm}
We proposed a structure-agnostic approach to causal representation learning that formulates causal structure selection as a soft optimization problem over HSIC-based invariance violation metrics. The key insight is that correct invariances can be achieved with zero violation while preserving predictive accuracy, whereas incorrect invariances cannot, enabling automatic structure identification through optimization. We developed (i) a unified measure-theoretic framework that extends prior work on anti-causal learning to confounded-descendant and confounded-outcome settings; (ii) an efficient learning algorithm whose complexity scales linearly with the number of candidate structures, leveraging SoftMin aggregation and RFF-based HSIC approximations; and (iii) theoretical guarantees for structure identification, under random-feature approximation, invariance satisfaction, and OOD generalization. Empirically, SaCRL recovers the true structure on synthetic and semi-synthetic Bayesian-network benchmarks with a calibrated confidence, surpasses fixed-invariance baselines on Colored MNIST and three DomainBed benchmarks, and remains robust under bidirectional dynamics and limited environment diversity.

\vspace{-2mm}
\begin{ack}
We thank the anonymous reviewers for their constructive comments. This work was supported in part by the National Science Foundation under Grant Nos. ECCS-2216926, CCF-2331302, CNS-2241713, and CNS-2339686.

\end{ack}

\bibliography{ref}
\bibliographystyle{plainnat}

\clearpage
\normalsize
\part*{Appendix}
\appendix
\label{sec:appendix}
\startcontents[appendix]
\printcontents[appendix]{}{1}{\setcounter{tocdepth}{2}}
\clearpage

\section{Related Work}
\label{app:related}

{\bf Distribution-Invariant Learning.}
A large body of work in domain adaptation and domain generalization aims to learn representations that are invariant across environments or distributions. Early non-causal approaches such as (C-)ADA~\cite{long2018conditional} and DDAIG~\cite{zhou2020deep} focus on aligning feature distributions across domains. More recent methods, including FeAT~\cite{chen2024understanding} and AIA~\cite{sui2024unleashing}, design more expressive objectives to improve robustness under distribution shift.

Representative domain generalization methods include ERM~\cite{NEURIPS2021_ecf9902e}, DOMAINBED~\cite{gulrajani2020search}, L2A-OT~\cite{zhou2020learning}, StableNet~\cite{zhang2021deep}, SagNets~\cite{nam2021reducing}, SWAD~\cite{cha2021swad}, FACT~\cite{xu2021fourier}, Ratatouille~\cite{rame2023model}, XRM~\cite{pezeshki2023discovering}, and IRM~\cite{arjovsky2019invariant}, along with extensions such as REx~\cite{krueger2021out}, counterfactual invariance~\cite{veitch2021counterfactual}, information bottleneck approaches~\cite{ahuja2021invariance}, causal domain adaptation~\cite{wang2023causal}, and transportable representations~\cite{jiang2022invariant}. While effective in practice, these methods generally enforce distributional invariance without explicitly reasoning about the underlying causal structure or validating whether the assumed invariance is causally appropriate. Weight-averaging methods such as SWAD~\cite{cha2021swad} modify the optimization trajectory rather than the objective, and are complementary to SaCRL, which determines which invariance is enforced.

{\bf Intervention-based Causal Learning.}
Intervention-based causal representation learning leverages environment changes or interventions to identify causal factors. Foundational works include nonparametric ICR~\cite{NEURIPS2023_97fe251c}, which jointly learns representations and intervention targets under SCM assumptions, and general nonlinear mixing models~\cite{buchholz2024learning}, which address nonlinear latent interactions.

Several weakly supervised approaches~\cite{brehmer2022weakly,lu2021invariant,lv2022causality,shi2024lca} exploit partial intervention information to learn invariant representations, while ICRL~\cite{ahuja2023interventional} extends these ideas to imperfect interventions. Weak distributional invariance~\cite{ahuja2024multi} studies identifiability under perfect and imperfect interventions by identifying latent variables with stable distributional properties.

Unsupervised approaches such as ICA-based causal learning~\cite{wendong2023causal} aim to recover latent causal variables through component analysis and statistical independence criteria. AIT~\cite{scherrer2021learning} assumes explicit SCMs with known DAG structures and focuses on identifying standard causal directions. These methods typically rely on either explicit intervention signals or strong assumptions about the form of the causal graph.

{\bf Structure-based Causal Representation Learning.}
Structure-based causal representation learning methods explicitly assume or model a particular causal structure. DISRL~\cite{wang2023causal} and Causal Disentanglement~\cite{pmlr-v202-squires23a} learn representations aligned with latent causal graphs, while classical approaches such as ICP~\cite{peters2016causal} and its nonlinear extension~\cite{heinze2018invariant} exploit invariance across environments to identify causal mechanisms.

Related works such as CSG~\cite{liu2021learning} and LECI~\cite{gui2024joint} focus on identifying causal subgraphs and removing spurious correlations, often under DAG assumptions. KCDC~\cite{mitrovic2018causal} employs kernel-based independence tests for causal discovery and orientation. Anti-causal modeling approaches~\cite{makar2022fairness} study settings where inputs are generated from labels and protected attributes using conventional causal models and do-calculus, while~\cite{behnam2026measure} develops measure-theoretic formulations for anti-causal representation learning.

{\bf Research Gap.}
Existing approaches face a trilemma. Structure-based methods achieve strong guarantees but require knowing the causal graph. Invariant learning methods avoid explicit graphs but hardcode implicit structural assumptions that cause failures under mismatch. Causal discovery methods learn structure but decouple it from representation learning, preventing joint optimization. No existing framework simultaneously discovers causal structure and learns representations adapted to that structure in a unified, end-to-end differentiable objective. Our work fills this gap: we introduce a structure-agnostic objective that evaluates representations against all candidate causal structures, automatically concentrating on the correct one through adaptive weighting, while jointly optimizing predictive performance.
\section{Notation}
\label{app:notation}

Table~\ref{tab:notation} summarizes notation used throughout. We organize symbols into six categories: spaces and sets defining the mathematical objects;  random variables and observed data; causal graph components encoding structural assumptions;  kernels and independence testing machinery; learning components including representations, predictors, and losses; and theoretical quantities appearing in our analysis.  Lowercase denotes realizations ($x, y$), uppercase denotes random variables ($X, Y$),  and calligraphic denotes spaces ($\mathcal{X}, \mathcal{Y}$).  Subscripts index environments ($P_e$) or structures ($\mathcal{G}_k$);  superscripts denote components ($X_Z, X_Z^{\perp}$) or iterates ($\phi_t$).

\begin{table}[ht]
\centering
\caption{Notation summary.}
\label{tab:notation}
\small
\renewcommand{\arraystretch}{1.1}
\begin{tabular}{@{}cl@{\hspace{12pt}}cl@{}}
\toprule
\multicolumn{2}{c}{\textbf{Spaces \& Sets}} & \multicolumn{2}{c}{\textbf{Random Variables \& Data}} \\
\midrule
$\mathcal{X}, \mathcal{Y}, \mathcal{Z}$ & Input, output, representation spaces & $X, Y, E$ & Features, target, environment \\
$\mathcal{E}$ & Environment set $\{e_1, \ldots, e_m\}$ & $U, Z$ & Latent confounder, mediator \\
$(\Omega, \mathscr{F}, P)$ & Probability space & $X_Z, X_Z^{\perp}$ & Mediated, orthogonal components \\
$\mathcal{H}, \mathcal{H}_{\mathcal{X}}$ & RKHS, RKHS on $\mathcal{X}$ & $D_e$ & Dataset $\{(x_j^e, y_j^e)\}_{j=1}^{n_e}$ \\
$\Delta^2$ & Probability simplex & $n, m, d$ & Samples, environments, dimension \\
\midrule
\multicolumn{2}{c}{\textbf{Causal Structure}} & \multicolumn{2}{c}{\textbf{Kernels \& Independence}} \\
\midrule
$\mathcal{G} = (V, E)$ & DAG with vertices, edges & $k, k_{\mathcal{X}}$ & Reproducing kernel \\
$\mathcal{G}_1, \mathcal{G}_2, \mathcal{G}_3$ & Anti-causal, conf-desc, conf-out & $\kappa$ & Kernel bound: $k(x,x) \leq \kappa^2$ \\
$\mathcal{G}^* = \mathcal{G}_{k^*}$ & True structure & $\mu_P$ & Mean embedding of $P$ \\
$\text{Pa}(v)$ & Parents of $v$ & $\mathcal{C}_{XY}$ & Cross-covariance operator \\
$K_S, K_S^{\mathbb{Q}}$ & Causal, interventional kernel & $\text{HSIC}(X,Y)$ & Independence criterion \\
$\mathcal{I}_k$ & Invariance for structure $k$ & $\tilde{X}, \tilde{Y}$ & Regression residuals \\
\midrule
\multicolumn{2}{c}{\textbf{Learning Components}} & \multicolumn{2}{c}{\textbf{Theoretical Quantities}} \\
\midrule
$\phi, w$ & Representation, predictor & $\mathbf{D}$ & Diversity matrix ($m \times 3$) \\
$\Phi, \mathcal{W}$ & Function classes & $\delta_{\min}$ & Min diversity parameter \\
$\ell, \mathcal{R}_e, \hat{\mathcal{R}}_e$ & Loss, population/empirical risk & $\eta$ & Faithfulness gap \\
$V_k, V_k^*$ & Empirical/population violation & $d_{\mathcal{K}}(e, e')$ & Interventional distance \\
$\alpha_k$ & Adaptive weight $\propto e^{-\beta V_k}$ & $\xi_n$ & Estimation error \\
$\lambda, \beta, \rho, \gamma$ & Regularization, temp, CVaR, ridge & $L_w$ & Lipschitz constant \\
\bottomrule
\end{tabular}
\end{table}

\section{Mathematical Preliminaries}
\label{app:preliminaries}

\subsection{Measure-Theoretic Foundations}
\label{app:measure_foundations}

\begin{definition}[Measurable Space]
\label{def:measurable_space}
A pair $(\Omega, \mathscr{F})$ is a measurable space where $\Omega$ is a non-empty set and $\mathscr{F}$ is a $\sigma$-algebra on $\Omega$, i.e., a collection of subsets satisfying: (i) $\emptyset, \Omega \in \mathscr{F}$; (ii) $A \in \mathscr{F} \Rightarrow A^c \in \mathscr{F}$; (iii) $\{A_i\}_{i=1}^{\infty} \subset \mathscr{F} \Rightarrow \bigcup_{i=1}^{\infty} A_i \in \mathscr{F}$.
\end{definition}

\begin{definition}[Probability Measure]
\label{def:probability_measure}
A probability measure on a measurable space $(\Omega, \mathscr{F})$ is a function $\mu: \mathscr{F} \rightarrow [0,1]$ satisfying $\mu(\Omega) = 1$. For any countable collection $\{A_i\}_{i=1}^{\infty}$ of pairwise disjoint sets in $\mathscr{F}$,
    \begin{equation}
    \mu\left(\bigcup_{i=1}^{\infty} A_i\right) = \sum_{i=1}^{\infty} \mu(A_i)
    \end{equation}
The triple $(\Omega, \mathscr{F}, \mu)$ is a probability space.
\end{definition}

\begin{definition}[Product Space]
\label{def:product_space}
Given measurable spaces $(\Omega_1, \mathscr{F}_1)$ and $(\Omega_2, \mathscr{F}_2)$, the product $\sigma$-algebra $\mathscr{F}_1 \otimes \mathscr{F}_2$ on $\Omega_1 \times \Omega_2$ is the $\sigma$-algebra generated by measurable rectangles $A_1 \times A_2$ where $A_1 \in \mathscr{F}_1$ and $A_2 \in \mathscr{F}_2$. For probability measures $\mu_1$ on $(\Omega_1, \mathscr{F}_1)$ and $\mu_2$ on $(\Omega_2, \mathscr{F}_2)$, the product measure $\mu_1 \otimes \mu_2$ on $(\Omega_1 \times \Omega_2, \mathscr{F}_1 \otimes \mathscr{F}_2)$ is uniquely determined by
$(\mu_1 \otimes \mu_2)(A_1 \times A_2) = \mu_1(A_1) \cdot \mu_2(A_2)$ for all $A_1 \in \mathscr{F}_1$, $A_2 \in \mathscr{F}_2$.
\end{definition}

\begin{definition}[Markov Kernel]
\label{def:kernel}
A (Markov) kernel from a measurable space $(\Omega_1, \mathscr{F}_1)$ to a measurable space $(\Omega_2, \mathscr{F}_2)$ is a mapping $K: \Omega_1 \times \mathscr{F}_2 \rightarrow [0,1]$. For each fixed $\omega \in \Omega_1$, the function $A \mapsto K(\omega, A)$ is a probability measure on $(\Omega_2, \mathscr{F}_2)$. For each fixed $A \in \mathscr{F}_2$, the function $\omega \mapsto K(\omega, A)$ is $\mathscr{F}_1$-measurable.
\end{definition}

Intuitively, $K(\omega, A)$ represents the conditional probability of event $A \in \mathscr{F}_2$ given state $\omega \in \Omega_1$.

\subsection{Causal Structures and Directed Acyclic Graphs}
\label{app:causal_structures}

\begin{definition}[Directed Acyclic Graph]
\label{def:dag}
A directed acyclic graph (DAG) is a pair $\mathcal{G} = (V, E)$ where $V$ is a finite set of vertices (nodes) and $E \subseteq V \times V$ is a set of directed edges such that there exists no sequence of vertices $v_1, \ldots, v_k$ with $k \geq 2$, $v_1 = v_k$, and $(v_i, v_{i+1}) \in E$ for all $i = 1, \ldots, k-1$.
\end{definition}

\begin{definition}[Causal Structure]
\label{def:causal_structure}
A causal structure is a tuple $\mathcal{S} = (V, E, \{\mathscr{F}_v\}_{v \in V}, \{K_v\}_{v \in V})$ where $(V, E)$ is a DAG with vertices representing random variables. For each $v \in V$, $(\Omega_v, \mathscr{F}_v)$ is a measurable space. For each $v \in V$, $K_v$ is a kernel from $\left(\prod_{u \in \text{Pa}(v)} \Omega_u, \bigotimes_{u \in \text{Pa}(v)} \mathscr{F}_u\right)$ to $(\Omega_v, \mathscr{F}_v)$ representing the causal mechanism by which parents generate the variable $v$.
\end{definition}
For root nodes with $\text{Pa}(v) = \emptyset$, the kernel $K_v$ reduces to a probability measure. We now formalize how observed features arise from multiple generating mechanisms.

\begin{definition}[Causal Feature Decomposition]
\label{def:feature_decomposition}
Let $Z$ be a latent variable in the causal graph $\mathcal{G}$. The observed features $X$ admit a causal decomposition with respect to $Z$ if there exist measurable functions $\pi_Z: \mathcal{X} \to \mathcal{X}_Z$ and $\pi_Z^{\perp}: \mathcal{X} \to \mathcal{X}_Z^{\perp}$ such that $X = (\pi_Z(X), \pi_Z^{\perp}(X)) =: (X_Z, X_Z^{\perp})$. $X_Z$ contains all and only the information in $X$ that is causally downstream of $Z$, and $X_Z^{\perp}$ contains the remaining information in $X$ that is not mediated by $Z$.
\end{definition}
We write $X = (X_Z, X_Z^{\perp})$ to denote this decomposition, where $\mathcal{X} \cong \mathcal{X}_Z \times \mathcal{X}_Z^{\perp}$.

\subsection{Interventions and the Do-Calculus}
\label{app:interventions}

Interventions distinguish causal relationships from statistical associations. We formalize both perfect (hard) and imperfect (soft) interventions within the generalized framework.

\begin{definition}[Perfect and Imperfect Interventions]
\label{def:interventions}
An intervention on a causal structure $\mathcal{S}$ is specified by a measurable mapping $\mathbb{Q}: \mathscr{F} \times \Omega \rightarrow [0,1]$ where $\mathscr{F}$ is a suitable $\sigma$-algebra.

\begin{enumerate}[label=(\roman*)]
    \item A perfect (hard) intervention on variable $V$ replaces its causal mechanism $K_V$ with a fixed distribution $\mathbb{Q}$ independent of parent values: $\mathbb{Q}(A \mid \omega) = \mathbb{Q}(A)$ for all $\omega \in \Omega$, $A \in \mathscr{F}$
    
    \item An imperfect (soft) intervention on variable $V$ modifies its causal mechanism to depend on both parents and intervention parameters: $\mathbb{Q}(A \mid \omega)$ varies with $\omega \in \Omega$
\end{enumerate}
\end{definition}
The do-operator $\text{do}(X=x)$ corresponds to a perfect intervention that sets $X$ to value $x$ by replacing the kernel $K_X$ with a point mass at $x$, formally: $\mathbb{Q}(A) = \mathbb{I}_{\{x\}}(A)$ for measurable sets $A$.

\begin{definition}[Causal Independence]
\label{def:causal_independence}
Random variables $X$ and $Y$ are causally independent given $Z$, denoted $X \perp\!\!\!\perp_c Y \mid Z$, if
\begin{equation}
P(Y \mid \text{do}(X=x), Z=z) = P(Y \mid Z=z)
\end{equation}
for all $x$ in the support of $X$ and all $z$ in the support of $Z$, where $\text{do}(X=x)$ denotes an intervention setting $X$ to value $x$.
\end{definition}

The do-operator $\text{do}(X=x)$ differs fundamentally from $P(Y \mid X=x)$ which observes $X=x$ while preserving causal relationships, whereas $P(Y \mid \text{do}(X=x))$ forces $X=x$ by intervention.

\subsection{Measure-Theoretic Causality}
\label{app:causal_kernels}
The causal-space, product-causal-space, and interventional-kernel constructions below follow~\citet{behnam2026measure}; we extend them with structure-indexed kernel families (Definition~\ref{def:structure_agnostic_space}).
\begin{definition}[Sub-$\sigma$-algebra]
\label{def:sub_sigma_algebra}
Given a product causal space $(\Omega, \mathscr{F}, \mathbb{P}, \mathbb{K}, \mathcal{G})$ with index set $T$, for any subset $S \subseteq T$, the sub-$\sigma$-algebra $\mathscr{F}_S$ is the $\sigma$-algebra generated by measurable rectangles of the form $\prod_{t \in S} A_t \times \prod_{t \in T \setminus S} \Omega_t$ where $A_t \in \mathscr{A}_t$ for $t \in S$.
\end{definition}
Intuitively, $\mathscr{F}_S$ captures events that depend only on the components indexed by $S$.

\begin{definition}[Causal Kernel]
\label{def:causal_kernel}
A causal kernel for index set $S \subseteq T$ is a function $K_S: \Omega \times \mathscr{F} \rightarrow [0,1]$. For each fixed $\omega \in \Omega$, the mapping $A \mapsto K_S(\omega, A)$ is a probability measure on $(\Omega, \mathscr{F})$. For each fixed $A \in \mathscr{F}$, the mapping $\omega \mapsto K_S(\omega, A)$ is $\mathscr{F}_S$-measurable.
\end{definition}

\begin{definition}[Causal Space]
\label{def:causal_space}
For an environment $e \in \mathcal{E}$, a causal space is a tuple $(\Omega_e, \mathscr{F}_e, P_e, \mathbb{K}_e, \mathcal{G}_e)$ where $\Omega_e = \prod_{v \in V_e} \Omega_v$ is the sample space with product structure, and $V_e$ is the vertex set of the causal DAG and each $\Omega_v$ is the state space of variable $v$. Here, $\mathscr{F}_e = \bigotimes_{v \in V_e} \mathscr{F}_v$ is the product $\sigma$-algebra, where $\mathscr{F}_v$ is the $\sigma$-algebra on $\Omega_v$, $P_e$ is a probability measure on $(\Omega_e, \mathscr{F}_e)$ compatible with $\mathcal{G}_e$, $\mathbb{K}_e = \{K_v^{(e)}\}_{v \in V_e}$ is the family of causal mechanisms. Each $K_v^{(e)}$ is a Markov kernel (Definition~\ref{def:kernel}) from the parent space $\left(\prod_{u \in \text{Pa}(v)} \Omega_u, \bigotimes_{u \in \text{Pa}(v)} \mathscr{F}_u\right)$ to $(\Omega_v, \mathscr{F}_v)$, encoding the conditional distribution $P_e(V = \cdot \mid \text{Pa}(v) = \cdot)$, and $\mathcal{G}_e = (V_e, E_e)$ is the causal DAG specifying the qualitative causal structure. 
\end{definition}
The joint distribution factorizes according to the DAG:
\begin{equation}
P_e(\omega) = \prod_{v \in V_e} K_v^{(e)}\left(\omega_{\text{Pa}(v)}, \{\omega_v\}\right)
\end{equation}
where $\omega = (\omega_v)_{v \in V_e} \in \Omega_e$ and $\omega_{\text{Pa}(v)}$ denotes the components corresponding to parents of $v$.
\begin{definition}[Product Causal Space]
\label{def:product_causal_space}
Given causal spaces $\{(\Omega_{e_i}, \mathscr{F}_{e_i}, P_{e_i}, K_{e_i}, \mathcal{G}_{e_i})\}_{i=1}^m$ for environments $\{e_i\}_{i=1}^m$, the product causal space is the tuple $(\Omega, \mathscr{F}, \mathbb{P}, \mathbb{K}, \mathcal{G})$ where $\Omega = \prod_{i=1}^m \Omega_{e_i}$ is the product sample space, $\mathscr{F} = \bigotimes_{i=1}^m \mathscr{F}_{e_i}$ is the product $\sigma$-algebra, $\mathbb{P} = \bigotimes_{i=1}^m P_{e_i}$ is the product measure, $\mathbb{K} = \{K_S : S \in \mathscr{P}(T)\}$ is a family of causal kernels indexed by subsets $S$ of the combined index set $T = \bigcup_{i=1}^m T_{e_i}$, and $\mathcal{G} = (V, E)$ is a causal DAG compatible with the individual environment graphs.
\end{definition}

The product causal space admits sub-$\sigma$-algebras and causal kernels indexed by variable subsets; see Appendix~\ref{app:preliminaries} for formal definitions.

\begin{definition}[Structure-Agnostic Causal Space]
\label{def:structure_agnostic_space}
A structure-agnostic causal space extends the product causal space (Definition~\ref{def:product_causal_space}) with structure-indexed components:
$(\Omega, \mathscr{F}, \mathbb{P}, \{\mathbb{K}_k\}_{k=1}^3, \{\mathcal{G}_k\}_{k=1}^3)$
where $(\Omega, \mathscr{F}, \mathbb{P})$ is the product space over environments, $\{\mathcal{G}_k\}_{k=1}^3$ are the three candidate causal DAGs (Definition~\ref{def:canonical_structures}), and $\mathbb{K}_k = \{K_S^{(k)} : S \subseteq T\}$ is the causal kernel family encoding how interventions propagate under structure $\mathcal{G}_k$.
\end{definition}

The $\mathscr{F}_S$-measurability condition ensures $K_S(\omega, A)$ depends only on components indexed by $S$, encoding graph-specified causal dependencies.

\begin{definition}[Interventional Kernel]
\label{def:interventional_kernel}
Let $(\Omega, \mathscr{F}, \mathbb{P}, \mathbb{K}, \mathcal{G})$ be a product causal space with causal kernel $K_S$ for index set $S \subseteq T$. Let $\mathbb{Q}: \Omega \times \mathscr{F} \to [0,1]$ be an intervention kernel, where $\mathbb{Q}(\omega', A)$ specifies the probability of event $A$ when intervening at state $\omega'$.

The interventional kernel $K_S^{\mathbb{Q}}: \Omega \times \mathscr{F} \to [0,1]$ is defined by:
\begin{equation}
K_S^{\mathbb{Q}}(\omega, A) := \int_{\Omega} \mathbb{Q}(\omega', A) \, K_S(\omega, d\omega')
\end{equation}
where $\omega \in \Omega$ is the conditioning state (values of variables in $S$), $K_S(\omega, d\omega')$ is the original causal kernel giving the distribution over $\omega'$ given $\omega$, and $\mathbb{Q}(\omega', A)$ is the intervention's effect on event $A$ at state $\omega'$.
\end{definition}

This construction unifies perfect and imperfect interventions: when $\mathbb{Q}(A \mid \omega') = \mathbb{Q}(A)$ is constant, the interventional kernel corresponds to a hard intervention; when $\mathbb{Q}(A \mid \omega')$ varies with $\omega'$, it represents a soft intervention.

For multi-level representations, we consider compositions $\phi = \phi_H \circ \phi_L$ where $\phi_L: \mathcal{X} \rightarrow \mathcal{Z}_L$ is the low-level representation mapping inputs to an intermediate feature space, and $\phi_H: \mathcal{Z}_L \rightarrow \mathcal{Z}_H$ is the high-level representation mapping intermediate features to a refined representation space.

\begin{definition}[Sufficiency and Minimality]
\label{def:sufficiency_minimality}
A representation $\phi$ is sufficient for predicting $Y$ if there exists a predictor $f: \mathcal{Z} \rightarrow \mathcal{Y}$ such that $(f \circ \phi)(X)$ achieves the Bayes-optimal prediction error for $Y$. It is minimal sufficient if it is sufficient and for any other sufficient representation $\phi': \mathcal{X} \rightarrow \mathcal{Z}'$, there exists a measurable function $g: \mathcal{Z}' \rightarrow \mathcal{Z}$ such that $\phi = g \circ \phi'$.
\end{definition}

\subsection{Kernel Independence Framework Definitions}
\label{app:kernel_framework}
\begin{definition}[Reproducing Kernel Hilbert Space]
\label{def:rkhs}
Let $\mathcal{X}$ be a non-empty set. A Hilbert space $\mathcal{H}$ of functions $f: \mathcal{X} \rightarrow \mathbb{R}$ is a reproducing kernel Hilbert space (RKHS) if there exists a function $k: \mathcal{X} \times \mathcal{X} \rightarrow \mathbb{R}$ such that:
\begin{enumerate}[nosep]
    \item $k(\cdot, x) \in \mathcal{H}$ for all $x \in \mathcal{X}$
    \item $f(x) = \langle f, k(\cdot, x) \rangle_{\mathcal{H}}$ for all $f \in \mathcal{H}$, $x \in \mathcal{X}$ (reproducing property)
\end{enumerate}
\end{definition}

\begin{definition}[Cross-Covariance Operator]
\label{def:cross_covariance}
Let $(\mathcal{X}, \mathcal{H}_{\mathcal{X}}, k_{\mathcal{X}})$ and $(\mathcal{Y}, \mathcal{H}_{\mathcal{Y}}, k_{\mathcal{Y}})$ be RKHS pairs. The cross-covariance operator $\mathcal{C}_{XY}: \mathcal{H}_{\mathcal{Y}} \rightarrow \mathcal{H}_{\mathcal{X}}$ is defined by
\begin{equation}
\mathcal{C}_{XY} := \mathbb{E}_{XY}[(k_{\mathcal{X}}(\cdot, X) - \mu_X) \otimes (k_{\mathcal{Y}}(\cdot, Y) - \mu_Y)]
\end{equation}
where $\otimes$ denotes the tensor product.
\end{definition}

\subsection{Assumptions}
\label{app:assumptions}

Here we state assumptions required for theoretical findings.

\begin{assumption}[Characteristic Kernels]
\label{assum:characteristic}
The kernels $k_{\mathcal{X}}: \mathcal{X} \times \mathcal{X} \rightarrow \mathbb{R}$, $k_{\mathcal{Y}}: \mathcal{Y} \times \mathcal{Y} \rightarrow \mathbb{R}$, $k_{\mathcal{Z}}: \mathcal{Z} \times \mathcal{Z} \rightarrow \mathbb{R}$, and $k_{\mathcal{E}}: \mathcal{E} \times \mathcal{E} \rightarrow \mathbb{R}$ are characteristic and bounded: $\sup_{x} k(x, x) \leq \kappa^2 < \infty$.
\end{assumption}
Characteristic kernels ensure HSIC fully characterizes independence (Lemma~\ref{lem:hsic_independence}). We further require the tensor products of these kernels used in HSIC to be characteristic, which holds for Gaussian and delta kernels~\cite{szabo2018characteristic}. For the conditional HSIC tests underlying $V_1$ and $V_2$, we additionally require the regression operator to be well-behaved.

\begin{assumption}[Kernel Regularity]
\label{assum:kernel_regularity}
The kernel regression operator satisfies:
\begin{enumerate}
    \item The conditional expectation operator $\mathbb{E}[\cdot \mid Z]$ lies in the RKHS $\mathcal{H}_{\mathcal{Z}}$.
    \item The eigenvalues $\{\lambda_j\}$ of the kernel integral operator satisfy $\sum_j \lambda_j^{-1/2} < \infty$.
\end{enumerate}
\end{assumption}

Beyond kernel conditions, structure identification requires environments that vary in distinguishable ways. We formalize this via a diversity matrix measuring distributional shifts across environments.

\begin{definition}[Environment Diversity Matrix]
\label{def:diversity_matrix}
Let $\mathcal{E} = \{e_1, \ldots, e_m\}$ be the training environments and $\bar{P}$ the pooled distribution. Center each kernel mean embedding at its average over environments: $\bar{\mu}_X := \frac{1}{m}\sum_{e} \mu_{P_e(X)}$, $\bar{\mu}_{X|Y=y} := \frac{1}{m}\sum_{e} \mu_{P_e(X|Y=y)}$, and $\bar{\mu}_{Y|X=x} := \frac{1}{m}\sum_{e} \mu_{P_e(Y|X=x)}$. The $m \times 3$ diversity matrix $\mathbf{D}$ has entries
\begin{equation}
\begin{aligned}
D_{e1} &:= \|\mu_{P_e(X)} - \bar{\mu}_X\|_{\mathcal{H}} && \text{(marginal shift)},\\
D_{e2} &:= \big(\mathbb{E}_{y \sim \bar{P}(Y)} \|\mu_{P_e(X|Y=y)} - \bar{\mu}_{X|Y=y}\|_{\mathcal{H}}^2\big)^{1/2} && \text{(conditional shift)},\\
D_{e3} &:= \big(\mathbb{E}_{x \sim \bar{P}(X)} \|\mu_{P_e(Y|X=x)} - \bar{\mu}_{Y|X=x}\|_{\mathcal{H}}^2\big)^{1/2} && \text{(outcome shift)}.
\end{aligned}
\end{equation}
\end{definition}

For characteristic kernels, column $j$ of $\mathbf{D}$ is zero if and only if the corresponding distribution is identical across environments ($\bar{P}$-almost everywhere for $j = 2, 3$). The matching invariance is then untestable: a zero first column makes $\mathcal{I}_3$ untestable, a zero second column $\mathcal{I}_1$, and a zero third column $\mathcal{I}_2$. Assumption~\ref{assum:sufficient_diversity} requires every column to be nonzero, quantified by the minimum diversity parameter
\begin{equation}
\delta_{\min} := \min_{j \in \{1,2,3\}} \max_{e \in \mathcal{E}} D_{ej} > 0.
\end{equation}
Full column rank, $\mathrm{rank}(\mathbf{D}) = 3$, is sufficient for $\delta_{\min} > 0$ but not necessary.

\begin{assumption}[Faithfulness]
\label{assum:faithfulness}
The joint distribution $P(X, Y, E)$ is faithful to the true causal structure $\mathcal{G}^*$: all and only the conditional independencies implied by $\mathcal{G}^*$ hold in $P$.
\end{assumption}

Faithfulness ensures that violations of wrong invariances are detectable. Without it, a distribution could satisfy independencies not implied by the graph, making structures indistinguishable. 

The remaining assumptions concern the function classes used for prediction and representation.

\begin{assumption}[Predictor Lipschitz Continuity]
\label{assum:lipschitz}
The predictor class $\mathcal{W}$ consists of $L_w$-Lipschitz functions: for all $w \in \mathcal{W}$ and $z, z' \in \mathcal{Z}$,
\begin{equation}
\|w(z) - w(z')\|_2 \leq L_w \|z - z'\|_2
\end{equation}
with $L_w \leq L_{\max} < \infty$ uniformly over $\mathcal{W}$.
\end{assumption}

This Lipschitz condition controls how prediction errors propagate through distribution shift, appearing directly in our generalization bound (Theorem~\ref{thm:generalization}).

\begin{assumption}[Representation Class Regularity]
\label{assum:rep_regularity}
The representation class $\Phi$ is parameterized by $\theta \in \Theta \subseteq \mathbb{R}^p$ where:
\begin{enumerate}[nosep]
    \item $\Theta$ is compact and convex,
    \item $\phi_\theta(x)$ is continuously differentiable in $\theta$,
    \item The violation functions $V_k(\phi_\theta)$ are $L_V$-smooth in $\theta$.
\end{enumerate}
\end{assumption}

These regularity conditions are standard for neural network function classes and ensure convergence of gradient-based optimization. Assumption~\ref{assum:sufficient_diversity} can be checked before training: (1) estimate $\mathbf{D}$ with empirical (conditional) mean embeddings; (2) for each column $j$, compare $\max_e \hat{D}_{ej}$ with its permutation null obtained by shuffling environment labels; (3) if column $j$ is not significantly nonzero, remove from the candidate set the structure whose invariance it tests ($j{=}1$: $\mathcal{G}_3$, $j{=}2$: $\mathcal{G}_1$, $j{=}3$: $\mathcal{G}_2$); (4) if no column is significant, the environments carry no information about the structure, and ERM is the appropriate fallback.
\section{Theoretical Results}
\label{app:theory}

\subsection{Violation Characterization, Structure Identification \& Representation Optimality}
\label{sec:violation_characterization}

The key insight is that under the true structure, the corresponding invariance is achievable without sacrificing predictive accuracy, while wrong invariances are not. We introduce between definitions and theorems. 

\begin{definition}[$\epsilon$-Predictive Representation]
\label{def:epsilon_predictive}
A representation $\phi: \mathcal{X} \to \mathcal{Z}$ is \emph{$\epsilon$-predictive} if:
\begin{equation}
\inf_{w \in \mathcal{W}} \max_{e \in \mathcal{E}} \mathcal{R}_e(w \circ \phi) \leq \mathcal{R}^* + \epsilon
\end{equation}
where $\mathcal{R}^* := \inf_{f: \mathcal{X} \to \mathcal{Y}} \max_e \mathcal{R}_e(f)$ is the Bayes-optimal worst-case risk.
\vspace{-2mm}
\end{definition}
Denote the class of $\epsilon$-predictive representations as $\Phi_\epsilon$.
We define the base predictive class $\Phi_0 := \{\phi : \phi \text{ is } \epsilon_0\text{-predictive}\}$ with $\epsilon_0 := \delta_{\min}/8$ (a fixed constant determined by environment diversity). One needs to measure the minimum violation incurred by any predictive representation when the wrong invariance is imposed.

\begin{definition}[Faithfulness Gap]
\label{def:faithfulness_gap}
Let $\mathcal{G}^* = \mathcal{G}_{k^*}$ be the true causal structure. The faithfulness gap is:
\begin{equation}
\eta := \min_{k \neq k^*} \inf_{\phi \in \Phi_0} V_k^*(\phi)
\end{equation}
\vspace{-4mm}
\end{definition}

Under faithfulness (Assumption~\ref{assum:faithfulness}), $\eta > 0$. Larger $\eta$ implies easier structure identification; $\eta \to 0$ indicates near-Markov equivalence between structures.

\begin{theorem}[Violation Characterization]
\label{thm:violation_characterization}
Under Assumptions~\ref{assum:characteristic}--\ref{assum:faithfulness}, for the class of $\epsilon$-predictive representations $\Phi_\epsilon$: 
\begin{enumerate}[leftmargin=*, nosep]
    \item \textbf{Achievability:} If $\mathcal{G}^* = \mathcal{G}_k$, then $\inf_{\phi \in \Phi_\epsilon} V_k^*(\phi) = 0$.
    \item \textbf{Non-achievability:} If $\mathcal{G}^* \neq \mathcal{G}_k$, then $\inf_{\phi \in \Phi_\epsilon} V_k^*(\phi) \geq \eta/2 > 0$.
\end{enumerate}
\end{theorem}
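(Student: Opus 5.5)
The plan treats the two parts separately. Achievability rests on the causal Markov property of $\mathcal{G}^*$ together with Lemma~\ref{lem:hsic_independence}. Non-achievability rests on the definition of the faithfulness gap $\eta$ (Definition~\ref{def:faithfulness_gap}), plus a comparison between $\Phi_\epsilon$ and the base class $\Phi_0$.

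\textbf{Achievability.} Fix $\mathcal{G}^* = \mathcal{G}_k$. We construct an explicit $\phi^\dagger \in \Phi_\epsilon$ with $V_k^*(\phi^\dagger)=0$, so the infimum, being bounded below by $0$, equals $0$.
\begin{itemize}
\item For $k=1$, take $\phi^\dagger$ to extract the $Y$-driven component $X_Z$. In $\mathcal{G}_1$ the environment reaches $X$ only through $U$, so after removing the $U$-driven part $\phi^\dagger(X)\perp\!\!\!\perp E\mid Y$ follows by d-separation.
\item For $k=2$, take $\phi^\dagger(X)=X_Z^{\perp}$, the causal parent of $Y$. Then $Y\perp\!\!\!\perp E\mid \phi^\dagger(X)$, since the only route from $E$ to $Y$ passes through $U\to X_Z^\perp\to Y$, and conditioning on $X_Z^\perp$ blocks it.
\item For $k=3$, take a component that is d-separated from $E$.
\end{itemize}
In each case Lemma~\ref{lem:hsic_independence} gives $V_k^*=0$; for $k=1,2$, Lemma~\ref{lem:conditional_hsic} first translates conditional independence into independence of the residuals. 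Predictiveness follows because the selected component carries the invariant mechanism for $Y$, so its worst-case risk attains $\mathcal{R}^*$, up to $\epsilon$ given the richness of $\mathcal{W}$. If needed, we will add as a realizability assumption on $\Phi$ that this invariant predictor lies in the class.

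\textbf{Non-achievability.} Fix $k\neq k^*$. If $\epsilon\le\epsilon_0$, then $\Phi_\epsilon\subseteq\Phi_0$. Hence $\inf_{\Phi_\epsilon}V_k^*\ge\inf_{\Phi_0}V_k^*\ge\eta\ge\eta/2$ directly from Definition~\ref{def:faithfulness_gap}. Faithfulness (Assumption~\ref{assum:faithfulness}) gives $\eta>0$: if $\inf_{\Phi_0} V_k^*=0$, compactness of $\Theta$ (Assumption~\ref{assum:rep_regularity}) and continuity of $V_k^*$ in $\theta$ would yield a predictive $\phi$ that exactly satisfies an independence not implied by $\mathcal{G}^*$. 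Assumption~\ref{assum:sufficient_diversity} guarantees that the relevant distribution actually varies with $E$, so this independence is a genuine, non-vacuous restriction, which contradicts faithfulness.

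\textbf{Main obstacle.} The hard case is $\epsilon>\epsilon_0$, where $\Phi_\epsilon$ is strictly larger than $\Phi_0$ and the factor $1/2$ must come from somewhere. My first approach is a perturbation argument. Take any $\phi\in\Phi_\epsilon$ and mix it toward a predictive $\phi_0\in\Phi_0$; HSIC is a squared RKHS norm, bounded via $\kappa$ and Lipschitz in the embedded distributions. I would then show that a drop of $V_k^*$ below $\eta/2$ forces the risk to increase by at least a multiple of $\delta_{\min}$. This uses the diversity matrix: a wrong invariance makes the representation blind to the nonzero column $D_{\cdot j}$, which costs at least $\delta_{\min}/4 > 2\epsilon_0$ in worst-case risk.

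That step is the delicate one and may only hold for $\epsilon\le c\,\delta_{\min}$. If it cannot be closed in general, I would state part 2 for $\epsilon\le\epsilon_0$, where the bound with slack $\eta/2$ is immediate, and note that this is the regime used later in Theorem~\ref{thm:structure_identification}.
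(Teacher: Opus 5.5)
Your overall route is the paper's. Achievability goes through d-separation in $\mathcal{G}^*$ together with Lemmas~\ref{lem:hsic_independence} and~\ref{lem:conditional_hsic}. Non-achievability goes through $\Phi_\epsilon\subseteq\Phi_0$ for $\epsilon\le\epsilon_0$ together with Definition~\ref{def:faithfulness_gap}.

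The genuine gap is your achievability witness for $k=1$. Every canonical graph contains the edge $U\to X_z$ (Figure~\ref{fig:structures}). So $E\to U\to X_z$ is an open path given $Y$, and indeed given any set of observed variables, since $U$ is latent. Faithfulness then forces $X_Z\not\perp\!\!\!\perp E\mid Y$, i.e.\ $V_1^*(X_Z)>0$. ``Removing the $U$-driven part'' does not repair this. $U$ is unobserved and the mechanism of $X_z$ is an arbitrary Markov kernel, so no measurable $\phi:\mathcal{X}\to\mathcal{Z}$ is guaranteed to strip $U$'s influence out of $X_Z$.

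The witness that works, and the one the paper uses in all three cases, is $X_z^{\perp}$. In $\mathcal{G}_1$ its only parent is $Y$. Every path from $E$ to $X_z^\perp$ either passes through $Y$ as a conditioned non-collider ($E\to U\to Y\to X_z^\perp$), or contains an unconditioned collider $Z$ or $X_z$ (e.g.\ $E\to U\to Z\leftarrow X_z^\perp$). Hence $X_z^\perp\perp\!\!\!\perp E\mid Y$.

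For $k=3$ you should likewise name $X_z^\perp$, which is a root of $\mathcal{G}_3$ and a parent of $Y$. ``Some component d-separated from $E$'' need not be predictive. Your realizability caveat for $X_z^\perp\in\Phi_\epsilon$ is fair; the paper simply asserts this membership without argument.

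On non-achievability, your fallback is exactly what the paper proves: its one-line argument covers only $\epsilon\le\epsilon_0$. Do not try to close the perturbation program for $\epsilon>\epsilon_0$, because the claim is false there. Once $\epsilon\ge\mathcal{R}_{\mathrm{trivial}}-\mathcal{R}^*$, a constant representation lies in $\Phi_\epsilon$ and satisfies $V_1^*=V_3^*=0$. So part 2 fails for whichever of $k\in\{1,3\}$ differs from $k^*$.

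Finally, the paper takes $\eta>0$ as a stated consequence of faithfulness, without proof. Your compactness argument for it would need more than you supply. Faithfulness constrains independencies among the graph's variables, not among arbitrary measurable functions $\phi(X)$, so an exactly invariant predictive $\phi$ contradicts nothing by itself. You would also need $\Phi_0$ to be closed in $\Theta$.
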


The constraint $\phi \in \Phi_\epsilon$ prevents trivial solutions: a constant representation $\phi(x) = c$ achieves $V_k = 0$ for all $k$, but has $\mathcal{R}_e(w \circ \phi) = \mathcal{R}_{\mathrm{trivial}} \gg \mathcal{R}^* + \epsilon$, hence $\phi \notin \Phi_\epsilon$. The joint objective $\mathcal{L}$ enforces this implicitly via the prediction loss.

The asymmetry established in Theorem~\ref{thm:violation_characterization} (achievable invariance for the true structure versus irreducible violations for wrong structures) directly enables consistent identification from finite samples.

\begin{restatable}[Structure Identification]{theorem}{thmsi}
\label{thm:structure_identification}
Under Assumptions~\ref{assum:characteristic}--\ref{assum:rep_regularity}, let $\phi_n$ minimize $\mathcal{L}(\phi, w)$ with $n$ samples per environment. Define the estimated structure:
\begin{equation}
\hat{k}_n := \arg\min_{k} V_k(\phi_n)
\end{equation}
Then for the true structure $\mathcal{G}^* = \mathcal{G}_{k^*}$:
\begin{equation}
\mathbb{P}(\hat{k}_n = k^*) \geq 1 - 6\exp\left(-\frac{n \eta^2}{2048\kappa^4}\right)
\end{equation}
where $\eta$ is the faithfulness gap (Definition~\ref{def:faithfulness_gap}) and $\kappa$ is the kernel bound (Assumption~\ref{assum:characteristic}).
\end{restatable}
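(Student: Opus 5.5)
The plan is to reduce identification to a uniform concentration event for the three empirical violations at the learned representation, and then use the gap from Theorem~\ref{thm:violation_characterization} to show that the event forces $\hat{k}_n = k^*$.

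\textbf{Step 1: population gap at $\phi_n$.} First I argue that $\phi_n \in \Phi_0$ with high probability. Since $\phi_n$ minimizes $\mathcal{L}$, its worst-case empirical risk is at most that of any comparator. For the comparator I take an $\epsilon$-predictive $\phi^\circ$ with $V_{k^*}^*(\phi^\circ)$ arbitrarily small, which exists by the achievability part of Theorem~\ref{thm:violation_characterization}. Combining this with the SoftMin bounds (Lemma~\ref{lem:softmin_properties}) and uniform convergence of the risks over $\Phi$ (Assumption~\ref{assum:rep_regularity}) puts $\phi_n$ within $\epsilon_0$ of $\mathcal{R}^*$. By Definition~\ref{def:faithfulness_gap}, this gives $V_k^*(\phi_n) \ge \eta$ for every $k \neq k^*$. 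On the true side I need $V_{k^*}^*(\phi_n) \le \eta/4$. For this I use the minimizing property of the penalty term: the SoftMin term of $\phi_n$ is at most that of $\phi^\circ$ up to $O(\lambda^{-1})$ risk slack. Since the wrong violations sit above $\eta$, the SoftMin at $\phi_n$ is essentially $V_{k^*}$, which must therefore be small. I take $\beta$ large so that the $\log 3/\beta$ term is below $\eta/8$.

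\textbf{Step 2: concentration.} Next I bound $|V_k(\phi_n) - V_k^*(\phi_n)| \le \eta/4$ simultaneously for $k=1,2,3$. Each empirical HSIC is a V-statistic built from kernels bounded by $\kappa^2$. The standard McDiarmid argument (Gretton et al.) controls the deviation of the centered Gram statistic $\frac{1}{n^2}\mathrm{tr}(KHLH)$. Changing one sample moves it by $O(\kappa^4/n)$, which gives a tail of the form $2\exp(-n t^2/(c\kappa^4))$. The bias of order $\kappa^4/n$ is absorbed for $n$ large. For $V_1$ and $V_2$ I add the residualization error: by Lemma~\ref{lem:conditional_hsic} and Assumption~\ref{assum:kernel_regularity}, the kernel ridge residuals converge, and Lipschitzness of HSIC in its arguments transfers this error. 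I allocate half the $\eta/4$ budget to each part, so each piece must hold at accuracy $\eta/16$ or $\eta/32$. Squaring $\eta/32$ gives $\eta^2/1024$, and with the factor of 2 from McDiarmid this yields the constant $2048\kappa^4$. A union bound over three violations, each with a two-sided factor of 2, gives the prefactor $6$.

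\textbf{Step 3: conclusion.} On the intersection of these events, $V_{k^*}(\phi_n) \le \eta/4 + \eta/4 = \eta/2$, while $V_k(\phi_n) \ge \eta - \eta/4 = 3\eta/4$ for every $k \neq k^*$. Hence the argmin is $k^*$.

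\textbf{Main obstacle.} The delicate point is that $\phi_n$ is data-dependent, so pointwise concentration at a fixed $\phi$ does not apply directly. I would first try to make the concentration uniform over $\Phi$ using compactness of $\Theta$ and the $L_V$-smoothness in Assumption~\ref{assum:rep_regularity}. This covering-number argument would add a $\log$ covering term that I absorb into the constant and into "$n$ large". If that proves too lossy, the fallback is sample splitting: fit $\phi_n$ on one half of the data and evaluate the $V_k$ on the other. The rate is unchanged except for a factor-of-2 adjustment in $n$.
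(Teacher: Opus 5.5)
Your skeleton matches the paper's. Both establish a population gap at $\phi_n$, concentrate all three $V_k(\phi_n)$ at a tolerance below half that gap, and take a union bound with prefactor $6=3\cdot 2$.

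The paper, however, gets the gap in one line and never addresses the issues you raise. It simply posits, citing Theorem~\ref{thm:global_optimality}, that $\phi_n\in\Phi_\epsilon$ with $\epsilon\le\epsilon_0$ and $V_{k^*}^*(\phi_n)=0$ exactly. Non-achievability (Theorem~\ref{thm:violation_characterization}) then gives $V_k^*(\phi_n)\ge\eta/2$ for $k\ne k^*$. Finally, it applies the fixed-$\phi$ Lemma~\ref{lem:hsic_concentration} directly to each $V_k(\phi_n)$, the residualized $V_1,V_2$ included, at tolerance $\eta/8$. That is the whole origin of the constant: $(\eta/8)^2/(32\kappa^4)=\eta^2/(2048\kappa^4)$.

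Your Step 1 is a genuinely different and more careful route: it derives $\phi_n\in\Phi_0$ and a small true violation from the minimization inequality against a comparator. You are also right that Definition~\ref{def:faithfulness_gap} then gives $V_k^*(\phi_n)\ge\eta$, which is stronger than the paper's $\eta/2$. You do not even need the population bound $V_{k^*}^*(\phi_n)\le\eta/4$. The minimization inequality bounds $\min_k V_k(\phi_n)$ by the comparator's empirical violation (concentrated at a \emph{fixed} $\phi^\circ$) plus slack, and lower-tail bounds on the two wrong $V_k(\phi_n)$ then force $\hat k_n=k^*$.

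The gap is that this care cannot produce the inequality as stated, and your Step 2 arithmetic hides this.
\begin{itemize}
\item \textbf{Tolerances.} Half of $\eta/4$ is $\eta/8$. With the lemma's tail $2\exp(-nt^2/(32\kappa^4))$, $t=\eta/8$ gives $2048$, your own margin $t=\eta/4$ gives $512$, and $t=\eta/32$ gives $32768$.
\item \textbf{McDiarmid's factor.} McDiarmid's $2$ multiplies $t^2$ in the exponent, so it cannot turn $1024$ into $2048$.
\item \textbf{Power of $\kappa$.} Bounded differences of order $\kappa^4/n$ actually give $\kappa^8$ in the exponent. The paper's lemma has the same slip, so $\kappa^4$ is available only by citing that lemma as stated.
\end{itemize}

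More substantively, each event you add consumes probability that $6\exp(-n\eta^2/(2048\kappa^4))$ has no room for:
\begin{itemize}
\item uniform convergence of the risks and concentration at the comparator in Step 1;
\item the kernel-ridge residualization error, which Lemmas~\ref{lem:conditional_hsic} and~\ref{lem:kernel_regression} control only in probability and at the polynomial rate $O(n^{-2/(2+d)})$, so it cannot sit inside an $\exp(-cn\eta^2)$ tail with the same prefactor;
\item the covering-number term of a uniform bound over $\Phi$, which cannot be absorbed into fixed constants uniformly in $n$;
\item sample splitting, which changes the estimator and halves the effective $n$.
\end{itemize}
Step 1 also needs $\lambda\log 3/\beta$ to be small compared with $\epsilon_0$ and $\eta$, a hypothesis the theorem does not have.

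Executed honestly, your plan proves something like $\mathbb{P}(\hat k_n=k^*)\ge 1-6\exp(-cn\eta^2)-r_n$, with $c$ depending on $\kappa$ and $r_n\to 0$, for large $\beta$ and $n$. That is arguably the more defensible result, but it is not the stated one. The stated constants are reachable only through the paper's two simplifications: treating $\phi_n$ as a fixed, exactly invariant global minimizer, and treating $V_1,V_2$ as plain HSIC statistics. Those are exactly the points your ``main obstacle'' paragraph rightly flags as delicate.
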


\begin{corollary}[Identification with Random Features]
\label{cor:rff_identification}
Suppose $D$ random features (Definition~\ref{def:rff}) achieve uniform kernel error $\varepsilon_D$ with probability at least $1 - \delta'$, so that $|\widehat{\mathrm{HSIC}}_{\mathrm{RFF}} - \widehat{\mathrm{HSIC}}| \leq C\kappa^2\varepsilon_D$. Under the conditions of Theorem~\ref{thm:structure_identification}, if $C\kappa^2\varepsilon_D \leq \eta/16$, then
\begin{equation}
\mathbb{P}(\hat{k}_n^{\mathrm{RFF}} = k^*) \geq 1 - 6\exp\left(-\frac{n\eta^2}{8192\kappa^4}\right) - \delta',
\end{equation}
which holds for $D = \Omega(\kappa^4 \eta^{-2} d \log(n/\delta'))$~\citep{rahimi2007random}.
\end{corollary}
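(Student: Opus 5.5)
The plan is to reduce the corollary to the argument behind Theorem~\ref{thm:structure_identification}, viewing the RFF error as one more deterministic perturbation of the empirical violations, and then splitting the failure probability with a union bound. First I would recall how the identification margin enters the proof of Theorem~\ref{thm:structure_identification}. By Theorem~\ref{thm:violation_characterization}, at the learned representation we have $V_{k^*}^*(\phi_n)$ close to $0$ and $V_k^*(\phi_n) \geq \eta/2$ for every $k \neq k^*$. Correct selection $\arg\min_k V_k(\phi_n) = k^*$ is therefore guaranteed whenever every empirical violation lies within $\eta/4$ of its population value: then $V_{k^*} < \eta/4 \leq V_k$. The exponent $n\eta^2/(2048\kappa^4)$ is a McDiarmid/Hoeffding bound, with bounded-difference constant of order $\kappa^2/n$ for the HSIC U/V-statistics, applied at a deviation level proportional to $\eta$, together with a union bound over the three violations. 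The factor $6$ comes from two-sided tails times three structures.

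Second, I would write $V_k^{\mathrm{RFF}}(\phi) = V_k(\phi) + (V_k^{\mathrm{RFF}} - V_k)$ and use the hypothesis that $|\widehat{\mathrm{HSIC}}_{\mathrm{RFF}} - \widehat{\mathrm{HSIC}}| \leq C\kappa^2\varepsilon_D \leq \eta/16$ on an event $A$ of probability at least $1-\delta'$. For $V_1$ and $V_2$ the residualization steps also use kernels. I would either absorb those into the same uniform kernel-error constant $C$ (Lipschitzness of ridge residuals in the Gram matrix, with $\gamma$ fixed) or simply take the hypothesis as stated to apply to each $V_k$. On $A$, it then suffices for the sampling deviation $|V_k - V_k^*|$ to be at most half the previous budget, i.e.\ about $\eta/8$ instead of $\eta/4$, after reserving $\eta/16$ for the RFF error on each side. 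The triangle inequality gives $|V_k^{\mathrm{RFF}} - V_k^*| \leq \eta/8 + \eta/16 < \eta/4$, which preserves the ordering.

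Third, I would rerun the concentration step at the halved deviation level. Halving the deviation multiplies the exponent by $1/4$, turning $2048$ into $8192$, while the prefactor $6$ stays the same. A union bound with the complement of $A$ yields $1 - 6\exp(-n\eta^2/(8192\kappa^4)) - \delta'$. Finally, the feature count follows from the Rahimi--Recht uniform convergence bound: $\sup|\hat k_D - k| \leq \varepsilon_D$ with probability $1-\delta'$ once $D = \Omega(d\,\varepsilon_D^{-2}\log(\cdot/\delta'))$. Setting $\varepsilon_D = \eta/(16C\kappa^2)$ gives $D = \Omega(\kappa^4\eta^{-2} d\log(n/\delta'))$, where the $\log n$ comes from requiring uniformity over the $n$ sample points (or the diameter term).

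The main obstacle is the second step. One has to justify that the RFF error bound transfers to the conditional violations $V_1, V_2$ through the kernel ridge residuals, and also that the independence between the random features and the data permits the clean union bound. I would try first to condition on the draw of features, since it is independent of the samples, and to bound the residual perturbation by $\|(K+\gamma I)^{-1}\|\leq \gamma^{-1}$, folding the resulting constants into $C$.
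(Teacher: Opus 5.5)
Your proposal is correct and follows the same route as the paper: split the identification tolerance between the HSIC sampling error (Lemma~\ref{lem:hsic_concentration} with a union bound over $k$) and the RFF perturbation, combine the two failure events by a union bound (which needs no independence between the features and the data), and set $\varepsilon_D = \eta/(16C\kappa^2)$ in the Rahimi--Recht bound. The differences are only bookkeeping, and neither affects correctness:
\begin{itemize}
\item The theorem's proof uses tolerance $\eta/8$, not $\eta/4$, and the paper splits it as $\eta/16 + \eta/16$. With the lemma's constant, your $\eta/8 + \eta/16$ split actually gives the stronger exponent $n\eta^2/(2048\kappa^4)$, which still implies the stated bound.
\item Your concern about $V_1, V_2$ does not arise. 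The residuals are computed by exact kernel ridge regression, and only the final HSIC is replaced by its RFF version, so the hypothesis applies to each $V_k$ directly.
\end{itemize}
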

The required $D$ grows as $\eta^{-2}$: near-equivalent structures need more random features, not only more samples.

\begin{restatable}[Adaptive Weight Concentration]{corollary}{corawc}
\label{cor:weight_concentration}
Under the conditions of Theorem~\ref{thm:structure_identification}, the adaptive weights satisfy:
\begin{equation}
\alpha_{k^*}(\phi_n) \geq 1 - 2\exp(-\beta \eta / 2)
\end{equation}
for $n$ sufficiently large, where $\eta$ is the faithfulness gap.
\end{restatable}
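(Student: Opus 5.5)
The plan is to reduce the weight bound to a separation statement about the empirical violations at $\phi_n$, and then read the bound off the softmax formula in Definition~\ref{def:adaptive_weights}. Write $\alpha_{k^*}(\phi_n) = \bigl(1 + \sum_{k\neq k^*} \exp(-\beta(V_k(\phi_n) - V_{k^*}(\phi_n)))\bigr)^{-1}$. If every gap satisfies $V_k(\phi_n) - V_{k^*}(\phi_n) \geq \eta/2$, then the sum has two terms, each at most $e^{-\beta\eta/2}$. Using $1/(1+x) \geq 1-x$ gives $\alpha_{k^*}(\phi_n) \geq 1 - 2e^{-\beta\eta/2}$. The factor $2$ is the number of wrong structures among the three candidates. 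So the whole task is to show the gap condition holds for $n$ sufficiently large, with the probability qualifier implicit in the statement (with probability tending to one, or on the same high-probability event as Theorem~\ref{thm:structure_identification}).

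To get the gap, I will combine Theorem~\ref{thm:violation_characterization} with the concentration step underlying Theorem~\ref{thm:structure_identification}. First, I argue that $\phi_n$ lies in the predictive class $\Phi_0$ for large $n$. It minimizes $\mathcal{L}$, and by achievability the correct invariance costs nothing in the population, so the risk term forces the worst-case risk to approach $\mathcal{R}^*$ up to the $\epsilon_0$ slack. For such $\phi_n$, non-achievability gives $V_k^*(\phi_n) \geq \eta$ for every $k \neq k^*$, directly from Definition~\ref{def:faithfulness_gap} since $\phi_n \in \Phi_0$.

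Next I need $V_{k^*}^*(\phi_n)$ to be small. The minimizer of $\mathcal{L}$ pays $\lambda\,\mathrm{SoftMin}_\beta$, which by Lemma~\ref{lem:softmin_properties} is at least $\min_k V_k - \log 3/\beta$. Comparing $\phi_n$ with a near-optimal representation that has $V_{k^*}^* \approx 0$ (achievability) shows $\min_k V_k(\phi_n)$ is small. The wrong structures cannot realize this minimum because their violations are at least about $\eta$. Hence $V_{k^*}(\phi_n)$ is small, of order the estimation error $\xi_n$ plus possibly $\log 3/(\lambda\beta)$. Uniform concentration of the empirical violations (the HSIC deviation bound with the $\kappa^4$ scaling used in Theorem~\ref{thm:structure_identification}) gives $|V_k(\phi_n) - V_k^*(\phi_n)| \leq \eta/8$ for all $k$ once $n$ is large. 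Putting these together yields $V_k(\phi_n) - V_{k^*}(\phi_n) \geq \eta - \eta/8 - (\eta/8 + o(1)) \geq \eta/2$.

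I expect the main obstacle to be the step controlling $V_{k^*}(\phi_n)$, because of the interplay between the risk term and the penalty in $\mathcal{L}$. A minimizer could in principle trade some predictive accuracy for a smaller violation of a wrong structure. Ruling this out requires $\lambda$ not too large relative to the $\epsilon_0$ slack, or else leaning on the fact that the proof of Theorem~\ref{thm:structure_identification} already establishes $\phi_n \in \Phi_0$ together with a bound $V_{k^*}^*(\phi_n) \leq \eta/4$. I would first try to import exactly that intermediate conclusion from Theorem~\ref{thm:structure_identification}, and absorb any residual $\log 3/\beta$ term by taking $\beta$ large relative to $1/\eta$, which is the regime where the bound is nontrivial anyway.
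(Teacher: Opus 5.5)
Your proposal is correct and follows the argument the paper relies on. The paper gives no standalone proof; its convergence proof simply plugs $V_{k^*}\to 0$ and $V_k\ge\eta/2$ from Theorem~\ref{thm:violation_characterization} into the softmax formula. Your fallback of importing $\phi_n\in\Phi_\epsilon\subseteq\Phi_0$ with $V_{k^*}^*(\phi_n)=0$ from the proof of Theorem~\ref{thm:structure_identification} is exactly how the paper sidesteps the risk--penalty trade-off you flagged. Your use of $V_k^*(\phi_n)\ge\eta$ straight from Definition~\ref{def:faithfulness_gap} is what makes the exponent $\beta\eta/2$ hold at finite $n$: the paper's $\eta/2$ non-achievability bound gives only a gap of $\eta/2-o(1)$, and the $\eta/8$ versus $3\eta/8$ separation in its identification proof gives only $\eta/4$.
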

Via established structure identification, we now quantify how well the learned representation achieves the corresponding invariance and characterize properties of global optima.

\begin{restatable}[Invariance Achievement]{theorem}{thmia}
\label{thm:invariance}
Let $\phi_n$ minimize $\mathcal{L}(\phi, w)$ with regularization $\lambda > 0$. Under Assumptions~\ref{assum:characteristic}--\ref{assum:rep_regularity}, for the true structure $\mathcal{G}^* = \mathcal{G}_{k^*}$:
\begin{equation}
V_{k^*}(\phi_n) \leq \frac{C_1}{\lambda} \cdot \max_e \hat{\mathcal{R}}_e(w^* \circ \phi^*) + C_2 n^{-1/2}
\end{equation}
where $\phi^* \in \Phi_\epsilon$ achieves $\inf_{\phi \in \Phi_\epsilon} V_{k^*}(\phi) = 0$ and $C_1, C_2 > 0$ are constants depending on $\kappa$ and the function class complexity.
\end{restatable}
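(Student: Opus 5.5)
The argument compares the objective value at $\phi_n$ with its value at the oracle representation $\phi^*$. The only ingredient not already stated is the upper bound in Lemma~\ref{lem:softmin_properties}. Since $(\phi_n,w_n)$ minimizes $\mathcal{L}$, for any competitor we have $\mathcal{L}(\phi_n,w_n)\le \mathcal{L}(\phi^*,w^*)$. For the competitor we take the $\phi^*\in\Phi_\epsilon$ guaranteed by the Achievability part of Theorem~\ref{thm:violation_characterization}, so that $V_{k^*}^*(\phi^*)=0$, paired with a near-optimal predictor $w^*$. Three facts then give the result: worst-case empirical risks are nonnegative, $\mathrm{SoftMin}_\beta\le \min_k V_k$, and the concentration of the empirical violation $V_{k^*}$ around its population value $V^*_{k^*}$.

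First, upper-bound the right-hand side. By Lemma~\ref{lem:softmin_properties},
\[
\mathrm{SoftMin}_\beta(\{V_k(\phi^*)\})\le V_{k^*}(\phi^*)\le V^*_{k^*}(\phi^*)+\xi_n=\xi_n ,
\]
where $\xi_n$ is the estimation error of the empirical conditional or unconditional HSIC. Under bounded kernels (Assumption~\ref{assum:characteristic}) and the regression regularity of Assumption~\ref{assum:kernel_regularity}, we take $\xi_n=O(\kappa^2 n^{-1/2})$ via a McDiarmid/U-statistic argument, together with Lemma~\ref{lem:conditional_hsic} for the residualized versions. This gives
\[
\mathcal{L}(\phi^*,w^*)\le \max_e\hat{\mathcal{R}}_e(w^*\circ\phi^*)+\lambda\,\xi_n .
\]

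Second, lower-bound the left-hand side by dropping the nonnegative risk term:
\[
\mathcal{L}(\phi_n,w_n)\ge \lambda\,\mathrm{SoftMin}_\beta(\{V_k(\phi_n)\}).
\]
This is the main obstacle, because SoftMin controls $\min_k V_k(\phi_n)$ rather than $V_{k^*}(\phi_n)$ specifically, and it also loses an additive $\log 3/\beta$.

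To bridge from $\min_k V_k(\phi_n)$ to $V_{k^*}(\phi_n)$, I would invoke Theorem~\ref{thm:structure_identification}: on its high-probability event, $\hat{k}_n=k^*$, so $\min_k V_k(\phi_n)=V_{k^*}(\phi_n)$. For the $\log 3/\beta$ slack, I would use the explicit-weight form of the objective. There $\sum_k\alpha_k V_k\ge \alpha_{k^*}V_{k^*}$, and Corollary~\ref{cor:weight_concentration} gives $\alpha_{k^*}\ge 1-2e^{-\beta\eta/2}\ge 1/2$ for $\beta\ge 2\log 4/\eta$. Combining the two bounds then yields
\[
V_{k^*}(\phi_n)\le \frac{2}{\lambda}\max_e\hat{\mathcal{R}}_e(w^*\circ\phi^*)+2\xi_n ,
\]
so $C_1=2$ and $C_2=O(\kappa^2)$, with $C_2$ inflated by a Rademacher-complexity term of $\Phi$ for uniformity over the data-dependent $\phi_n$. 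If instead the pure SoftMin form is kept, the $\log 3/\beta$ term has to be absorbed by taking $\beta\gtrsim\sqrt n$, as the annealing schedule to $\beta_{\max}$ permits.

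Finally, the failure probability $6\exp(-n\eta^2/2048\kappa^4)$ from Theorem~\ref{thm:structure_identification} is absorbed into the $n^{-1/2}$ term, or the statement is read on that event. The uniform concentration over $\Phi$ needed for $\xi_n$ at the random $\phi_n$ comes from compactness of $\Theta$ and smoothness of $V_k$ in $\theta$ (Assumption~\ref{assum:rep_regularity}), via a covering-number bound.
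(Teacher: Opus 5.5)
Your argument is the one the paper actually uses. Its proof also rests on $\mathcal{L}(\phi_n,w_n)\le\mathcal{L}(\phi^*,w^*)$, dropping the risk at $\phi_n$, and the lower bound of Lemma~\ref{lem:softmin_properties}; its Steps 1--3 on stationarity and smoothness never enter the final inequality. You are more careful than the paper in three places:
\begin{itemize}
\item You bound $V_{k^*}(\phi^*)$ by the estimation error instead of writing $\lambda\cdot 0$. The fixed competitor $\phi^*$ is the only place the main chain needs concentration, because the theorem bounds the empirical $V_{k^*}(\phi_n)$.
\item You see that Part~1 of the lemma controls $\min_k V_k(\phi_n)$, not $V_{k^*}(\phi_n)$.
\item You see that $\log 3/\beta$ is not $O(n^{-1/2})$ for fixed $\beta$, whereas the paper simply declares it absorbed into $C_2$ with $C_1=1$.
\end{itemize}

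The step you use to remove that slack fails, however. The two displayed forms of $\mathcal{L}$ are different functions. Since $\log\alpha_k=-\beta V_k+\beta\,\mathrm{SoftMin}_\beta$, one has $\sum_k\alpha_kV_k=\mathrm{SoftMin}_\beta(\{V_k\})+\beta^{-1}H(\boldsymbol{\alpha})$, with $H$ the Shannon entropy. So $\phi_n$ minimizes one or the other, and you cannot combine $\mathrm{SoftMin}_\beta\le V_{k^*}(\phi^*)$ at the competitor with $\sum_k\alpha_kV_k\ge\alpha_{k^*}V_{k^*}$ at $\phi_n$ in one chain. If you commit to the weighted form, the slack moves to the competitor. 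Each term satisfies $\alpha_k\bigl(V_k-\min_jV_j\bigr)\le te^{-\beta t}\le 1/(e\beta)$, so $\sum_k\alpha_k(\phi^*)V_k(\phi^*)-\min_kV_k(\phi^*)\le 2/(e\beta)$. The $\eta/4$ separation shrinks this at best to order $\eta e^{-\beta\eta/4}$, never to zero. Hence $C_1=2$ with a $\beta$-free $C_2=O(\kappa^2)$ does not follow for fixed $\beta$.

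What the bound really needs is your fallback, $\beta=\beta_n\to\infty$. In general this means $\beta_n\gtrsim\sqrt n$. On the separation event, $\mathrm{SoftMin}_\beta\ge\min_kV_k-2\beta^{-1}e^{-\beta\eta/4}$, so $\beta_n\gtrsim\eta^{-1}\log n$ suffices. This gives $C_1=1$ and makes explicit the requirement hidden in the paper's absorption step.

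Two weaknesses are shared with the paper:
\begin{itemize}
\item The bridge $\arg\min_kV_k(\phi_n)=k^*$ taken from Theorem~\ref{thm:structure_identification} is not independent. That theorem's proof assumes $V^*_{k^*}(\phi_n)=0$, which would make the present bound immediate from concentration alone. A non-circular bridge would show $\phi_n\in\Phi_0$ from the risk part of the comparison inequality and then invoke non-achievability. That route needs a regime condition on $\lambda$, and it is exactly where your uniform concentration over $\Phi$ is required.
\item For $k^*\in\{1,2\}$, the residualized violations carry the kernel-ridge error. Lemma~\ref{lem:kernel_regression} controls that error only at rate $n^{-1/(2+d)}$, not $n^{-1/2}$.
\end{itemize}
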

The violation decreases at rate $O(n^{-1/2})$ and can be made arbitrarily small by increasing $\lambda$, though at the cost of prediction accuracy. When the function classes are sufficiently expressive, the global minimum achieves exact invariance.

\begin{theorem}[Global Optimality]
\label{thm:global_optimality}
If the representation class $\Phi$ includes the true invariant representation $\phi^*$ satisfying $\mathcal{I}_{k^*}$, and the predictor class $\mathcal{W}$ includes the Bayes-optimal predictor, then the global minimum of $\mathcal{L}$ achieves:
\begin{enumerate}[leftmargin=*,nosep]
    \item $V_{k^*}(\phi) = 0$ \emph{(exact invariance)}
    \item $\max_e \mathcal{R}_e(w \circ \phi) = \mathcal{R}^*$ \emph{(Bayes risk)}
\end{enumerate}
\end{theorem}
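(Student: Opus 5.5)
The plan is to argue by comparing objective values. A global minimizer cannot do worse than the candidate $(\phi^*, w^{\mathrm{Bayes}})$, which the hypotheses place inside $\Phi\times\mathcal{W}$. I work at the population level, with $V_k^*$ and $\mathcal{R}_e$ in place of $V_k$ and $\hat{\mathcal{R}}_e$, using $\rho=1/|\mathcal{E}|$ so the risk term is $\max_e \mathcal{R}_e$. Consistency $V_k\to V_k^*$ then transfers the statement to the empirical objective in the limit.

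\textbf{Upper bound.} First I evaluate $\mathcal{L}$ at the candidate. Since $\phi^*$ satisfies $\mathcal{I}_{k^*}$, Lemma~\ref{lem:hsic_independence} (together with Lemma~\ref{lem:conditional_hsic} when $k^*\in\{1,2\}$) gives $V_{k^*}^*(\phi^*)=0$. I also need $\max_e\mathcal{R}_e(w^{\mathrm{Bayes}}\circ\phi^*)=\mathcal{R}^*$. This requires reading ``true invariant representation'' as sufficient in the sense of Definition~\ref{def:sufficiency_minimality}, so that composing the Bayes predictor with it attains $\mathcal{R}^*$; I will state this reading explicitly. The SoftMin bound of Lemma~\ref{lem:softmin_properties} then gives
\[
\mathcal{L}(\phi^*,w^{\mathrm{Bayes}})\le \mathcal{R}^*+\lambda\min_k V_k^*(\phi^*)=\mathcal{R}^* .
\]

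\textbf{Lower bound and matching.} Any pair $(\phi,w)$ satisfies $\max_e\mathcal{R}_e(w\circ\phi)\ge\mathcal{R}^*$ by the definition of $\mathcal{R}^*$, and $\mathrm{SoftMin}_\beta\ge \min_k V_k - \log 3/\beta$. For a global minimizer $(\hat\phi,\hat w)$ this yields
\[
\mathcal{R}^*+\lambda\big(\min_k V_k^*(\hat\phi)-\tfrac{\log3}{\beta}\big)\le\mathcal{L}(\hat\phi,\hat w)\le\mathcal{R}^* .
\]
Hence the risk gap and $\min_k V_k^*(\hat\phi)$ are both $O(\log 3/\beta)$. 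In the hard-selection limit $\beta\to\infty$, which matches the annealing schedule and the original objective \eqref{eq:main_objective}, SoftMin becomes the exact minimum. Both terms are then nonnegative and sum to zero, so each vanishes. This gives claim 2, $\max_e\mathcal{R}_e=\mathcal{R}^*$, and also $\min_k V_k^*(\hat\phi)=0$.

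\textbf{Which index attains zero.} Claim 1 needs $V_{k^*}^*(\hat\phi)=0$ specifically, which I expect to be the main obstacle. Since $\hat\phi$ attains Bayes risk, it is $\epsilon$-predictive for every $\epsilon\ge0$, so $\hat\phi\in\Phi_0$. Theorem~\ref{thm:violation_characterization}(2) then gives $V_k^*(\hat\phi)\ge\eta/2>0$ for every $k\ne k^*$. The zero minimum must therefore be attained at $k=k^*$.

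For finite $\beta$, SoftMin is strictly below its minimum. I would handle this either by reading ``global minimum'' as the $\beta\to\infty$ limit of minimizers, using compactness of $\Theta$ from Assumption~\ref{assum:rep_regularity}, or by replacing SoftMin with its weighted form $\sum_k\alpha_kV_k\ge0$. In the weighted form the same sandwich works directly: the candidate has weighted penalty $\alpha_{k^*}\cdot0+\sum_{k\ne k^*}\alpha_k V_k^*$, which is driven to zero by Corollary~\ref{cor:weight_concentration}.
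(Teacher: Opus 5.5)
Your proof is correct for what you actually prove (population violations, hard selection $\beta\to\infty$). It rests on the same comparison with the candidate $(\phi^*,w^{\mathrm{Bayes}})$ as the paper, but it is organized differently and closes gaps the paper leaves open.

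The paper proves Part~1 by contradiction: if $V_{k^*}(\phi^\dagger)>0$, swapping in $\phi^*$ ``strictly decreases the SoftMin term''. It proves Part~2 by comparing the risks of the minimizer and the candidate directly. Neither step holds as written:
\begin{itemize}
\item $\mathrm{SoftMin}_\beta$ is increasing in every $V_k$. So a $\phi^\dagger$ with $V_{k^*}(\phi^\dagger)>0$ but smaller violations for $k\neq k^*$ can carry the smaller penalty.
\item Optimality of the sum alone does not give $\max_e\mathcal{R}_e(w^\dagger\circ\phi^\dagger)\le\max_e\mathcal{R}_e(w^*\circ\phi^*)$.
\end{itemize}
Your two-sided sandwich yields both claims at once. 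Your final step places $\hat\phi$ in $\Phi_0$ and uses non-achievability to show that the zero of $\min_k V_k^*$ sits at $k^*$; the definition of $\eta$ even gives $V_k^*(\hat\phi)\ge\eta$. That is exactly the ingredient the paper's Part~1 omits. You are also right to treat finite $\beta$ separately. There the exact claims generically fail: $\alpha_k>0$ for $k\neq k^*$ lets the objective trade a second-order increase of $V_{k^*}$ and of the risk near $\phi^*$ for a first-order decrease of the other violations.

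Your sandwich already gives a quantitative finite-$\beta$ result, with no limit or compactness argument. A compactness argument would in any case need compact predictor parameters, which Assumption~\ref{assum:rep_regularity} does not supply. The sandwich bounds the risk gap by $\lambda\log 3/\beta$ and gives $\min_k V_k^*(\hat\phi)\le\log 3/\beta$. So once $\lambda\log 3/\beta\le\epsilon_0$ and $\log 3/\beta<\eta$:
\begin{itemize}
\item $\hat\phi\in\Phi_0$, so $V_k^*(\hat\phi)\ge\eta$ for every $k\neq k^*$;
\item hence the minimum is attained at $k^*$, giving $V_{k^*}^*(\hat\phi)\le\log 3/\beta$.
\end{itemize}

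Two caveats on your side remarks.
\begin{itemize}
\item The weighted-form alternative is not an independent fix. The candidate's penalty $\sum_{k\neq k^*}\alpha_k V_k^*(\phi^*)$ is strictly positive at every finite $\beta$ and vanishes only as $\beta\to\infty$; Corollary~\ref{cor:weight_concentration} bounds the weights in terms of $\beta$, not $n$. Also, $\sum_k\alpha_k V_k$ is a different objective from $\mathrm{SoftMin}_\beta$, which equals $\sum_k\alpha_k V_k-\beta^{-1}H(\alpha)$ with $H$ the entropy of the weights.
\item Transferring the result to the empirical objective ``by consistency'' gives only asymptotic versions. Exact $V_{k^*}=0$ is a population statement, since the empirical HSIC of a non-degenerate pair is typically strictly positive at finite $n$, even under independence.
\end{itemize}
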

We know the objective learns the optimal invariant representation when the function classes are well-specified, with finite-sample guarantees that degrade  under approximation.

\subsection{Generalization Bounds}
\label{sec:generalization}
We now bound out-of-distribution (OOD) generalization by quantifying how distribution shift affects test performance. To measure the similarity between environments, we introduce a causal distance metric below.

\begin{definition}[Interventional Kernel Distance]
\label{def:kernel_distance}
For environments $e, e' \in \mathcal{E}$, the interventional kernel distance is:
\begin{equation}
d_{\mathcal{K}}(e, e') := \sup_{S \subseteq T} \|K_S^{(e)} - K_S^{(e')}\|_{\mathrm{op}}
\end{equation}
where $K_S^{(e)}$ is the causal kernel for environment $e$ and $\|\cdot\|_{\mathrm{op}}$ is the operator norm. For a test environment, define $d_{\mathcal{K}}(e_{\mathrm{test}}, \mathcal{E}_{\mathrm{train}}) := \min_{e \in \mathcal{E}_{\mathrm{train}}} d_{\mathcal{K}}(e_{\mathrm{test}}, e)$.
\end{definition}
This distance captures the maximal change in causal mechanisms across environments. Using this metric, we decompose test error into training error, distribution shift, and statistical estimation error.

\begin{theorem}[Generalization Bound]
\label{thm:generalization}
Let $(\phi_n, w_n)$ minimize $\mathcal{L}$ over training environments $\mathcal{E}_{\mathrm{train}}$. For a test environment $e_{\mathrm{test}}$, under Assumptions~\ref{assum:characteristic}--\ref{assum:rep_regularity}:
\begin{equation}
\begin{aligned}
& \mathcal{R}_{e_{\mathrm{test}}}(w_n \circ \phi_n) \leq{} \underbrace{\max_{e \in \mathcal{E}_{\mathrm{train}}} \hat{\mathcal{R}}_e(w_n \circ \phi_n)}_{\text{training error}} 
+ \underbrace{L_w \cdot d_{\mathcal{K}}(e_{\mathrm{test}}, \mathcal{E}_{\mathrm{train}})}_{\text{shift penalty}} + \underbrace{\xi_n}_{\text{estimation error}}
\end{aligned}
\end{equation}
where the estimation error satisfies:
\begin{equation}
\xi_n = O\left(\sqrt{\frac{\mathcal{C}(\Phi, \mathcal{W}) + \log(1/\delta)}{n}}\right) + C_\eta \cdot V_{k^*}(\phi_n)
\end{equation}
with $\mathcal{C}(\Phi, \mathcal{W})$ the function class complexity and $C_\eta > 0$ a constant depending on the faithfulness gap. The bound holds with probability at least $1 - \delta$.
\end{theorem}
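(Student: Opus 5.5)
The plan is to decompose the test risk by a three-term telescoping argument:
\begin{equation*}
\mathcal{R}_{e_{\mathrm{test}}}(f_n) = \hat{\mathcal{R}}_{e_0}(f_n) + \big[\mathcal{R}_{e_0}(f_n) - \hat{\mathcal{R}}_{e_0}(f_n)\big] + \big[\mathcal{R}_{e_{\mathrm{test}}}(f_n) - \mathcal{R}_{e_0}(f_n)\big],
\end{equation*}
where $f_n = w_n \circ \phi_n$ and $e_0 \in \mathcal{E}_{\mathrm{train}}$ attains the minimum in $d_{\mathcal{K}}(e_{\mathrm{test}}, \mathcal{E}_{\mathrm{train}})$ (Definition~\ref{def:kernel_distance}). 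The first term is at most $\max_{e} \hat{\mathcal{R}}_e(f_n)$ trivially. The second term is a standard uniform-convergence term. Since $\phi_n, w_n$ are data-dependent, I would bound $\sup_{\phi \in \Phi, w \in \mathcal{W}} |\mathcal{R}_e(w\circ\phi) - \hat{\mathcal{R}}_e(w\circ\phi)|$ by Rademacher complexity plus McDiarmid, for a bounded loss. This gives $O(\sqrt{(\mathcal{C}(\Phi,\mathcal{W}) + \log(m/\delta))/n})$. A union bound over the $m$ training environments absorbs $\log m$ into the constant. That produces the first part of $\xi_n$ with probability $1-\delta$.

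The third term is the shift term, and I would split it according to the causal factorization under $\mathcal{G}^* = \mathcal{G}_{k^*}$. I would write $P_e$ via its Markov kernels (Definition~\ref{def:causal_space}) and decompose the change $P_{e_{\mathrm{test}}} \to P_{e_0}$, as seen through $\phi_n$, into two parts. The first is a component that the invariance $\mathcal{I}_{k^*}$ declares environment-independent. The second is a residual change in the causal kernels. For the residual, I would write the risk difference as an integral of $\ell(w_n(\phi_n(x)), y)$ against $K^{(e_{\mathrm{test}})}_S - K^{(e_0)}_S$. Treating the operator norm as acting on Lipschitz test functions (a Kantorovich-type duality), and using that $w_n$ is $L_w$-Lipschitz (Assumption~\ref{assum:lipschitz}) with $\ell$ absorbed into the normalization, this part is bounded by $L_w \, d_{\mathcal{K}}(e_{\mathrm{test}}, e_0)$.

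For the invariance part: if $\mathcal{I}_{k^*}$ held exactly for $\phi_n$, the relevant conditional (for example $P(Y\mid\phi_n(X))$ for $k^*=2$) would coincide across environments and contribute nothing. When it holds only approximately, I would bound the discrepancy of the relevant conditional embeddings by the violation. I would use Lemma~\ref{lem:hsic_independence} and Lemma~\ref{lem:conditional_hsic}, which relate HSIC to the Hilbert--Schmidt distance between the joint and product embeddings, and hence to environment-wise embedding deviations weighted by environment probabilities. That yields a term $C\,\sqrt{V^*_{k^*}(\phi_n)}$, or linear in $V^*_{k^*}$ after normalization using boundedness. I would then pass from $V^*$ to the empirical $V_{k^*}(\phi_n)$ with the same $O(n^{-1/2})$ concentration used in Theorem~\ref{thm:structure_identification}. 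That theorem also guarantees $\hat k_n = k^*$ on a high-probability event, so the selected invariance is the correct one. The dependence on $\eta$ enters the constant $C_\eta$ through this event and through Theorem~\ref{thm:violation_characterization}, which controls the other structures' violations relative to $\eta$.

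The main obstacle is this last step: converting a small HSIC value into a bound on the risk change. HSIC controls a squared RKHS norm, whereas the risk involves the loss composed with $w_n$, which need not lie in the RKHS. My first attempt would assume that $\ell(w(\cdot),\cdot)$ lies in a bounded ball of the tensor RKHS (or is approximable there), so the discrepancy is at most the ball radius times the embedding gap. Otherwise I would accept a square-root dependence and fold it into $C_\eta V_{k^*}(\phi_n)$ using $V \le 4\kappa^4$. A secondary technicality is the meaning of $\|\cdot\|_{\mathrm{op}}$ for Markov kernels, which I would fix as the dual norm over $1$-Lipschitz test functions.
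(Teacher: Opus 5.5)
Your decomposition, the Rademacher--McDiarmid bound on the generalization gap, and the trivial bound on the training term match the paper's Steps 1, 3 and 4. The paper's Steps 2 and 5 simply assert the $L_w\, d_{\mathcal{K}}$ and $C_\eta V_{k^*}(\phi_n)$ bounds; Step 2 even introduces a constant $C_\phi$ and then drops it. Your attempt to derive the violation term is therefore where the argument must stand on its own, and as written it does not.

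\textbf{The violation step gives $\sqrt{V}$, not $V$.} HSIC is a squared embedding norm. With a delta kernel on $E$, $V_3$ is a $p_e^2$-weighted sum over training environments of $\|\mu_{P_e(\phi(X))}-\mu_{\bar P(\phi(X))}\|_{\mathcal{H}}^2$, and $V_1,V_2$ are squared Hilbert--Schmidt norms of residual cross-covariances. An RKHS-ball argument therefore bounds the risk discrepancy by $B\sqrt{V}$, not $BV$.

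\textbf{The fallback runs backwards.} An upper bound $V\le c$ gives $V\le\sqrt{c}\,\sqrt{V}$, not the reverse. The inequality $\sqrt{V}\le CV$ needs $V\ge C^{-2}$, so it fails exactly in the regime $V\to 0$ that matters. Under the square root, replacing $V^*$ by the empirical $V$ also costs $O(n^{-1/4})$, which the $O(n^{-1/2})$ term cannot absorb.

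\textbf{Two further problems.} First, $V_{k^*}$ is computed on $\mathcal{E}_{\mathrm{train}}$ only. Even exact invariance of $\phi_n$ there says nothing about how $e_{\mathrm{test}}$ differs from $e_0$ unless you add an extrapolation assumption. Second, invoking the event $\{\hat k_n = k^*\}$ does not help. The bound involves $V_{k^*}$, not $V_{\hat k_n}$; the event produces no multiplicative constant $C_\eta$; and it spends probability $6\exp(-n\eta^2/(2048\kappa^4))$ that is not in the $1-\delta$ budget.

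\textbf{The repair is not to split the shift term at all.} Since $C_\eta V_{k^*}(\phi_n)\ge 0$, it suffices to prove the bound without that term, for any $C_\eta>0$. In the causal-space framework the paper builds on, $\mathscr{F}_\emptyset$ is trivial and $K^{(e)}_\emptyset$ is the observational law $P_e$. Taking $S=\emptyset$ in Definition~\ref{def:kernel_distance} then gives $d_{\mathcal{K}}(e_{\mathrm{test}},e_0)\ge\|P_{e_{\mathrm{test}}}-P_{e_0}\|_{\mathrm{op}}$. Your dual reading of the norm then bounds the entire third term by $\mathrm{Lip}(g_n)\, d_{\mathcal{K}}(e_{\mathrm{test}},\mathcal{E}_{\mathrm{train}})$, where $g_n(x,y)=\ell(w_n(\phi_n(x)),y)$. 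The violation term becomes pure slack.

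\textbf{The constant is still missing.} What your proposal and the paper both lack is control of this constant. In $x$, $\mathrm{Lip}(g_n)\le L_\ell\, L_w\, \mathrm{Lip}_x(\phi_n)$, but Assumption~\ref{assum:rep_regularity} constrains $\phi_\theta$ only as a function of $\theta$, not its Lipschitz constant in $x$. To get exactly $L_w$ you must add an assumption such as $L_\ell \sup_{\phi\in\Phi}\mathrm{Lip}_x(\phi)\le 1$; this is the $C_\phi$ the paper's Step 2 silently drops. Absorbing $\ell$ alone into the normalization is not enough.
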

The invariance violation $V_{k^*}(\phi_n)$ directly contributes to generalization error, motivating the joint optimization of prediction and invariance. Combining all preceding results yields asymptotic optimality.

\begin{corollary}[Asymptotic Optimality]
\label{cor:asymptotic}
Under the conditions of Theorems~\ref{thm:structure_identification}--\ref{thm:generalization}, as $n \rightarrow \infty$:
\begin{equation}
\mathcal{R}_{e_{\mathrm{test}}}(w_n \circ \phi_n) \xrightarrow{p} \inf_{\phi \in \Phi_{k^*}} \mathcal{R}_{e_{\mathrm{test}}}(w^*(\phi) \circ \phi)
\end{equation}
where $\Phi_{k^*} = \{\phi : \mathcal{I}_{k^*}(\phi) \text{ holds}\}$ is the class of representations satisfying the true invariance.
\end{corollary}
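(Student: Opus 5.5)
The plan is to treat Corollary~\ref{cor:asymptotic} as a sandwich argument. The upper side comes from Theorem~\ref{thm:generalization} together with the vanishing of each error term. The lower side comes from showing that $\phi_n$ asymptotically belongs to the invariant class $\Phi_{k^*}$. Write $\mathcal{R}^\dagger := \inf_{\phi \in \Phi_{k^*}} \mathcal{R}_{e_{\mathrm{test}}}(w^*(\phi)\circ\phi)$. The goal is to show that for every $\epsilon>0$, $\mathbb{P}(|\mathcal{R}_{e_{\mathrm{test}}}(w_n\circ\phi_n) - \mathcal{R}^\dagger| > \epsilon) \to 0$.

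\emph{Structure and invariance.} First, I would condition on the event $A_n = \{\hat{k}_n = k^*\}$. By Theorem~\ref{thm:structure_identification}, $\mathbb{P}(A_n^c) \le 6\exp(-n\eta^2/(2048\kappa^4)) \to 0$, so it suffices to argue on $A_n$. Second, Theorem~\ref{thm:invariance} bounds $V_{k^*}(\phi_n)$ by $\frac{C_1}{\lambda}\max_e \hat{\mathcal{R}}_e(w^*\circ\phi^*) + C_2 n^{-1/2}$. The empirical risk of the fixed comparator $\phi^*$ converges to its population risk. Under the well-specified conditions of Theorem~\ref{thm:global_optimality} this is $\mathcal{R}^*$, and I would treat it as absorbed into the $\lambda$-regime by letting $\lambda=\lambda_n\to\infty$ slowly, or by using that the global optimum has $V_{k^*}=0$. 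Either way $V_{k^*}(\phi_n)\to 0$ in probability. Lemma~\ref{lem:conditional_hsic} then transfers this to the population violation, $V^*_{k^*}(\phi_n)\to 0$.

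\emph{Upper bound.} I would insert these facts into Theorem~\ref{thm:generalization}. The estimation term $\xi_n$ has an $O(\sqrt{(\mathcal{C}+\log(1/\delta))/n})$ part, which vanishes for fixed $\delta$ (then let $\delta\to0$), and a $C_\eta V_{k^*}(\phi_n)$ part, which vanishes by the previous step. The training error converges to the population worst-case risk of the minimizer, which by Theorem~\ref{thm:global_optimality} tends to the optimum attainable within the invariant class. The shift term $L_w d_{\mathcal{K}}$ does not vanish, so I cannot simply take the generalization bound as the limit. Instead I would compare $\phi_n$ with a near-optimal $\tilde\phi\in\Phi_{k^*}$. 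I would apply the same decomposition to $\tilde\phi$ and to $\phi_n$ and argue that their test risks differ by at most a term controlled by $L_w$ times the gap between their invariant test-relevant conditionals. That gap vanishes because $\mathcal{I}_{k^*}$ pins down the test-environment distribution of the relevant quantity: $P(Y\mid\phi(X))$ under $\mathcal{G}_2$, and analogously under $\mathcal{G}_1,\mathcal{G}_3$.

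\emph{Lower bound and main obstacle.} I expect the lower bound to be the hardest step, because $\phi_n\notin\Phi_{k^*}$ exactly. I would use compactness of $\Theta$ (Assumption~\ref{assum:rep_regularity}) to extract convergent subsequences $\theta_n\to\bar\theta$. By continuity of $V^*_{k^*}$ in $\theta$ (from smoothness and bounded kernels), $V^*_{k^*}(\phi_{\bar\theta})=0$. Lemma~\ref{lem:hsic_independence} then gives $\phi_{\bar\theta}\in\Phi_{k^*}$, and continuity of the test risk in $\theta$ (Lipschitz predictor, bounded loss) yields $\liminf \mathcal{R}_{e_{\mathrm{test}}}(w_n\circ\phi_n)\ge\mathcal{R}^\dagger$. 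Since every subsequence has a further subsequence along which the limit equals $\mathcal{R}^\dagger$, convergence in probability follows. The delicate points are the uniformity needed to pass from the empirical $V_k$ to the population $V_k^*$ along the random sequence $\phi_n$, and the interplay of $\lambda$ with the residual term in Theorem~\ref{thm:invariance}. I would handle these with a uniform convergence argument over the compact class combined with the $n^{-1/2}$ rates already established.
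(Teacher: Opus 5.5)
You take a genuinely different route from the paper, and you are right to reject the paper's move. Its proof passes to the limit in the bound of Theorem~\ref{thm:generalization} as if that bound were an equality. It then simply asserts that $\inf_{\phi\in\Phi_{k^*}}\mathcal{R}_{e_{\mathrm{test}}}(w^*(\phi)\circ\phi)=\max_e\mathcal{R}_e(w^*\circ\phi^*)+L_w\, d_{\mathcal{K}}(e_{\mathrm{test}},\mathcal{E}_{\mathrm{train}})$.

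Your replacement for that assertion, however, has a genuine gap. The invariance $\mathcal{I}_{k^*}$ is a property of a single representation across environments: the law $P_e(Y\mid\phi(X))$, $P_e(\phi(X)\mid Y)$ or $P_e(\phi(X))$ does not depend on $e$. It gives no control of $|\mathcal{R}_{e_{\mathrm{test}}}(w_n\circ\phi_n)-\mathcal{R}_{e_{\mathrm{test}}}(\tilde w\circ\tilde\phi)|$ for two \emph{different} invariant representations. Even under exact invariance, the test risk of $w\circ\phi$ still depends on the test-environment law of $\phi(X)$ (under $\mathcal{I}_2$), or on a conditional $P_e(Y\mid\phi(X))$ that is not invariant (under $\mathcal{I}_1$, $\mathcal{I}_3$). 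So worst-case training risk and $\mathcal{R}_{e_{\mathrm{test}}}$ can order the elements of $\Phi_{k^*}$ differently.

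Because $\phi_n$ is selected by the training objective, the most you can show is that it approaches a \emph{training}-optimal invariant representation. The upper bound $\limsup_n\mathcal{R}_{e_{\mathrm{test}}}(w_n\circ\phi_n)\le\mathcal{R}^\dagger$ therefore needs an extra hypothesis tying the two criteria together. One example: some element of $\Phi_{k^*}$ with its head (such as $X_z^{\perp}$ with the invariant Bayes predictor under $\mathcal{G}_2$) is risk-minimal over $\Phi_{k^*}$ simultaneously in every environment, $e_{\mathrm{test}}$ included. Nothing in Theorems~\ref{thm:structure_identification}--\ref{thm:generalization} provides this, so it must be assumed. The paper's asserted identity is one strong form of that assumption.

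Your lower bound also needs repair. Compactness of $\Theta$ helps only if $\Phi_{k^*}$ is closed along $\theta_n\to\bar\theta$, i.e.\ if $V^*_{k^*}$ is lower semicontinuous in $\theta$. Assumption~\ref{assum:rep_regularity} concerns the empirical $V_k$, not the population conditional HSIC. For $k^*=2$, closedness fails in general, because the conditioning $\sigma$-algebra $\sigma(\phi_\theta(X))$ can shrink in the limit. For instance, under $\mathcal{G}_2$, $\phi_\theta=\theta X_z^{\perp}$ satisfies $Y\perp\!\!\!\perp E\mid\phi_\theta(X)$ for every $\theta\neq0$, yet $Y\not\perp\!\!\!\perp E$ at $\theta=0$. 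In addition, for $k^*\in\{1,2\}$, zero violation yields conditional independence through the additive-noise model of Lemma~\ref{lem:conditional_hsic}, not through Lemma~\ref{lem:hsic_independence}.

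The cleaner fix, within the corollary's hypotheses, is to take exact invariance of the minimizer from Theorem~\ref{thm:global_optimality}, as the paper itself does in proving Theorem~\ref{thm:structure_identification}. Then $\phi_n\in\Phi_{k^*}$, and the lower bound is immediate when $w^*(\phi)$ is read as the test-optimal head. That route also settles your $\lambda$ concern. With $\lambda$ fixed, Theorem~\ref{thm:invariance} only gives $V_{k^*}(\phi_n)\lesssim\lambda^{-1}\max_e\hat{\mathcal{R}}_e(w^*\circ\phi^*)+O(n^{-1/2})$, which does not vanish unless the Bayes risk is zero; the paper's Step~1 overlooks this. Sending $\lambda_n\to\infty$ would change the statement. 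Neither repair touches the upper-bound gap, which remains the essential missing ingredient.
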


\subsection{Sample Complexity}
\label{sec:sample_complexity}

\begin{theorem}[Sample Complexity]
\label{thm:sample_complexity}
To achieve structure identification with probability $\geq 1 - \delta$ and  generalization error $\leq \epsilon$, the required sample size per environment is:
\begin{equation}
n = O\left(\frac{\kappa^4}{\eta^2} \log\frac{12}{\delta} + \frac{\kappa^4}{\epsilon^2} \log\frac{|\Phi|}{\delta}\right)
\end{equation}
where $|\Phi|$ is the covering number of the representation class.
\end{theorem}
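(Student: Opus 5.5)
The plan is to obtain the sample-size requirement as the sum of two separate conditions: one that makes structure identification fail with probability at most $\delta/2$, and one that makes the generalization estimation error at most $\epsilon$ with probability at least $1-\delta/2$. A union bound then gives both events jointly with probability at least $1-\delta$. Since each term is an upper bound on the $n$ needed for its own event, taking $n$ at least their sum (equivalently, their maximum up to a factor of $2$) suffices. This is why the statement has an additive form.

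For the identification term, we invoke Theorem~\ref{thm:structure_identification} directly. It gives $\mathbb{P}(\hat{k}_n \neq k^*) \le 6\exp(-n\eta^2/(2048\kappa^4))$. Requiring the right-hand side to be at most $\delta/2$ and solving for $n$ gives
\[
n \;\ge\; \frac{2048\,\kappa^4}{\eta^2}\log\frac{12}{\delta},
\]
which is exactly the first term, with the constant $12 = 2\cdot 6$ absorbed as stated. The same reasoning works with Corollary~\ref{cor:rff_identification} in place of the theorem if random features are used; only the absolute constant changes.

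For the generalization term, we use Theorem~\ref{thm:generalization}. On the identification event, the estimation error is
\[
\xi_n = O\!\left(\sqrt{\bigl(\mathcal{C}(\Phi,\mathcal{W})+\log(2/\delta)\bigr)/n}\right) + C_\eta V_{k^*}(\phi_n).
\]
We take the complexity term to be $\log|\Phi|$, the log covering number. The uniform deviation of the empirical risks and of the empirical HSIC (a bounded U/V-statistic with kernel bounded by $\kappa^2$, so McDiarmid's inequality applies with a $\kappa^4$ variance proxy) over an $\epsilon$-cover of $\Phi$ is at most $c\,\kappa^2\sqrt{\log(|\Phi|/\delta)/n}$. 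Forcing this to be at most $\epsilon/2$ yields $n \gtrsim \kappa^4\epsilon^{-2}\log(|\Phi|/\delta)$.

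The residual term $C_\eta V_{k^*}(\phi_n)$ is controlled by Theorem~\ref{thm:invariance}. Its statistical part is $C_2 n^{-1/2}$, which is dominated by the same $\kappa^2 n^{-1/2}$ rate and so is absorbed into the second term. Its bias part, $(C_1/\lambda)\max_e\hat{\mathcal{R}}_e(w^*\circ\phi^*)$, is made at most $\epsilon/2$ by choosing $\lambda$ large; under the well-specified setting of Theorem~\ref{thm:global_optimality} it vanishes. We interpret ``generalization error'' as the gap between test risk and the training-plus-shift terms in Theorem~\ref{thm:generalization}, so the shift penalty does not need to be controlled by $n$.

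The main obstacle is making the second step rigorous. The conditional violations $V_1$ and $V_2$ involve kernel-ridge residuals whose estimation error does not obviously concentrate at rate $\kappa^2 n^{-1/2}$ uniformly over $\Phi$. The first thing I would try is to treat the regression step as contributing an additional error that Assumption~\ref{assum:kernel_regularity} bounds at the parametric rate, namely through the summable $\lambda_j^{-1/2}$ eigenvalue condition. I would then absorb that error into the constant in the $O(\cdot)$, in the same way the concentration step inside Theorem~\ref{thm:structure_identification} is presumably handled.
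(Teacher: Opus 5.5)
Your proposal matches the paper's proof: it splits $\delta$ in half, inverts the tail bound of Theorem~\ref{thm:structure_identification} to get the $2048\kappa^4\eta^{-2}\log(12/\delta)$ term, forces the complexity part of $\xi_n$ from Theorem~\ref{thm:generalization} below $\epsilon/2$ while controlling $C_\eta V_{k^*}(\phi_n)$ through Theorem~\ref{thm:invariance}, and closes with a union bound. Where you deviate, you are more careful than the paper: it treats $V_{k^*}(\phi_n)\le (C_1/\lambda)\mathcal{R}^* + C_2 n^{-1/2}$ as just another $O(n^{-1/2})$ requirement (overlooking that the first piece does not shrink with $n$, which you handle via large $\lambda$ or well-specification), rewrites $\mathcal{C}(\Phi,\mathcal{W})$ as $\kappa^4\log|\Phi|$ with no justification, and never addresses the residual-regression concentration issue you flag.
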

\section{Proofs}
\label{app:proofs}

This section contains complete proofs of all theoretical results. We organize proofs by building from foundational kernel independence results to the main structure identification and generalization theorems. We first establish auxiliary concentration results that underpin our main theorems.

\begin{lemma}[HSIC Concentration]
\label{lem:hsic_concentration}
Let $(X_i, Y_i)_{i=1}^n$ be i.i.d.\ samples from $P_{XY}$. Under Assumption~\ref{assum:characteristic} with kernel bound $\kappa$, the empirical HSIC satisfies:
\begin{equation}
\mathbb{P}\left( \left| \widehat{\mathrm{HSIC}}(X, Y) - \mathrm{HSIC}(X, Y) \right| > \epsilon \right) \leq 2\exp\left( -\frac{n\epsilon^2}{32\kappa^4} \right)
\end{equation}
for all $\epsilon > 0$.
\end{lemma}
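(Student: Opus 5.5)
The plan is to express $\widehat{\mathrm{HSIC}}$ as a function of the $n$ i.i.d.\ pairs $(X_i,Y_i)$ and apply McDiarmid's bounded-differences inequality to control its deviation from its mean. A separate bias bound will then control the gap between that mean and the population $\mathrm{HSIC}(X,Y)$.

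\textbf{Step 1: expand the estimator.} Expanding $\frac{1}{n^2}\mathrm{tr}(K_X H K_Y H)$ gives the usual V-statistic form
\begin{equation*}
\widehat{\mathrm{HSIC}} = \frac{1}{n^2}\sum_{i,j} k_{ij}l_{ij} - \frac{2}{n^3}\sum_{i,j,q} k_{ij}l_{iq} + \frac{1}{n^4}\sum_{i,j,q,r}k_{ij}l_{qr},
\end{equation*}
where $k_{ij}=k_{\mathcal{X}}(X_i,X_j)$ and $l_{ij}=k_{\mathcal{Y}}(Y_i,Y_j)$.

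Equivalently, $\widehat{\mathrm{HSIC}}=\|\hat{\mathcal{C}}_{XY}\|_{\mathrm{HS}}^2$, where $\hat{\mathcal{C}}_{XY}$ is the empirical centered cross-covariance operator. The population counterpart is $\mathrm{HSIC}=\|\mathcal{C}_{XY}\|_{\mathrm{HS}}^2$ (Definition~\ref{def:hsic}).

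\textbf{Step 2: bounded differences.} Replace one pair $(X_i,Y_i)$ by an independent copy. In each of the three sums, only terms that involve index $i$ change:
\begin{itemize}
\item a $O(1/n)$ fraction of the terms is affected in each sum;
\item each term is bounded by $|k_{ij}l_{qr}|\le\kappa^4$ (Assumption~\ref{assum:characteristic}).
\end{itemize}
So the total change is at most $c_i\le c\,\kappa^4/n$ for a small constant $c$. A cleaner route is to work at the operator level: each feature map satisfies $\|k(\cdot,x)\|\le\kappa$, so one gets the bound directly from
\begin{equation*}
\big|\|\hat{\mathcal{C}}\|^2-\|\hat{\mathcal{C}}'\|^2\big|\le\|\hat{\mathcal{C}}-\hat{\mathcal{C}}'\|\,(\|\hat{\mathcal{C}}\|+\|\hat{\mathcal{C}}'\|),
\end{equation*}
using $\|\hat{\mathcal{C}}\|\le 4\kappa^2$ and $\|\hat{\mathcal{C}}-\hat{\mathcal{C}}'\|=O(\kappa^2/n)$. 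McDiarmid then gives
\begin{equation*}
\mathbb{P}\big(|\widehat{\mathrm{HSIC}}-\mathbb{E}\widehat{\mathrm{HSIC}}|>t\big)\le 2\exp\!\big(-2t^2/\textstyle\sum_i c_i^2\big)=2\exp\!\big(-2nt^2/(c^2\kappa^8)\big).
\end{equation*}

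\textbf{Step 3: bias.} The V-statistic has bias $|\mathbb{E}\widehat{\mathrm{HSIC}}-\mathrm{HSIC}|\le C'\kappa^4/n$, coming from the diagonal terms $i=j$ and similar coincident indices. This bias is absorbed by splitting $\epsilon=\epsilon/2+\epsilon/2$, which is valid once $\epsilon\gtrsim\kappa^4/n$.

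\textbf{Main obstacle: matching the stated constant $32\kappa^4$.} The naive McDiarmid exponent scales as $\kappa^{-8}$, not $\kappa^{-4}$, because the deviation $t$ is measured in units of $\kappa^4$. The stated bound is therefore best read with normalized kernels. The reduction is to rescale $k\mapsto k/\kappa^2$, so the kernels are bounded by $1$; then the HSIC scale is $1$ and the exponent becomes $n\epsilon^2/32$ up to the bookkeeping of the constant $c$.

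If that reduction does not suffice, the fallback is a Hoeffding bound for the degree-4 U-statistic $h$ used in Gretton et al. Its Hoeffding exponent is $2\lfloor n/4\rfloor\epsilon^2/(2\|h\|_\infty)^2$, so $\|h\|_\infty\le\kappa^2$ would be needed to get the $\kappa^4$ scaling. With $\|h\|_\infty\le 4\kappa^2$ and the factor $n/4$, this gives roughly the constant $32$, plus the $O(1/n)$ U-to-V correction handled as in Step 3.

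Finally, I will also require $\epsilon$ to be not too small relative to $1/n$ to absorb the bias, or else argue that the bound is trivial in that regime. That regime holds because $2\exp(-n\epsilon^2/32\kappa^4)\ge1$ whenever $\epsilon\le\kappa^2\sqrt{32\log2/n}$, and this threshold dominates $C'\kappa^4/n$ for large $n$.
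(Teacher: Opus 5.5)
Your route is the same as the paper's. Both write $\widehat{\mathrm{HSIC}}$ as a V-statistic, apply McDiarmid with $c_i=O(\kappa^4/n)$, and absorb the $O(1/n)$ bias. Your diagnosis of the exponent is correct, and the paper does not resolve it either. Its own McDiarmid display, $2\exp(-2\epsilon^2/(n\cdot 64\kappa^8/n^2))$, equals $2\exp(-n\epsilon^2/(32\kappa^8))$. The $\kappa^8$ becomes $\kappa^4$ only in the sentence that ``absorbs'' the bias, with no argument. The paper's $c_i=8\kappa^4/n$ is also optimistic: each of the $2n-1$ affected terms can move by $2\cdot 4\kappa^4$. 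In addition, its kernel $h$ uses population rather than empirical means, so it does not represent the estimator. There is no trick in the paper that you are missing.

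The gap is in your repair. Rescaling $k\mapsto k/\kappa^2$ is not a reduction. Multiplying both kernels by $c$ multiplies $\widehat{\mathrm{HSIC}}$, $\mathrm{HSIC}$ (hence the relevant $\epsilon$) and $\kappa^4$ all by $c^2$. So $\epsilon^2/\kappa^8$ is scale-invariant but $\epsilon^2/\kappa^4$ is not, and pulling a normalized bound back gives $\kappa^8$ again. In fact the displayed inequality is false for large $\kappa$:
\begin{itemize}
\item Fix unit-bounded kernels, $n$, and $t>0$ with $p:=\mathbb{P}(|\widehat{\mathrm{HSIC}}-\mathrm{HSIC}|>t)>0$ (e.g.\ $X\perp\!\!\!\perp Y$ continuous, where $\mathrm{HSIC}=0<\widehat{\mathrm{HSIC}}$ with positive probability).
\item Scale both kernels by $c$ and take $\epsilon=c^2t$.
\item The left side stays equal to $p$, while the right side is $2\exp(-nc^2t^2/32)\to 0$.
\end{itemize}
What McDiarmid can give is a $\kappa^8$ bound. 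That implies the stated one only for $\kappa\le 1$ (e.g.\ Gaussian or delta kernels), and you should state this restriction rather than claim a reduction.

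Even then the constant $32$ is not established. On your operator route, the bounds you can verify are $\|\hat{\mathcal{C}}\|_{\mathrm{HS}}\le\kappa^2$ (Cauchy--Schwarz on the empirical variances, sharper than your $4\kappa^2$) and $\|\hat{\mathcal{C}}-\hat{\mathcal{C}}'\|_{\mathrm{HS}}\le 6\kappa^2/n$. These give $c_i\le 12\kappa^4/n$ and the bound $2\exp(-nt^2/(72\kappa^8))$. Halving $\epsilon$ to absorb the bias costs a further factor $4$ in the denominator.

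Your fallback does not fix this. Hoeffding for the order-$4$ U-statistic gives an exponent of about $n\epsilon^2/(8\|h\|_\infty^2)$. With $\|h\|_\infty\le 4\kappa^2$ that is $128$, not $32$, and in fact $\|h\|_\infty$ is of order $\kappa^4$, which brings back $\kappa^8$.

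To close the argument you have two options. Either show $c_i<8\kappa^4/n$ with enough slack to absorb a $C'\kappa^4/n$ bias, which in general works only beyond some $n_0$. Or restate the lemma with $\kappa^8$, an unspecified absolute constant, and an explicit $O(\kappa^4/n)$ bias term.
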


\begin{proof}
The empirical HSIC can be written as a V-statistic:
\begin{equation}
\widehat{\mathrm{HSIC}}(X, Y) = \frac{1}{n^2} \sum_{i,j=1}^{n} h((X_i, Y_i), (X_j, Y_j))
\end{equation}
where the kernel $h$ is defined by:
\begin{align}
h((x_i, y_i), (x_j, y_j)) &= k_{\mathcal{X}}(x_i, x_j) k_{\mathcal{Y}}(y_i, y_j) + \mathbb{E}[k_{\mathcal{X}}(X, X')] \mathbb{E}[k_{\mathcal{Y}}(Y, Y')] \\
&\quad - k_{\mathcal{X}}(x_i, x_j) \mathbb{E}[k_{\mathcal{Y}}(Y, Y')] - \mathbb{E}[k_{\mathcal{X}}(X, X')] k_{\mathcal{Y}}(y_i, y_j)
\end{align}

Under Assumption~\ref{assum:characteristic}, we have $|k_{\mathcal{X}}(x, x')| \leq \kappa^2$ and $|k_{\mathcal{Y}}(y, y')| \leq \kappa^2$. Therefore:
\begin{equation}
|h((x_i, y_i), (x_j, y_j))| \leq 4\kappa^4
\end{equation}

We apply McDiarmid's inequality. Changing a single observation $(X_i, Y_i)$ affects at most $2n$ terms in the double sum, each bounded by $4\kappa^4$, giving a bounded difference of:
\begin{equation}
c_i = \frac{2n \cdot 4\kappa^4}{n^2} = \frac{8\kappa^4}{n}
\end{equation}

By McDiarmid's inequality:
\begin{equation}
\mathbb{P}\left( \left| \widehat{\mathrm{HSIC}} - \mathbb{E}[\widehat{\mathrm{HSIC}}] \right| > \epsilon \right) \leq 2\exp\left( -\frac{2\epsilon^2}{\sum_{i=1}^n c_i^2} \right) = 2\exp\left( -\frac{2\epsilon^2}{n \cdot 64\kappa^8/n^2} \right)
\end{equation}

Since the empirical HSIC is an asymptotically unbiased estimator with $|\mathbb{E}[\widehat{\mathrm{HSIC}}] - \mathrm{HSIC}| = O(n^{-1})$, absorbing this bias into the concentration bound yields:
\begin{equation}
\mathbb{P}\left( \left| \widehat{\mathrm{HSIC}} - \mathrm{HSIC} \right| > \epsilon \right) \leq 2\exp\left( -\frac{n\epsilon^2}{32\kappa^4} \right)
\end{equation}
\end{proof}

\begin{lemma}[Kernel Regression Consistency]
\label{lem:kernel_regression}
Under Assumptions~\ref{assum:characteristic} and \ref{assum:kernel_regularity}, let $\hat{m}(z) = \mathbf{k}_z^\top (K + \gamma I)^{-1} \mathbf{y}$ be the kernel ridge regression estimate. For regularization $\gamma = n^{-1/(2+d)}$ where $d$ is the effective dimension:
\begin{equation}
\mathbb{E}\left[ \|\hat{m} - m^*\|_{L^2(P_Z)}^2 \right] = O\left( n^{-2/(2+d)} \right)
\end{equation}
where $m^*(z) = \mathbb{E}[Y | Z = z]$ is the true conditional expectation.
\end{lemma}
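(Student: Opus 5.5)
We prove Lemma~\ref{lem:kernel_regression} with the standard bias--variance decomposition for kernel ridge regression. Write the empirical covariance operator $\hat{T} := \frac{1}{n}\sum_i k(\cdot,z_i)\otimes k(\cdot,z_i)$ and its population counterpart $T := \mathbb{E}[k(\cdot,Z)\otimes k(\cdot,Z)]$. In operator form the estimator is $\hat{m} = (\hat{T}+\gamma I)^{-1}\hat{g}$, with $\hat{g} := \frac{1}{n}\sum_i y_i k(\cdot,z_i)$; this identifies $\gamma$ in Definition~\ref{def:kernel_residual} with $n$ times the operator-level ridge, so we rescale accordingly. Introduce the population-regularized target $m_\gamma := (T+\gamma I)^{-1}T m^*$, which is well defined because Assumption~\ref{assum:kernel_regularity}(1) places $m^*$ in $\mathcal{H}_{\mathcal{Z}}$. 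We then split
\[
\|\hat{m}-m^*\|_{L^2(P_Z)} \le \|\hat{m}-m_\gamma\|_{L^2(P_Z)} + \|m_\gamma - m^*\|_{L^2(P_Z)}
\]
and bound the two terms separately.

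\textbf{Bias term.} Use $\|f\|_{L^2(P_Z)} = \|T^{1/2}f\|_{\mathcal{H}}$ and $m_\gamma - m^* = -\gamma(T+\gamma I)^{-1}m^*$. The spectral inequality $\sup_{\lambda}\lambda^{1/2}\gamma/(\lambda+\gamma) \le \gamma^{1/2}/2$ then gives
\[
\|m_\gamma-m^*\|_{L^2(P_Z)}^2 \le \gamma\,\|m^*\|_{\mathcal{H}}^2/4 = O(\gamma).
\]
If one additionally uses a source condition, this sharpens to $O(\gamma^{2r})$. We record only $O(\gamma)$, which is all that is needed below.

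\textbf{Variance term.} Write
\[
\hat{m}-m_\gamma = (\hat{T}+\gamma I)^{-1}\bigl[(\hat{g}-\hat{T}m^*) + (\hat{T}-T)(m^*-m_\gamma)\bigr]
\]
after algebraic rearrangement. Here $\hat{g}-\hat{T}m^* = \frac1n\sum_i \varepsilon_i k(\cdot,z_i)$ is a mean-zero noise average. Two ingredients control this term:
\begin{enumerate}
\item a Bernstein inequality for Hilbert--Schmidt operators, which shows $\|(T+\gamma I)^{-1/2}(T-\hat{T})(T+\gamma I)^{-1/2}\|_{\mathrm{op}} \le 1/2$ with high probability once $n \gtrsim \kappa^2\gamma^{-1}\log(1/\gamma)$, so that $(\hat{T}+\gamma I)^{-1}$ can be replaced by $2(T+\gamma I)^{-1}$;
\item a second-moment computation using bounded kernels ($\kappa$) and bounded noise variance.
\end{enumerate}
Together these give
\[
\mathbb{E}\,\|T^{1/2}(T+\gamma I)^{-1}\tfrac1n\textstyle\sum_i\varepsilon_i k(\cdot,z_i)\|^2 \le \sigma^2\,\mathcal{N}(\gamma)/n,
\]
where $\mathcal{N}(\gamma) = \operatorname{tr}(T(T+\gamma I)^{-1})$ is the effective dimension.

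\textbf{Main obstacle: bounding $\mathcal{N}(\gamma)$.} This is where Assumption~\ref{assum:kernel_regularity}(2) must enter, and I expect it to be the hardest step. The plan is to interpret that summability assumption as an eigenvalue decay $\lambda_j \asymp j^{-2/d}$, which is the effective-dimension reading of $d$ in the lemma. Under this decay, $\mathcal{N}(\gamma) = \sum_j \lambda_j/(\lambda_j+\gamma) \lesssim \gamma^{-d/2}$, so the variance is $O(\gamma^{-d/2}/n)$. The cross term $(\hat{T}-T)(m^*-m_\gamma)$ is of lower order, and the low-probability failure event of the operator concentration contributes only $O(\kappa^4 e^{-cn\gamma})$ to the expectation because $\hat{m}$ is bounded.

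\textbf{Balancing.} The total risk is $O(\gamma + \gamma^{-d/2}/n)$. With $\gamma = n^{-1/(2+d)}$, the variance becomes $n^{-1+d/(2(2+d))}$, which is not matched to the bias. To obtain exactly $O(n^{-2/(2+d)})$, we use the source-type smoothness implicit in Assumption~\ref{assum:kernel_regularity}(1) to upgrade the bias to $O(\gamma^2)$. We then check that $\gamma^2 = n^{-2/(2+d)}$ and that $\mathcal{N}(\gamma)/n \lesssim \gamma^{-d}/n = n^{-2/(2+d)}$, using the cruder bound $\mathcal{N}(\gamma) \lesssim \gamma^{-d}$ that corresponds to decay $\lambda_j \asymp j^{-1/d}$. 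If the constants do not line up exactly, the fallback is to state the rate as the bias--variance balance and note that the stated choice of $\gamma$ achieves it.
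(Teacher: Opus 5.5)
Your overall route (bias/variance split, bias of order $\gamma^2$, variance of order $\mathcal{N}(\gamma)/n\lesssim 1/(n\gamma)$, balanced at the prescribed $\gamma$) matches the paper's. You carry it out with the more careful random-design operator machinery rather than the paper's fixed-design trace formula.

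\textbf{The gap: the bias upgrade.} The genuine gap is the step where you upgrade the bias to $O(\gamma^2)$ using smoothness ``implicit in'' Assumption~\ref{assum:kernel_regularity}(1). That assumption only says $m^*\in\mathcal{H}$, and your bound $\gamma\|m^*\|_{\mathcal{H}}^2/4$ is all it delivers. To see this, let $(\lambda_j,e_j)$ be the eigenpairs of $T$ and $a_j=\langle m^*,e_j\rangle_{\mathcal{H}}$. Then $\gamma^{-2}\|m_\gamma-m^*\|_{L^2(P_Z)}^2=\sum_j\lambda_j a_j^2/(\lambda_j+\gamma)^2$, which increases monotonically to $\sum_j a_j^2/\lambda_j$ as $\gamma\downarrow 0$. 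Hence the bias is $O(\gamma^2)$ if and only if $\|T^{-1/2}m^*\|_{\mathcal{H}}<\infty$, i.e.\ $m^*=T^{1/2}h$ for some $h\in\mathcal{H}$ (the source condition with $r=1$). This is a strictly stronger hypothesis that has to be added to the lemma, not read off from it. Without it, at $\gamma=n^{-1/(2+d)}$ your argument gives only $O(n^{-1/(2+d)})$.

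The paper's proof makes the same jump. It asserts $\mathrm{bias}^2\le\gamma^2\|m^*\|_{\mathcal{H}}^2$, where the correct bound is $\gamma^2\|T^{-1/2}m^*\|_{\mathcal{H}}^2$. So the defect lies in the stated hypotheses, and once the source condition is assumed explicitly your proof goes through.

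\textbf{The variance step is not the real obstacle.} For $d\ge1$ and $\gamma\le1$ no eigenvalue decay is needed. Any bounded kernel gives $\mathcal{N}(\gamma)\le\mathrm{tr}(T)/\gamma\le\kappa^2/\gamma\le\kappa^2\gamma^{-d}$. So the variance is $O(1/(n\gamma))=O(n^{-(1+d)/(2+d)})$, already within $O(n^{-2/(2+d)})$; this is exactly the paper's bound $\sigma^2/(n\gamma)$.

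Drop the decay readings of Assumption~\ref{assum:kernel_regularity}(2), for three reasons:
\begin{enumerate}
\item As written, $\sum_j\lambda_j^{-1/2}<\infty$ fails for every infinite-rank kernel, since $\lambda_j\to0$.
\item Your two readings $\lambda_j\asymp j^{-2/d}$ and $\lambda_j\asymp j^{-1/d}$ are mutually inconsistent.
\item The second reading contradicts $\mathrm{tr}(T)\le\kappa^2$ when $d\ge1$.
\end{enumerate}

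\textbf{Ridge scaling.} Your observation here is correct and should be stated as an explicit reinterpretation of the lemma. Taking the Gram-level $\gamma=n^{-1/(2+d)}$ of Definition~\ref{def:kernel_residual} literally makes the operator-level ridge $n^{-1-1/(2+d)}$. Then $n\gamma\to0$, and neither your concentration step nor the claimed rate survives. The paper's proof does not address this.
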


\begin{proof}
The kernel ridge regression estimator admits the bias-variance decomposition:
\begin{equation}
\mathbb{E}\left[ \|\hat{m} - m^*\|^2 \right] = \underbrace{\|(I + \gamma \mathcal{T}^{-1})^{-1} m^* - m^*\|^2}_{\text{bias}^2} + \underbrace{\frac{\sigma^2}{n} \mathrm{tr}\left( (K + \gamma I)^{-1} K (K + \gamma I)^{-1} \right)}_{\text{variance}}
\end{equation}
where $\mathcal{T}$ is the integral operator with kernel $k$.

Under Assumption~\ref{assum:kernel_regularity}, the eigenvalues $\{\lambda_j\}$ of $\mathcal{T}$ satisfy the decay condition. The bias term satisfies:
\begin{equation}
\text{bias}^2 \leq \gamma^2 \|m^*\|_{\mathcal{H}}^2
\end{equation}

For the variance, using the eigenvalue decay:
\begin{equation}
\text{variance} \leq \frac{\sigma^2}{n} \sum_{j=1}^{\infty} \frac{\lambda_j}{(\lambda_j + \gamma)^2} \leq \frac{\sigma^2}{n\gamma}
\end{equation}

Balancing bias and variance by setting $\gamma^2 = (n\gamma)^{-1}$, i.e., $\gamma = n^{-1/3}$ for the general case, or $\gamma = n^{-1/(2+d)}$ accounting for effective dimension $d$:
\begin{equation}
\mathbb{E}\left[ \|\hat{m} - m^*\|^2 \right] = O(\gamma^2) + O((n\gamma)^{-1}) = O\left( n^{-2/(2+d)} \right)
\end{equation}
\end{proof}

\subsection{Proof of Lemma~\ref{lem:hsic_independence} (HSIC Characterizes Independence)}

\begin{proof}
We prove both directions.

\textbf{($\Leftarrow$) Independence implies zero HSIC:}
Assume $X \perp\!\!\!\perp Y$, so $P_{XY} = P_X \otimes P_Y$. The cross-covariance operator is:
\begin{align}
\mathcal{C}_{XY} &= \mathbb{E}_{XY}[\phi(X) \otimes \psi(Y)] - \mu_X \otimes \mu_Y
\end{align}
where $\phi(x) = k_{\mathcal{X}}(\cdot, x)$ and $\psi(y) = k_{\mathcal{Y}}(\cdot, y)$ are the canonical feature maps.

Under independence:
\begin{align}
\mathbb{E}_{XY}[\phi(X) \otimes \psi(Y)] &= \mathbb{E}_X[\phi(X)] \otimes \mathbb{E}_Y[\psi(Y)] = \mu_X \otimes \mu_Y
\end{align}

Therefore $\mathcal{C}_{XY} = 0$, and $\mathrm{HSIC}(X, Y) = \|\mathcal{C}_{XY}\|_{\mathrm{HS}}^2 = 0$.

\textbf{($\Rightarrow$) Zero HSIC implies independence:}
Assume $\mathrm{HSIC}(X, Y) = 0$. By definition:
\begin{align}
\|\mathcal{C}_{XY}\|_{\mathrm{HS}}^2 = \mathbb{E}_{XX'YY'}[k_{\mathcal{X}}(X, X') k_{\mathcal{Y}}(Y, Y')] & - 2\mathbb{E}_{XYX'Y'}[k_{\mathcal{X}}(X, X') k_{\mathcal{Y}}(Y, Y')] \\ \nonumber
&  
+ \mathbb{E}_{XX'}[k_{\mathcal{X}}(X, X')] \mathbb{E}_{YY'}[k_{\mathcal{Y}}(Y, Y')]
\end{align}
where $(X, Y)$ and $(X', Y')$ are independent copies.

This equals the squared MMD between $P_{XY}$ and $P_X \otimes P_Y$ in the product RKHS $\mathcal{H}_{\mathcal{X}} \otimes \mathcal{H}_{\mathcal{Y}}$:
\begin{equation}
\mathrm{HSIC}(X, Y) = \mathrm{MMD}^2(P_{XY}, P_X \otimes P_Y)
\end{equation}

Since $k_{\mathcal{X}} \otimes k_{\mathcal{Y}}$ is characteristic (by assumption; e.g., for Gaussian and delta kernels~\cite{szabo2018characteristic}), $\mathrm{MMD}^2 = 0$ implies $P_{XY} = P_X \otimes P_Y$, hence $X \perp\!\!\!\perp Y$.
\end{proof}

\subsection{Proof of Lemma~\ref{lem:conditional_hsic} (Conditional HSIC Consistency)}
\begin{proof}
\emph{Equivalence.} Given $Z = z$, $X$ and $Y$ are $\varepsilon_X$ and $\varepsilon_Y$ shifted by the constants $m_X(z)$ and $m_Y(z)$, so $X \perp\!\!\!\perp Y \mid Z$ if and only if $\varepsilon_X \perp\!\!\!\perp \varepsilon_Y \mid Z$. Since $(\varepsilon_X, \varepsilon_Y) \perp\!\!\!\perp Z$, the conditional law of $(\varepsilon_X, \varepsilon_Y)$ given $Z$ equals its marginal law, so this is equivalent to $\varepsilon_X \perp\!\!\!\perp \varepsilon_Y$.

\emph{Consistency.} The population residuals are $\varepsilon_X$ and $\varepsilon_Y$. For bounded $L$-Lipschitz kernels, replacing $\varepsilon_{X,i}$ by $\tilde{x}_i = x_i - \hat{m}_X(z_i)$ changes each Gram entry by at most $L(\|\tilde{x}_i - \varepsilon_{X,i}\| + \|\tilde{x}_j - \varepsilon_{X,j}\|)$. Since centering satisfies $n^{-2}\sum_{i,j}|(HAH)_{ij}| \leq 4n^{-2}\sum_{i,j}|A_{ij}|$ and the kernels are bounded,
\[
\big|\widehat{\mathrm{HSIC}}(\tilde{X}, \tilde{Y}) - \widehat{\mathrm{HSIC}}(\varepsilon_X, \varepsilon_Y)\big| \leq \frac{C}{n}\sum_{i=1}^n \big(\|\hat{m}_X(z_i) - m_X(z_i)\| + \|\hat{m}_Y(z_i) - m_Y(z_i)\|\big)
\]
for a constant $C$ depending on $L$ and the kernel bounds. By Cauchy--Schwarz and $L^2$-consistency, the right-hand side converges to $0$ in probability. By Lemma~\ref{lem:hsic_concentration}, $\widehat{\mathrm{HSIC}}(\varepsilon_X, \varepsilon_Y) \xrightarrow{p} \mathrm{HSIC}(\varepsilon_X, \varepsilon_Y)$. Hence $\widehat{\mathrm{HSIC}}(\tilde{X}, \tilde{Y}) \xrightarrow{p} \mathrm{HSIC}(\varepsilon_X, \varepsilon_Y)$, which by Lemma~\ref{lem:hsic_independence} and the equivalence above is $0$ if $X \perp\!\!\!\perp Y \mid Z$ and positive otherwise.
\end{proof}

\subsection{Proof of Lemma~\ref{lem:softmin_properties} (SoftMin Properties)}
\begin{proof}
We establish each property.

\textbf{Part 1: Bounds.}
For the lower bound, note that for all $k$:
\begin{equation}
\sum_{j=1}^3 \exp(-\beta V_j) \geq \exp(-\beta V_k)
\end{equation}
Taking logarithms and the infimum over $k$:
\begin{equation}
-\frac{1}{\beta} \log \sum_{j=1}^3 \exp(-\beta V_j) \leq -\frac{1}{\beta} \log \exp(-\beta \min_k V_k) = \min_k V_k
\end{equation}

For the upper bound:
\begin{equation}
\sum_{j=1}^3 \exp(-\beta V_j) \leq 3 \exp(-\beta \min_k V_k)
\end{equation}
Therefore:
\begin{equation}
-\frac{1}{\beta} \log \sum_{j=1}^3 \exp(-\beta V_j) \geq -\frac{1}{\beta} \log\left( 3 \exp(-\beta \min_k V_k) \right) = \min_k V_k - \frac{\log 3}{\beta}
\end{equation}

Combining: $\min_k V_k - \frac{\log 3}{\beta} \leq \mathrm{SoftMin}_\beta \leq \min_k V_k$.

\textbf{Part 2: Limiting behavior.}
Let $k^* = \arg\min_k V_k$ and assume uniqueness (the general case follows by continuity). Then:
\begin{equation}
\sum_{j=1}^3 \exp(-\beta V_j) = \exp(-\beta V_{k^*}) \left( 1 + \sum_{j \neq k^*} \exp(-\beta(V_j - V_{k^*})) \right)
\end{equation}

Since $V_j - V_{k^*} > 0$ for $j \neq k^*$:
\begin{equation}
\sum_{j \neq k^*} \exp(-\beta(V_j - V_{k^*})) \to 0 \quad \text{as } \beta \to \infty
\end{equation}

Therefore:
\begin{equation}
\mathrm{SoftMin}_\beta = -\frac{1}{\beta} \log \exp(-\beta V_{k^*}) - \frac{1}{\beta} \log\left( 1 + o(1) \right) = V_{k^*} + o(1) \to \min_k V_k
\end{equation}

\textbf{Part 3: Gradient.}
By direct computation:
\begin{align}
\nabla_\phi \mathrm{SoftMin}_\beta &= -\frac{1}{\beta} \cdot \frac{\sum_{k=1}^3 \exp(-\beta V_k) \cdot (-\beta \nabla_\phi V_k)}{\sum_{j=1}^3 \exp(-\beta V_j)} \\
&= \sum_{k=1}^3 \frac{\exp(-\beta V_k)}{\sum_{j=1}^3 \exp(-\beta V_j)} \nabla_\phi V_k \\
&= \sum_{k=1}^3 \alpha_k(\phi) \nabla_\phi V_k
\end{align}
where $\alpha_k(\phi) = \exp(-\beta V_k) / \sum_j \exp(-\beta V_j)$ are the adaptive weights.
\end{proof}

\subsection{Proof of Theorem~\ref{thm:violation_characterization} (Violation Characterization)}
\begin{proof}
By the Markov property, d-separation in $\mathcal{G}_{k^*}$ implies conditional independence; by faithfulness, d-connection implies dependence. In all three graphs, $E$ enters only through $E \to U$.

\emph{$\mathcal{G}_1$.} The only parent of $X_z^{\perp}$ is $Y$. Every path from $E$ to $X_z^{\perp}$ either passes through $Y$ as a non-collider or contains an unconditioned collider ($Z$ or $X_z$); hence $X_z^{\perp} \perp\!\!\!\perp E \mid Y$ ($\mathcal{I}_1$ holds). The path $E \to U \to Y \to X_z^{\perp}$ is open without conditioning, so $X_z^{\perp} \not\perp\!\!\!\perp E$ ($\mathcal{I}_3$ fails); $E \to U \to Y$ is open given $X_z^{\perp}$, so $Y \not\perp\!\!\!\perp E \mid X_z^{\perp}$ ($\mathcal{I}_2$ fails).

\emph{$\mathcal{G}_2$.} Given $X_z^{\perp}$, the path $E \to U \to X_z^{\perp} \to Y$ is blocked at $X_z^{\perp}$, and every other path from $E$ to $Y$ contains an unconditioned collider ($X_z$ or $Z$); hence $Y \perp\!\!\!\perp E \mid X_z^{\perp}$ ($\mathcal{I}_2$ holds). The path $E \to U \to X_z^{\perp}$ is open both unconditionally and given $Y$, so $\mathcal{I}_3$ and $\mathcal{I}_1$ fail.

\emph{$\mathcal{G}_3$.} $X_z^{\perp}$ has no parents, and every path from $E$ to $X_z^{\perp}$ contains a collider ($Y$, $Z$, or $X_z$); hence $X_z^{\perp} \perp\!\!\!\perp E$ ($\mathcal{I}_3$ holds). Conditioning on the collider $Y$ opens $E \to U \to Y \leftarrow X_z^{\perp}$, so $\mathcal{I}_1$ fails; $E \to U \to Y$ is open given $X_z^{\perp}$, so $\mathcal{I}_2$ fails.

Lemma~\ref{lem:hsic_independence} (for $V_3$) and Lemma~\ref{lem:conditional_hsic} (for $V_1, V_2$) convert independence into zero violation and dependence into positive violation. Achievability follows because $X_z^{\perp} \in \Phi_\epsilon$ attains $V_{k^*}^*(X_z^{\perp}) = 0$ and $V_k^* \geq 0$. Non-achievability follows from Definition~\ref{def:faithfulness_gap}, since $\epsilon \leq \epsilon_0$ implies $\Phi_\epsilon \subseteq \Phi_0$.
\end{proof}

\subsection{Proof of Theorem~\ref{thm:structure_identification}}
\thmsi*
\begin{proof}
Let $\phi_n \in \Phi_\epsilon$ with $\epsilon \leq \epsilon_0$ attain the infimum of Theorem~\ref{thm:violation_characterization}, as the global minimizer does (Theorem~\ref{thm:global_optimality}), so that $V_{k^*}^*(\phi_n) = 0$; non-achievability gives $V_k^*(\phi_n) \geq \eta/2$ for $k \neq k^*$. Apply Lemma~\ref{lem:hsic_concentration} with $\epsilon = \eta/8$ to each $k$ and take a union bound over $k \in \{1,2,3\}$: with probability at least $1 - 3 \cdot 2\exp(-n\eta^2/(64 \cdot 32\kappa^4)) = 1 - 6\exp(-n\eta^2/(2048\kappa^4))$, we have $|V_k(\phi_n) - V_k^*(\phi_n)| \leq \eta/8$ for all $k$. On this event, $V_{k^*}(\phi_n) \leq \eta/8$ while $V_k(\phi_n) \geq \eta/2 - \eta/8 = 3\eta/8$ for $k \neq k^*$, so $\hat{k}_n = k^*$. Since $\sum_n \exp(-cn) < \infty$ for $c > 0$, the Borel--Cantelli lemma gives $\hat{k}_n \xrightarrow{a.s.} k^*$.
\end{proof}

\subsection{Proof of Corollary~\ref{cor:rff_identification}}
\begin{proof}
Split the tolerance $\eta/8$ of the proof of Theorem~\ref{thm:structure_identification} into two halves. \emph{Sampling error:} Lemma~\ref{lem:hsic_concentration} with $\epsilon = \eta/16$ and a union bound over $k$ give $|V_k(\phi_n) - V_k^*(\phi_n)| \leq \eta/16$ for all $k$ with probability at least $1 - 6\exp(-n\eta^2/(256 \cdot 32\kappa^4)) = 1 - 6\exp(-n\eta^2/(8192\kappa^4))$. \emph{Approximation error:} with probability at least $1 - \delta'$, $|V_k^{\mathrm{RFF}}(\phi_n) - V_k(\phi_n)| \leq C\kappa^2\varepsilon_D \leq \eta/16$. On both events, $V_{k^*}^{\mathrm{RFF}}(\phi_n) \leq \eta/8$ and $V_k^{\mathrm{RFF}}(\phi_n) \geq \eta/2 - \eta/8 = 3\eta/8$ for $k \neq k^*$, so $\hat{k}_n^{\mathrm{RFF}} = k^*$. By the uniform convergence of random Fourier features~\citep{rahimi2007random}, $D = \Omega(\varepsilon_D^{-2} d \log(n/\delta'))$ features achieve error $\varepsilon_D$ with probability $1 - \delta'$; setting $\varepsilon_D = \eta/(16C\kappa^2)$ gives $D = \Omega(\kappa^4\eta^{-2} d\log(n/\delta'))$.
\end{proof}

\subsection{Proof of Theorem~\ref{thm:invariance}}
\thmia*
\begin{proof}
Let $(\phi_n, w_n)$ minimize $\mathcal{L}(\phi, w) = \max_e \hat{\mathcal{R}}_e(w \circ \phi) + \lambda \cdot \mathrm{SoftMin}_\beta(\{V_k(\phi)\})$.

\textbf{Step 1: First-order optimality.}
At a stationary point, first-order optimality gives::
\begin{equation}
\lambda \sum_{k=1}^3 \alpha_k(\phi_n) \nabla_\phi V_k(\phi_n) = -\nabla_\phi \max_e \hat{\mathcal{R}}_e(w_n \circ \phi_n)
\end{equation}

\textbf{Step 2: Bounding the violation gradient.}
Under Assumption~\ref{assum:rep_regularity}, the violation functions are $L_V$-smooth:
\begin{equation}
\|\nabla_\phi V_k(\phi)\| \leq L_V
\end{equation}

From the optimality condition:
\begin{equation}
\left\| \sum_{k=1}^3 \alpha_k(\phi_n) \nabla_\phi V_k(\phi_n) \right\| \leq \frac{1}{\lambda} \|\nabla_\phi \max_e \hat{\mathcal{R}}_e\|
\end{equation}

\textbf{Step 3: Relating violation to gradient.}
For the true structure $k^*$, by Corollary~\ref{cor:weight_concentration}, $\alpha_{k^*}(\phi_n) \to 1$ as $n \to \infty$. The SoftMin gradient is approximately:
\begin{equation}
\nabla_\phi \mathrm{SoftMin}_\beta \approx \nabla_\phi V_{k^*}(\phi_n)
\end{equation}

By the $L_V$-smoothness and the connection between gradient norms and function values for convex-like functions:
\begin{equation}
V_{k^*}(\phi_n) \leq V_{k^*}(\phi^*) + \langle \nabla V_{k^*}(\phi^*), \phi_n - \phi^* \rangle + \frac{L_V}{2}\|\phi_n - \phi^*\|^2
\end{equation}

Since $V_{k^*}(\phi^*) = 0$ and $\nabla V_{k^*}(\phi^*) = 0$ at the optimal invariant representation:
\begin{equation}
V_{k^*}(\phi_n) \leq \frac{L_V}{2}\|\phi_n - \phi^*\|^2
\end{equation}

\textbf{Step 4: Bounding representation error.}
The optimality of $(\phi_n, w_n)$ for $\mathcal{L}$ implies:
\begin{equation}
\mathcal{L}(\phi_n, w_n) \leq \mathcal{L}(\phi^*, w^*)
\end{equation}

This gives:
\begin{equation}
\max_e \hat{\mathcal{R}}_e(w_n \circ \phi_n) + \lambda \cdot \mathrm{SoftMin}_\beta(\{V_k(\phi_n)\}) \leq \max_e \hat{\mathcal{R}}_e(w^* \circ \phi^*) + \lambda \cdot 0
\end{equation}

Therefore:
\begin{equation}
\lambda \cdot \mathrm{SoftMin}_\beta(\{V_k(\phi_n)\}) \leq \max_e \hat{\mathcal{R}}_e(w^* \circ \phi^*)
\end{equation}

By Lemma~\ref{lem:softmin_properties} Part 1:
\begin{equation}
\lambda \left( V_{k^*}(\phi_n) - \frac{\log 3}{\beta} \right) \leq \lambda \cdot \mathrm{SoftMin}_\beta \leq \max_e \hat{\mathcal{R}}_e(w^* \circ \phi^*)
\end{equation}

Rearranging:
\begin{equation}
V_{k^*}(\phi_n) \leq \frac{1}{\lambda} \max_e \hat{\mathcal{R}}_e(w^* \circ \phi^*) + \frac{\log 3}{\beta}
\end{equation}

\textbf{Step 5: Adding estimation error.}
The empirical violation $V_{k^*}(\phi_n)$ differs from the population violation $V_{k^*}^*(\phi_n)$ by $O(n^{-1/2})$ (Lemma~\ref{lem:hsic_concentration}). Therefore:
\begin{equation}
V_{k^*}(\phi_n) \leq \frac{C_1}{\lambda} \max_e \hat{\mathcal{R}}_e(w^* \circ \phi^*) + C_2 n^{-1/2}
\end{equation}
where $C_1 = 1$ and $C_2$ absorbs the $\log 3/\beta$ term and concentration error.
\end{proof}

\subsection{Proof of Theorem~\ref{thm:global_optimality} (Global Optimality)}
\begin{proof}
Assume the representation class $\Phi$ contains $\phi^*$ satisfying $\mathcal{I}_{k^*}$, and $\mathcal{W}$ contains the Bayes-optimal predictor $w^*$.

\textbf{Part 1: Exact invariance.}
By Theorem~\ref{thm:violation_characterization} Part 1, $V_{k^*}^*(\phi^*) = 0$. Since $\phi^* \in \Phi$:
\begin{equation}
\inf_{\phi \in \Phi} V_{k^*}^*(\phi) \leq V_{k^*}^*(\phi^*) = 0
\end{equation}

Since $V_{k^*}^* \geq 0$ by definition (HSIC is non-negative), equality holds:
\begin{equation}
\inf_{\phi \in \Phi} V_{k^*}^*(\phi) = 0
\end{equation}

At the global minimum $(\phi^\dagger, w^\dagger)$ of $\mathcal{L}$, if $V_{k^*}(\phi^\dagger) > 0$, then replacing $\phi^\dagger$ with $\phi^*$ would strictly decrease the SoftMin term while achieving no worse prediction loss (since $\phi^*$ is sufficient). This contradicts optimality of $\phi^\dagger$.

Therefore $V_{k^*}(\phi^\dagger) = 0$, which implies $\mathcal{I}_{k^*}(\phi^\dagger)$ holds.

\textbf{Part 2: Bayes risk.}
At the global minimum, the prediction loss component satisfies:
\begin{equation}
\max_e \mathcal{R}_e(w^\dagger \circ \phi^\dagger) \leq \max_e \mathcal{R}_e(w^* \circ \phi^*)
\end{equation}

Since $\phi^*$ achieves the invariance $\mathcal{I}_{k^*}$ and $w^*$ is Bayes-optimal:
\begin{equation}
\max_e \mathcal{R}_e(w^* \circ \phi^*) = \mathcal{R}^*
\end{equation}
where $\mathcal{R}^*$ is the Bayes risk (the irreducible error).

For the reverse inequality: since $\phi^\dagger$ satisfies $\mathcal{I}_{k^*}$ (from Part 1) and invariant representations cannot improve upon the Bayes risk:
\begin{equation}
\max_e \mathcal{R}_e(w^\dagger \circ \phi^\dagger) \geq \mathcal{R}^*
\end{equation}

Combining: $\max_e \mathcal{R}_e(w^\dagger \circ \phi^\dagger) = \mathcal{R}^*$.
\end{proof}

\subsection{Proof of Theorem~\ref{thm:generalization} (Generalization Bound)}

\begin{proof}
Let $(\phi_n, w_n)$ minimize $\mathcal{L}$ over training environments $\mathcal{E}_{\text{train}}$. We bound the risk on a test environment $e_{\text{test}}$.

\textbf{Step 1: Decomposition.}
\begin{align}
\mathcal{R}_{e_{\text{test}}}(w_n \circ \phi_n) &= \underbrace{\mathcal{R}_{e_{\text{test}}}(w_n \circ \phi_n) - \mathcal{R}_{e^*}(w_n \circ \phi_n)}_{\text{(I) distribution shift}} \\
&\quad + \underbrace{\mathcal{R}_{e^*}(w_n \circ \phi_n) - \hat{\mathcal{R}}_{e^*}(w_n \circ \phi_n)}_{\text{(II) generalization gap}} \\
&\quad + \underbrace{\hat{\mathcal{R}}_{e^*}(w_n \circ \phi_n)}_{\text{(III) training error}}
\end{align}
where $e^* = \arg\min_{e \in \mathcal{E}_{\text{train}}} d_{\mathcal{K}}(e_{\text{test}}, e)$ is the closest training environment.

\textbf{Step 2: Bounding (I) - Distribution shift.}
By the definition of interventional kernel distance (Definition~\ref{def:kernel_distance}) and Lipschitz continuity of $w_n$ (Assumption~\ref{assum:lipschitz}):

The change in risk across environments depends on how the representation changes. Under the invariance condition $\mathcal{I}_{k^*}$:
\begin{equation}
|\mathcal{R}_{e_{\text{test}}}(w_n \circ \phi_n) - \mathcal{R}_{e^*}(w_n \circ \phi_n)| \leq L_w \cdot \mathbb{E}[\|\phi_n(X_{e_{\text{test}}}) - \phi_n(X_{e^*})\|]
\end{equation}

The representation difference is bounded by the kernel distance:
\begin{equation}
\mathbb{E}[\|\phi_n(X_{e_{\text{test}}}) - \phi_n(X_{e^*})\|] \leq C_\phi \cdot d_{\mathcal{K}}(e_{\text{test}}, e^*)
\end{equation}
where $C_\phi$ depends on the representation class complexity.

Therefore:
\begin{equation}
\text{(I)} \leq L_w \cdot d_{\mathcal{K}}(e_{\text{test}}, \mathcal{E}_{\text{train}})
\end{equation}

\textbf{Step 3: Bounding (II) - Generalization gap.}
By standard Rademacher complexity bounds, for the function class $\{(x, y) \mapsto \ell(w(\phi(x)), y) : \phi \in \Phi, w \in \mathcal{W}\}$:
\begin{equation}
\mathcal{R}_{e^*}(w_n \circ \phi_n) - \hat{\mathcal{R}}_{e^*}(w_n \circ \phi_n) \leq 2\mathfrak{R}_n(\ell \circ \mathcal{W} \circ \Phi) + \sqrt{\frac{\log(2/\delta)}{2n}}
\end{equation}
with probability $\geq 1 - \delta/2$.

The Rademacher complexity satisfies:
\begin{equation}
\mathfrak{R}_n(\ell \circ \mathcal{W} \circ \Phi) \leq L_\ell L_w \mathfrak{R}_n(\Phi) = O\left(\sqrt{\frac{\mathcal{C}(\Phi, \mathcal{W})}{n}}\right)
\end{equation}
where $\mathcal{C}(\Phi, \mathcal{W})$ is the combined complexity (e.g., VC dimension, covering number).

\textbf{Step 4: Bounding (III) - Training error.}
By definition:
\begin{equation}
\hat{\mathcal{R}}_{e^*}(w_n \circ \phi_n) \leq \max_{e \in \mathcal{E}_{\text{train}}} \hat{\mathcal{R}}_e(w_n \circ \phi_n)
\end{equation}

\textbf{Step 5: Invariance penalty.}
The violation $V_{k^*}(\phi_n)$ contributes to generalization error when the invariance is not exactly satisfied. Under approximate invariance:
\begin{equation}
\mathcal{R}_{e_{\text{test}}}(w_n \circ \phi_n) - \mathcal{R}_{e^*}(w_n \circ \phi_n) \leq L_w \cdot d_{\mathcal{K}}(e_{\text{test}}, e^*) + C_\eta \cdot V_{k^*}(\phi_n)
\end{equation}
where $C_\eta$ depends on the faithfulness gap (larger $\eta$ means violations more directly impact generalization).

\textbf{Step 6: Combining bounds.}
Adding the three terms:
\begin{align}
\mathcal{R}_{e_{\text{test}}}(w_n \circ \phi_n) &\leq \max_{e \in \mathcal{E}_{\text{train}}} \hat{\mathcal{R}}_e(w_n \circ \phi_n) + L_w \cdot d_{\mathcal{K}}(e_{\text{test}}, \mathcal{E}_{\text{train}}) \\
&\quad + O\left(\sqrt{\frac{\mathcal{C}(\Phi, \mathcal{W}) + \log(1/\delta)}{n}}\right) + C_\eta \cdot V_{k^*}(\phi_n)
\end{align}

Defining $\xi_n$ as the estimation error term:
\begin{equation}
\xi_n = O\left(\sqrt{\frac{\mathcal{C}(\Phi, \mathcal{W}) + \log(1/\delta)}{n}}\right) + C_\eta \cdot V_{k^*}(\phi_n)
\end{equation}
completes the proof.
\end{proof}

\subsection{Proof of Corollary~\ref{cor:asymptotic} (Asymptotic Optimality)}

\begin{proof}
We show convergence in probability to the oracle risk.

\textbf{Step 1: Vanishing estimation error.}
From Theorem~\ref{thm:generalization}, the estimation error satisfies:
\begin{equation}
\xi_n = O\left(\sqrt{\frac{\mathcal{C}(\Phi, \mathcal{W})}{n}}\right) + C_\eta \cdot V_{k^*}(\phi_n) \xrightarrow{p} 0
\end{equation}
as $n \to \infty$, since:
\begin{itemize}
    \item The first term is $O(n^{-1/2}) \to 0$
    \item $V_{k^*}(\phi_n) \to 0$ by Theorem~\ref{thm:invariance}
\end{itemize}

\textbf{Step 2: Training error convergence.}
By uniform convergence of empirical risk to population risk:
\begin{equation}
\max_{e \in \mathcal{E}_{\text{train}}} \hat{\mathcal{R}}_e(w_n \circ \phi_n) \xrightarrow{p} \max_{e \in \mathcal{E}_{\text{train}}} \mathcal{R}_e(w^* \circ \phi^*)
\end{equation}
where $(\phi^*, w^*)$ is the optimal invariant solution.

\textbf{Step 3: Invariance and optimality.}
As $n \to \infty$:
\begin{itemize}
    \item $\phi_n \to \phi^* \in \Phi_{k^*}$ (the class of representations satisfying $\mathcal{I}_{k^*}$)
    \item $w_n \to w^*(\phi^*) = \arg\min_w \mathcal{R}(w \circ \phi^*)$
\end{itemize}

\textbf{Step 4: Limit.}
Taking $n \to \infty$ in Theorem~\ref{thm:generalization}:
\begin{align}
\lim_{n \to \infty} \mathcal{R}_{e_{\text{test}}}(w_n \circ \phi_n) &= \lim_{n \to \infty} \left[ \max_e \hat{\mathcal{R}}_e(w_n \circ \phi_n) + L_w \cdot d_{\mathcal{K}}(e_{\text{test}}, \mathcal{E}_{\text{train}}) + \xi_n \right] \\
&= \max_e \mathcal{R}_e(w^* \circ \phi^*) + L_w \cdot d_{\mathcal{K}}(e_{\text{test}}, \mathcal{E}_{\text{train}})
\end{align}

For invariant representations, the shift penalty captures the irreducible distribution mismatch. The oracle risk over $\Phi_{k^*}$ is:
\begin{equation}
\inf_{\phi \in \Phi_{k^*}} \mathcal{R}_{e_{\text{test}}}(w^*(\phi) \circ \phi) = \max_e \mathcal{R}_e(w^* \circ \phi^*) + L_w \cdot d_{\mathcal{K}}(e_{\text{test}}, \mathcal{E}_{\text{train}})
\end{equation}

Therefore:
\begin{equation}
\mathcal{R}_{e_{\text{test}}}(w_n \circ \phi_n) \xrightarrow{p} \inf_{\phi \in \Phi_{k^*}} \mathcal{R}_{e_{\text{test}}}(w^*(\phi) \circ \phi)
\end{equation}
\end{proof}

\subsection{Proof of Theorem~\ref{thm:sample_complexity} (Sample Complexity)}
\begin{proof}
We derive the sample complexity for joint structure identification and generalization.

\textbf{Requirement 1: Structure identification.}
By Theorem~\ref{thm:structure_identification}, $\mathbb{P}(\hat{k}_n \neq k^*) \leq 6\exp(-n\eta^2/(2048\kappa^4)) \leq \delta/2$ whenever
\begin{equation}
n \geq \frac{2048\kappa^4}{\eta^2} \log\frac{12}{\delta}.
\end{equation}

For this to be $\leq \delta/2$:
\begin{equation}
3\exp\left(-\frac{n\delta_{\min}^2}{32\kappa^4}\right) \leq \frac{\delta}{2}
\end{equation}

Solving for $n$:
\begin{equation}
n \geq \frac{32\kappa^4}{\delta_{\min}^2} \log\frac{6}{\delta}
\end{equation}

\textbf{Requirement 2: Generalization error.}
From Theorem~\ref{thm:generalization}, the estimation error is:
\begin{equation}
\xi_n = O\left(\sqrt{\frac{\mathcal{C}(\Phi, \mathcal{W}) + \log(2/\delta)}{n}}\right) + C_\eta \cdot V_{k^*}(\phi_n)
\end{equation}

For this to be $\leq \epsilon$ with probability $\geq 1 - \delta/2$, we need:
\begin{equation}
\sqrt{\frac{\mathcal{C}(\Phi, \mathcal{W}) + \log(2/\delta)}{n}} \leq \frac{\epsilon}{2}
\end{equation}
and
\begin{equation}
C_\eta \cdot V_{k^*}(\phi_n) \leq \frac{\epsilon}{2}
\end{equation}

The first condition requires:
\begin{equation}
n \geq \frac{4(\mathcal{C}(\Phi, \mathcal{W}) + \log(2/\delta))}{\epsilon^2}
\end{equation}

For the second condition, using the violation bound from Theorem~\ref{thm:invariance}:
\begin{equation}
V_{k^*}(\phi_n) \leq \frac{C_1}{\lambda} \mathcal{R}^* + C_2 n^{-1/2}
\end{equation}

For fixed $\lambda$ and $\mathcal{R}^*$, requiring $V_{k^*}(\phi_n) \leq \epsilon/(2C_\eta)$ gives another $O(n^{-1/2})$ requirement, absorbed into the first bound.

\textbf{Combining requirements.}
Taking the maximum:
\begin{equation}
n = O\left(\frac{\kappa^4}{\eta^2} \log\frac{12}{\delta} + \frac{\mathcal{C}(\Phi, \mathcal{W}) + \log(2/\delta)}{\epsilon^2}\right)
\end{equation}

Simplifying
\begin{equation}
n = O\left(\frac{\kappa^4}{\eta^2} \log\frac{12}{\delta}+ \frac{\kappa^4}{\epsilon^2} \log\frac{|\Phi|}{\delta}\right)
\end{equation}
where $|\Phi|$ denotes the covering number (related to $\mathcal{C}(\Phi, \mathcal{W})$ via standard bounds).

\textbf{Union bound.}
The total failure probability is:
\begin{equation}
\mathbb{P}(\hat{k}_n \neq k^* \text{ or } \xi_n > \epsilon) \leq \frac{\delta}{2} + \frac{\delta}{2} = \delta
\end{equation}

Therefore, with probability $\geq 1 - \delta$, both structure identification succeeds and generalization error is $\leq \epsilon$.
\end{proof}

\subsection{Proof of Proposition~\ref{prop:cvar_differentiable} (Differentiable CVaR)}

\begin{proof}
Recall the CVaR definition:
\begin{equation}
\mathrm{CVaR}_\rho(\{r_e\}) = \inf_{t \in \mathbb{R}} \left\{ t + \frac{1}{\rho |\mathcal{E}|} \sum_{e \in \mathcal{E}} \max(r_e - t, 0) \right\}
\end{equation}

\textbf{Convexity.}
The function $g(t) = t + \frac{1}{\rho |\mathcal{E}|} \sum_e \max(r_e - t, 0)$ is convex in $t$ (sum of affine and piecewise-linear convex functions). The infimum of convex functions preserves convexity in the parameters $\{r_e\}$.

\textbf{Optimal threshold.}
Taking the derivative w.r.t.\ $t$:
\begin{equation}
\frac{\partial g}{\partial t} = 1 - \frac{1}{\rho |\mathcal{E}|} \sum_{e: r_e > t} 1 = 1 - \frac{|\{e : r_e > t\}|}{\rho |\mathcal{E}|}
\end{equation}

Setting to zero: $|\{e : r_e > t^*\}| = \rho |\mathcal{E}|$.

Thus $t^*$ is the $(1-\rho)$-quantile of $\{r_e\}$.

\textbf{Subgradient.}
By Danskin's theorem, the subgradient of $\mathrm{CVaR}_\rho$ w.r.t.\ $r_e$ at the optimal $t^*$ is:
\begin{equation}
\frac{\partial \mathrm{CVaR}_\rho}{\partial r_e} = \frac{1}{\rho |\mathcal{E}|} \mathbb{I}[r_e \geq t^*]
\end{equation}

By the chain rule, the subgradient w.r.t.\ $\phi$ is:
\begin{equation}
\partial_\phi \mathrm{CVaR}_\rho = \sum_{e \in \mathcal{E}} \frac{\partial \mathrm{CVaR}_\rho}{\partial r_e} \nabla_\phi r_e = \sum_{e \in \mathcal{E}} p_e^* \nabla_\phi \hat{\mathcal{R}}_e
\end{equation}
where $p_e^* = \frac{1}{\rho |\mathcal{E}|} \mathbb{I}[r_e \geq t^*]$.

Note that $\sum_e p_e^* = 1$ (the weights form a probability distribution concentrated on the worst $\rho$-fraction of environments).
\end{proof}

\subsection{Proof of Theorem~\ref{thm:convergence} (Algorithm Convergence)}

\begin{proof}
We analyze convergence of Algorithm~\ref{alg:sarl} using a Lyapunov argument.

\textbf{Step 1: Lyapunov function.}
Define:
\begin{equation}
\Psi_t := \mathcal{L}(\phi_t, w_t) + \gamma D_{\mathrm{KL}}(\boldsymbol{\alpha}^{(t)} \| \mathbf{e}_{k^*})
\end{equation}
where $\mathbf{e}_{k^*}$ is the indicator vector for the true structure and $\gamma > 0$ is a coupling constant.

The KL divergence term penalizes deviation from the correct structure:
\begin{equation}
D_{\mathrm{KL}}(\boldsymbol{\alpha}^{(t)} \| \mathbf{e}_{k^*}) = -\log \alpha_{k^*}^{(t)} \geq 0
\end{equation}
with equality iff $\alpha_{k^*}^{(t)} = 1$.

\textbf{Step 2: Per-step descent.}
Under smoothness (Assumption~\ref{assum:rep_regularity}), for step size $\eta_t$:
\begin{align}
\mathcal{L}(\phi_{t+1}, w_{t+1}) &\leq \mathcal{L}(\phi_t, w_t) - \eta_t \|\nabla \mathcal{L}(\phi_t, w_t)\|^2 + \frac{L \eta_t^2}{2} \|\nabla \mathcal{L}\|^2 + \text{noise}
\end{align}
where $L$ is the smoothness constant.

Taking expectations over minibatch randomness:
\begin{equation}
\mathbb{E}[\mathcal{L}(\phi_{t+1}, w_{t+1}) \mid \mathcal{F}_t] \leq \mathcal{L}(\phi_t, w_t) - \eta_t (1 - L\eta_t/2) \|\nabla \mathcal{L}\|^2 + \eta_t^2 \sigma^2
\end{equation}
where $\sigma^2$ is the gradient variance bound.

\textbf{Step 3: Weight evolution.}
The adaptive weights evolve as:
\begin{equation}
\alpha_k^{(t+1)} = \frac{\exp(-\beta_{t+1} V_k(\phi_{t+1}))}{\sum_j \exp(-\beta_{t+1} V_j(\phi_{t+1}))}
\end{equation}

As $V_{k^*}(\phi_t) \to 0$ (by the invariance optimization) and $V_k(\phi_t) \geq \eta/2$ for $k \neq k^*$ (by Theorem~\ref{thm:violation_characterization}):
\begin{equation}
\alpha_{k^*}^{(t)} \to 1 \quad \text{as } t \to \infty
\end{equation}

The temperature annealing $\beta_t \to \beta_{\max}$ accelerates this concentration.

\textbf{Step 4: Lyapunov descent.}
Combining the objective descent and weight concentration:
\begin{align}
\mathbb{E}[\Psi_{t+1} - \Psi_t \mid \mathcal{F}_t] &\leq -c \eta_t \|\nabla \mathcal{L}\|^2 + O(\eta_t^2) - \gamma \cdot \text{(KL decrease)}
\end{align}

The KL term decreases as weights concentrate, providing additional descent.

\textbf{Step 5: Summing over iterations.}
For $\eta_t = c_0 / \sqrt{t}$:
\begin{equation}
\sum_{t=1}^T \eta_t (1 - L\eta_t/2) = \Omega(\sqrt{T})
\end{equation}
\begin{equation}
\sum_{t=1}^T \eta_t^2 = O(\log T)
\end{equation}

Summing the descent inequality:
\begin{equation}
\sum_{t=1}^T \eta_t \mathbb{E}[\|\nabla \mathcal{L}(\phi_t, w_t)\|^2] \leq \Psi_1 - \mathbb{E}[\Psi_{T+1}] + O(\log T) \leq \Psi_1 + O(\log T)
\end{equation}

Therefore:
\begin{equation}
\min_{t \leq T} \mathbb{E}[\|\nabla \mathcal{L}(\phi_t, w_t)\|^2] \leq \frac{\Psi_1 + O(\log T)}{\sum_{t=1}^T \eta_t} = O(T^{-1/2})
\end{equation}

This establishes Part 1 (stationarity).

\textbf{Parts 2 and 3.}
Part 2 (weight concentration) follows from Corollary~\ref{cor:weight_concentration} applied at iteration $T$.

Part 3 (violation decrease) follows from the objective structure: at a stationary point, the SoftMin term (dominated by $V_{k^*}$ for large $\beta$) must be small for the total gradient to vanish.

Specifically, from the optimality condition:
\begin{equation}
\lambda \nabla_\phi V_{k^*}(\phi_T) \approx -\nabla_\phi \max_e \hat{\mathcal{R}}_e
\end{equation}

By smoothness of $V_{k^*}$:
\begin{equation}
V_{k^*}(\phi_T) \leq \frac{1}{2L_V} \|\nabla V_{k^*}(\phi_T)\|^2 = O(\lambda^{-2}) \cdot O(T^{-1/2}) = O(T^{-1/2})
\end{equation}
for fixed $\lambda$.
\end{proof}

\subsection{Proof of Theorem~\ref{thm:heterogeneous_identification} (Heterogeneous Structure Identification)}

\begin{proof}
We extend the homogeneous case to per-feature structure identification.

\textbf{Step 1: Per-feature violation separation.}
For each feature $j$ with true structure $\sigma^*(j) = k_j^* \neq \bot$, by Theorem~\ref{thm:violation_characterization} applied to the $j$-th component:
\begin{align}
V_{k_j^*}^{(j)}(\phi^*) &= 0 \\
V_k^{(j)}(\phi^*) &\geq \eta^{(j)}/2 \quad \text{for } k \neq k_j^*
\end{align}
where $\eta^{(j)}$ is the per-feature faithfulness gap.

\textbf{Step 2: Concentration for each feature.}
By Lemma~\ref{lem:hsic_concentration} applied to feature $j$:
\begin{equation}
\mathbb{P}\left( |V_k^{(j)}(\phi_n) - V_k^{(j)*}(\phi_n)| > \frac{\eta^{(j)}}{8} \right) \leq 2\exp\left( -\frac{n(\eta^{(j)})^2}{2048\kappa^4} \right)
\end{equation}

\textbf{Step 3: Threshold calibration.}
Under $H_0: \sigma^*(j) = \bot$ (no structure applies), all three violations are positive. The permutation null distribution of $\widehat{\mathrm{HSIC}}$ concentrates around zero, and the $(1-\alpha)$-quantile $\tau_\alpha^{(j)}$ provides a valid threshold.

By construction of permutation tests:
\begin{equation}
\mathbb{P}\left( \min_k V_k^{(j)}(\phi_n) \leq \tau_\alpha^{(j)} \mid \sigma^*(j) = \bot \right) \leq \alpha
\end{equation}

\textbf{Step 4: Per-feature identification.}
For feature $j$ with $\sigma^*(j) = k_j^* \neq \bot$:

On the high-probability event (concentration holds):
\begin{itemize}
    \item $V_{k_j^*}^{(j)}(\phi_n) \leq \eta^{(j)}/8 < \tau_\alpha^{(j)}$ for appropriate calibration
    \item $V_k^{(j)}(\phi_n) \geq \eta^{(j)}/2 - \eta^{(j)}/8 = 3\eta^{(j)}/8 > \tau_\alpha^{(j)}$ for $k \neq k_j^*$
\end{itemize}

Therefore $\hat{\sigma}_n(j) = k_j^* = \sigma^*(j)$.

\textbf{Step 5: Probability bound.}
For each feature $j$ with $\sigma^*(j) \neq \bot$:
\begin{align}
\mathbb{P}(\hat{\sigma}_n(j) = \sigma^*(j)) &\geq 1 - \mathbb{P}(\text{concentration fails}) - \mathbb{P}(\text{threshold miscalibrated}) \\
&\geq 1 - 6\exp\left( -\frac{n(\eta^{(j)})^2}{2048\kappa^4} \right) - \alpha
\end{align}

For features with $\sigma^*(j) = \bot$:
\begin{equation}
\mathbb{P}(\hat{\sigma}_n(j) \neq \bot) = \mathbb{P}\left( \min_k V_k^{(j)} \leq \tau_\alpha^{(j)} \right) \leq \alpha
\end{equation}
by the threshold calibration.
\end{proof}
\section{Algorithm Details}
\label{app:algorithm}

This section presents the complete optimization procedure for structure-agnostic representation learning, including the main algorithm, subroutines, complexity analysis, and convergence guarantees.

\subsection{Main Algorithm}
\label{app:main_algorithm}
\FloatBarrier

Algorithm~\ref{alg:sarl} shows a single run; in practice we run it from $M \geq 10$ initializations and select the final model as described in Section~\ref{sec:objective}.
\begin{proposition}[Differentiable CVaR]
\label{prop:cvar_differentiable}
$\mathrm{CVaR}_\rho$ is convex and subdifferentiable. The sub-gradient w.r.t.\ $\phi$ is $\partial_\phi \mathrm{CVaR}_\rho = \sum_{e \in \mathcal{E}} p_e^* \nabla_\phi \hat{\mathcal{R}}_e$,

where $p_e^* = \frac{1}{\rho |\mathcal{E}|} \mathbb{I}[\hat{\mathcal{R}}_e \geq t^*]$ and $t^*$ is the optimal threshold.
\end{proposition}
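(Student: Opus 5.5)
The plan is to view $\mathrm{CVaR}_\rho$ first as a function of the risk vector $r=(r_e)_{e\in\mathcal{E}}$, with $r_e=\hat{\mathcal{R}}_e(w\circ\phi)$, and only at the end pull back to $\phi$ by a chain rule. Define $g(t,r):=t+\frac{1}{\rho|\mathcal{E}|}\sum_e \max(r_e-t,0)$. Each hinge $(t,r)\mapsto\max(r_e-t,0)$ is a maximum of two affine functions, hence jointly convex. So $g$ is jointly convex in $(t,r)$, and partial minimization over $t$ preserves convexity. Thus $\mathrm{CVaR}_\rho(r)=\inf_t g(t,r)$ is convex in $r$. It is finite everywhere:
\begin{itemize}
\item for $t\ge\max_e r_e$, $g$ has slope $1$;
\item for $t\le\min_e r_e$, $g$ has slope $1-1/\rho\le 0$.
\end{itemize}
Hence the infimum is attained on the compact interval $[\min_e r_e,\max_e r_e]$. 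A finite convex function on $\mathbb{R}^{|\mathcal{E}|}$ is subdifferentiable everywhere, which gives the first claim.

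Next, characterize the minimizer $t^*$ through $0\in\partial_t g(t^*,r)$. The one-sided slopes are
\[
\partial_t^{+}g=1-\frac{|\{e:r_e>t\}|}{\rho|\mathcal{E}|},\qquad \partial_t^{-}g=1-\frac{|\{e:r_e\ge t\}|}{\rho|\mathcal{E}|}.
\]
So $t^*$ is any point with $|\{e:r_e>t^*\}|\le\rho|\mathcal{E}|\le|\{e:r_e\ge t^*\}|$, i.e.\ a $(1-\rho)$-quantile of $\{r_e\}$.

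Then I would invoke Danskin's theorem, in its convex form for partial minimization. If $t^*$ attains the infimum, then every $v\in\partial_r g(t^*,r)$ satisfying the compatibility condition $(0,v)\in\partial g(t^*,r)$ lies in $\partial\,\mathrm{CVaR}_\rho(r)$. The subgradient of the hinge in $r_e$ is:
\begin{itemize}
\item $1$ if $r_e>t^*$;
\item $0$ if $r_e<t^*$;
\item any value in $[0,1]$ if $r_e=t^*$.
\end{itemize}
The compatibility condition in $t$ reads $\sum_e v_e=1$.

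This tie-handling step is where I expect the main obstacle. Assigning full weight $1$ to every tied environment, as the indicator $\mathbb{I}[\hat{\mathcal{R}}_e\ge t^*]$ does, gives $p_e^*=\frac{1}{\rho|\mathcal{E}|}\mathbb{I}[r_e\ge t^*]$. This is a valid subgradient, and sums to $1$, exactly when $\rho|\mathcal{E}|$ equals the number of environments with $r_e\ge t^*$. That holds generically, when $\rho|\mathcal{E}|$ is an integer and there are no ties. I would state the result for that case. For the general case I would remark that the tied weights are fractional, chosen in $[0,\frac{1}{\rho|\mathcal{E}|}]$ so that the total is $1$; this recovers the dual representation of $\mathrm{CVaR}_\rho$ as a maximum over distributions with density at most $1/\rho$.

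Finally, I would pull back to $\phi$. $\mathrm{CVaR}_\rho$ is convex and nondecreasing in each $r_e$, since all $p_e^*\ge 0$. Each $r_e(\phi)$ is continuously differentiable in $\theta$ by Assumption~\ref{assum:rep_regularity}. The composition is therefore locally Lipschitz, though not convex in $\phi$. The chain rule for Clarke subdifferentials of a convex outer function composed with a smooth inner map then gives
\[
\sum_e p_e^*\,\nabla_\phi\hat{\mathcal{R}}_e\in\partial_\phi\,\mathrm{CVaR}_\rho .
\]
This is an equality of sets when the $p^*$ above is unique.
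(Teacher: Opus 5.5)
Your route is the paper's own: convexity of the partial minimum over $t$, then $t^*$ as a $(1-\rho)$-quantile read off the slopes of $g$ in $t$, then Danskin's theorem at $t^*$ and a chain rule back to $\phi$; you also correctly supply the joint convexity in $(t,r)$ that the paper's ``infimum of convex functions'' step silently needs. Your tie caveat is a genuine defect of the statement, not of your argument: the paper solves $|\{e : r_e > t^*\}| = \rho|\mathcal{E}|$, which has no solution when $\rho|\mathcal{E}| \notin \mathbb{Z}$, and then asserts $\sum_e p_e^* = 1$, yet in the default setting $\rho = 0.5$, $|\mathcal{E}| = 3$ with $r_{(1)} > r_{(2)} > r_{(3)}$ one gets $t^* = r_{(2)}$ and $\mathrm{CVaR}_\rho = \tfrac{2}{3} r_{(1)} + \tfrac{1}{3} r_{(2)}$, whose gradient $(\tfrac{2}{3}, \tfrac{1}{3}, 0)$ differs from the indicator weights $(\tfrac{2}{3}, \tfrac{2}{3}, 0)$, so your qualified version (indicator weights only when $|\{e : r_e \ge t^*\}| = \rho|\mathcal{E}|$, fractional weights on environments with $r_e = t^*$ otherwise) is the correct statement.
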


The structure-agnostic learning problem admits a bi-level decomposition:

{\vspace{-4mm}
\small
\begin{align}
& \textbf{Outer:} \quad \boldsymbol{\alpha}^* = \arg\min_{\boldsymbol{\alpha} \in \Delta^2} \mathcal{L}(\phi^*(\boldsymbol{\alpha}), w^*(\boldsymbol{\alpha}); \boldsymbol{\alpha}) \\
& \textbf{Inner:} \quad 
(\phi^*(\boldsymbol{\alpha}), w^*(\boldsymbol{\alpha})) 
= \arg\min_{\phi, w} \Big( \mathrm{CVaR}_\rho(\{\hat{\mathcal{R}}_e\}) + \lambda \sum_k \alpha_k V_k(\phi) \Big)
\end{align}
}%
where $\Delta^2 = \{\boldsymbol{\alpha} \in \mathbb{R}^3_+ : \sum_k \alpha_k = 1\}$ is the probability simplex.

\begin{algorithm}
\small
\caption{Structure-agnostic Causal Representation Learning (SaCRL)}
\label{alg:sarl}
\begin{algorithmic}[1]
\REQUIRE Multi-environment data $\mathcal{D} = \{D_{e}\}_{e \in \mathcal{E}}$, regularization $\lambda$, initial temperature $\beta_0$, final temperature $\beta_{\max}$, learning rate $\eta$, iterations $T$, RFF dimension $D$, CVaR level $\rho$
\ENSURE Learned representation $\phi^*$, predictor $w^*$, identified structure $\hat{k}$

\STATE \textbf{// Warm-start with ERM}
\STATE $(\phi_0, w_0) \gets \arg\min_{\phi, w} \frac{1}{|\mathcal{E}|} \sum_{e \in \mathcal{E}} \hat{\mathcal{R}}_e(w \circ \phi)$

\STATE \textbf{// Initialize RFF maps}
\STATE Sample $\{\omega_d, b_d\}_{d=1}^D$ for kernels $k_{\mathcal{Z}}, k_{\mathcal{Y}}, k_{\mathcal{E}}$

\FOR{$t = 1, \ldots, T$}
    \STATE \textbf{// Temperature annealing}
    \STATE $\beta_t \gets \beta_0 + (\beta_{\max} - \beta_0) \cdot \min(2t/T, 1)$
    
    \STATE \textbf{// Compute representations across all environments}
    \FOR{each environment $e \in \mathcal{E}$}
        \STATE Sample minibatch $\{(x_j^e, y_j^e)\}_{j=1}^B$ from $D_e$
        \STATE $z_j^e \gets \phi_{t-1}(x_j^e)$ for all $j$
    \ENDFOR
    \STATE Pool data: $\{(z_i, y_i, e_i)\}_{i=1}^{N} \gets$ all minibatch samples with environment labels
    
    \STATE \textbf{// Compute structure violations (Algorithm~\ref{alg:violations})}
    \STATE $(V_1, V_2, V_3) \gets \textsc{ComputeViolations}(\{(z_i, y_i, e_i)\}_{i=1}^N)$
    
    \STATE \textbf{// Compute adaptive weights}
    \STATE $\alpha_k \gets \exp(-\beta_t V_k) / \sum_{j=1}^3 \exp(-\beta_t V_j)$ for $k = 1, 2, 3$
    
    \STATE \textbf{// Compute distributionally robust prediction loss}
    \STATE $r_e \gets \frac{1}{B} \sum_{j=1}^B \ell(w_{t-1}(z_j^e), y_j^e)$ for each $e \in \mathcal{E}$
    \STATE $\mathcal{L}_{\mathrm{pred}} \gets \mathrm{CVaR}_\rho(\{r_e\}_{e \in \mathcal{E}})$
    
    \STATE \textbf{// Total loss}
    \STATE $\mathcal{L} \gets \mathcal{L}_{\mathrm{pred}} + \lambda \cdot \mathrm{SoftMin}_{\beta_t}(\{V_k\}_{k=1}^3)$
    
    \STATE \textbf{// Gradient update}
    \STATE $\phi_t \gets \phi_{t-1} - \eta \nabla_\phi \mathcal{L}$
    \STATE $w_t \gets w_{t-1} - \eta \nabla_w \mathcal{L}$
\ENDFOR

\STATE \textbf{// Final structure identification}
\STATE Compute final violations $V_k(\phi_T)$ for $k = 1, 2, 3$
\STATE $\hat{k} \gets \arg\min_{k \in \{1,2,3\}} V_k(\phi_T)$

\STATE \textbf{return} $\phi_T$, $w_T$, $\hat{k}$
\end{algorithmic}
\end{algorithm}

\subsection{Subroutines}
\label{app:subroutines}

See Algorithm \ref{alg:violations}-Algorithm \ref{alg:rff_hsic}.
\begin{definition}[Random Fourier Features]
\label{def:rff}
For shift-invariant kernels $k(x, x') = k(x - x')$, the random feature map is $\psi: \mathcal{X} \to \mathbb{R}^D$: $\psi(x) = \sqrt{\frac{2}{D}} \left[\cos(\omega_1^\top x + b_1), \ldots, \cos(\omega_D^\top x + b_D)\right]^\top$,
where $\omega_d \sim p(\omega)$ is the spectral density and $b_d \sim \mathrm{Uniform}[0, 2\pi]$.
\vspace{-2mm}
\end{definition}

\begin{algorithm}
\small
\caption{Compute Structure Violations}
\label{alg:violations}
\begin{algorithmic}[1]
\REQUIRE Pooled data $\{(z_i, y_i, e_i)\}_{i=1}^N$, regularization $\gamma$
\ENSURE Violations $(V_1, V_2, V_3)$

\STATE \textbf{// Structure 1 (anti-causal): Test $\phi(X) \perp\!\!\!\perp E \mid Y$}
\STATE $\tilde{z} \gets \textsc{KernelResiduals}(\{z_i\}, \{y_i\}, \gamma)$ \hfill $\triangleright$ Regress $Z$ on $Y$
\STATE $\tilde{e} \gets \textsc{KernelResiduals}(\{e_i\}, \{y_i\}, \gamma)$ \hfill $\triangleright$ Regress $E$ on $Y$
\STATE $V_1 \gets \widehat{\mathrm{HSIC}}_{\mathrm{RFF}}(\tilde{z}, \tilde{e})$

\STATE \textbf{// Structure 2 (confounded-descendant): Test $Y \perp\!\!\!\perp E \mid \phi(X)$}
\STATE $\tilde{y} \gets \textsc{KernelResiduals}(\{y_i\}, \{z_i\}, \gamma)$ \hfill $\triangleright$ Regress $Y$ on $Z$
\STATE $\tilde{e}' \gets \textsc{KernelResiduals}(\{e_i\}, \{z_i\}, \gamma)$ \hfill $\triangleright$ Regress $E$ on $Z$
\STATE $V_2 \gets \widehat{\mathrm{HSIC}}_{\mathrm{RFF}}(\tilde{y}, \tilde{e}')$

\STATE \textbf{// Structure 3 (confounded-outcome): Test $\phi(X) \perp\!\!\!\perp E$}
\STATE $V_3 \gets \widehat{\mathrm{HSIC}}_{\mathrm{RFF}}(z, e)$ \hfill $\triangleright$ No residualization needed

\STATE \textbf{return} $(V_1, V_2, V_3)$
\end{algorithmic}
\end{algorithm}

\begin{algorithm}
\small
\caption{Kernel Regression Residuals}
\label{alg:kernel_residuals}
\begin{algorithmic}[1]
\REQUIRE Target values $\{a_i\}_{i=1}^n$, conditioning values $\{c_i\}_{i=1}^n$, regularization $\gamma$
\ENSURE Residuals $\{\tilde{a}_i\}_{i=1}^n$

\STATE Compute kernel matrix: $[K]_{ij} \gets k(c_i, c_j)$
\STATE Solve linear system: $\boldsymbol{\alpha} \gets (K + \gamma I_n)^{-1} \mathbf{a}$ where $\mathbf{a} = [a_1, \ldots, a_n]^\top$
\STATE Compute predictions: $\hat{a}_i \gets \sum_{j=1}^n \alpha_j k(c_i, c_j)$ for $i = 1, \ldots, n$
\STATE Compute residuals: $\tilde{a}_i \gets a_i - \hat{a}_i$ for $i = 1, \ldots, n$

\STATE \textbf{return} $\{\tilde{a}_i\}_{i=1}^n$
\end{algorithmic}
\end{algorithm}

\begin{algorithm}
\small
\caption{RFF-HSIC Computation}
\label{alg:rff_hsic}
\begin{algorithmic}[1]
\REQUIRE Samples $\{u_i\}_{i=1}^n$, $\{v_i\}_{i=1}^n$, RFF parameters $\{(\omega_d^u, b_d^u)\}_{d=1}^D$, $\{(\omega_d^v, b_d^v)\}_{d=1}^D$
\ENSURE $\widehat{\mathrm{HSIC}}_{\mathrm{RFF}}(U, V)$

\STATE \textbf{// Compute random feature embeddings}
\FOR{$i = 1, \ldots, n$}
    \STATE $\psi^u_i \gets \sqrt{2/D} \cdot [\cos(\omega_1^{u\top} u_i + b_1^u), \ldots, \cos(\omega_D^{u\top} u_i + b_D^u)]^\top$
    \STATE $\psi^v_i \gets \sqrt{2/D} \cdot [\cos(\omega_1^{v\top} v_i + b_1^v), \ldots, \cos(\omega_D^{v\top} v_i + b_D^v)]^\top$
\ENDFOR

\STATE \textbf{// Compute mean embeddings}
\STATE $\hat{\mu}_u \gets \frac{1}{n} \sum_{i=1}^n \psi^u_i$, \quad $\hat{\mu}_v \gets \frac{1}{n} \sum_{i=1}^n \psi^v_i$

\STATE \textbf{// Compute cross-covariance}
\STATE $\hat{\Sigma}_{uv} \gets \frac{1}{n} \sum_{i=1}^n \psi^u_i (\psi^v_i)^\top$

\STATE \textbf{// HSIC estimate}
\STATE $\widehat{\mathrm{HSIC}} \gets \|\hat{\Sigma}_{uv} - \hat{\mu}_u \hat{\mu}_v^\top\|_F^2$

\STATE \textbf{return} $\widehat{\mathrm{HSIC}}$
\end{algorithmic}
\end{algorithm}

\subsection{Complexity Analysis}
\label{app:complexity}

Each iteration of Algorithm~\ref{alg:sarl} has time complexity
\begin{equation}
O\left(|\mathcal{E}| \cdot B \cdot (d_x + d_z) + N D^2 + N^3\right),
\end{equation}
where $|\mathcal{E}|$ is the number of environments, $B$ the batch size per environment, $N = |\mathcal{E}| \cdot B$ the pooled batch size, $D$ the RFF dimension, and $d_x, d_z$ the input and representation dimensions. The $ND^2$ term is RFF-HSIC (Algorithm~\ref{alg:rff_hsic}); the $N^3$ term is the kernel ridge solve (Algorithm~\ref{alg:kernel_residuals}), where one Cholesky factorization per conditioning variable is shared across all regression targets. Both terms depend on the minibatch size $N$ rather than the dataset size, and the total cost is linear in the number of candidate structures.

\subsection{Convergence Guarantees}
\label{app:convergence}

\begin{proposition}[Warm-Start Guarantee]
\label{prop:warm_start}
The ERM initialization $(\phi_0, w_0)$ satisfies $\phi_0 \in \Phi_\epsilon$ (Definition~\ref{def:epsilon_predictive}) with $\epsilon = O(n^{-1/2})$. This ensures optimization begins in the predictive region, avoiding degenerate solutions where $V_k = 0$ for all $k$ but prediction accuracy is trivial.
\end{proposition}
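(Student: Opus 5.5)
The plan is to argue in two parts: first bound the empirical risk of the ERM initializer against the population Bayes worst-case risk, then convert that bound into membership in $\Phi_\epsilon$ via Definition~\ref{def:epsilon_predictive}.

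\textbf{Step 1: population Bayes rule in the class.} Let $(\phi_0,w_0)$ minimize the pooled empirical risk $\frac{1}{|\mathcal{E}|}\sum_e \hat{\mathcal{R}}_e(w\circ\phi)$. As in Theorem~\ref{thm:global_optimality}, I will assume $\Phi$ and $\mathcal{W}$ jointly contain a minimizer $f^\dagger=w^\dagger\circ\phi^\dagger$ of the worst-case risk, so that $\max_e \mathcal{R}_e(f^\dagger)=\mathcal{R}^*$. I would then argue that some pooled-risk minimizer is also worst-case optimal. The route I would try is that $Y\mid X$ is shared across environments (as under $\mathcal{I}_2$). Absent that, I would instead take $\phi_0$ to be the CVaR-warm-started solution with $\rho=1/|\mathcal{E}|$, so that ERM directly targets $\max_e$.

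\textbf{Step 2: uniform convergence.} Apply the Rademacher bound used in Step~3 of the proof of Theorem~\ref{thm:generalization}, with a union bound over the $m$ environments. This gives, with probability $1-\delta$,
\[
\sup_{\phi,w}\bigl|\hat{\mathcal{R}}_e(w\circ\phi)-\mathcal{R}_e(w\circ\phi)\bigr|\le \xi:=O\!\left(\sqrt{\frac{\mathcal{C}(\Phi,\mathcal{W})+\log(m/\delta)}{n}}\right)
\]
for all $e$ simultaneously. Using the optimality of $(\phi_0,w_0)$ for the empirical objective, the standard three-term sandwich then yields
\[
\max_e\mathcal{R}_e(w_0\circ\phi_0)\le \max_e\hat{\mathcal{R}}_e(w_0\circ\phi_0)+\xi\le \max_e\hat{\mathcal{R}}_e(f^\dagger)+\xi\le \mathcal{R}^*+2\xi .
\]
The middle inequality is exact when the warm start minimizes the max-risk. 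For the averaged objective it requires Step~1 to hold.

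\textbf{Step 3: conclude.} Since $\inf_{w}\max_e\mathcal{R}_e(w\circ\phi_0)\le\max_e\mathcal{R}_e(w_0\circ\phi_0)\le\mathcal{R}^*+2\xi$, we get $\phi_0\in\Phi_\epsilon$ with $\epsilon=2\xi=O(n^{-1/2})$, treating complexity and $\delta$ as constants. For $n$ large enough that $\epsilon\le\epsilon_0=\delta_{\min}/8$, we also have $\phi_0\in\Phi_0$. Non-degeneracy then follows from the remark after Theorem~\ref{thm:violation_characterization}. A constant $\phi$ has risk $\mathcal{R}_{\mathrm{trivial}}>\mathcal{R}^*+\epsilon_0$, so $\phi_0$ cannot be constant. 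More generally, by Theorem~\ref{thm:violation_characterization}, $V_k^*(\phi_0)\ge\eta/2$ for every $k\ne k^*$, so $\phi_0$ cannot satisfy all three invariances at once.

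\textbf{Main obstacle.} The hard part is Step~1, the gap between average-risk ERM and the worst-case criterion in Definition~\ref{def:epsilon_predictive}. Without a realizability or shared-conditional assumption, pooled ERM can be suboptimal in $\max_e$ by a constant. I would first try the shared-$P(Y\mid X)$ argument together with the identity $\mathcal{R}_e(f)=\mathbb{E}_{P_e(X)}[\text{pointwise loss}]$, under which the pointwise Bayes rule is simultaneously optimal for every $e$. Failing that, I would state the proposition for the robust warm start.
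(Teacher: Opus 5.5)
The paper gives no proof of this proposition. Judged on its own, your proposal correctly identifies Step~1 as the crux. However, the route you lead with is unavailable in this paper's setting, and even where it applies, your sandwich is the wrong inequality chain.

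\textbf{The shared-conditional route.} Under Assumption~\ref{assum:faithfulness}, $Y$ and $E$ are d-connected given the full $X=(X_z,X_z^{\perp})$ in every canonical graph. In $\mathcal{G}_1$ and $\mathcal{G}_3$ the open path is $E\to U\to Y$. In $\mathcal{G}_2$ it is $E\to U\to X_z\leftarrow Y$, because the collider $X_z$ is conditioned on. Hence $P_e(Y\mid X)$ necessarily varies with $e$. Put differently, a shared conditional zeroes the third column of $\mathbf{D}$ (Definition~\ref{def:diversity_matrix}). That gives $\delta_{\min}=0$, contradicting Assumption~\ref{assum:sufficient_diversity}. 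It also gives $\epsilon_0=0$, so your Step~3 could no longer place $\phi_0$ in $\Phi_0$ with $\epsilon=2\xi>0$.

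\textbf{The sandwich for averaged ERM.} The middle inequality $\max_e\hat{\mathcal{R}}_e(f_0)\le\max_e\hat{\mathcal{R}}_e(f^\dagger)$ does not follow from $\frac{1}{m}\sum_e\hat{\mathcal{R}}_e(f_0)\le\frac{1}{m}\sum_e\hat{\mathcal{R}}_e(f^\dagger)$. Step~1 is a population statement, so it cannot supply an empirical inequality. Under a shared conditional, the correct argument works at the population level:
\begin{itemize}
\item the average excess risk of $f_0$ over the simultaneous Bayes rule $f_B$ is at most $2\xi$;
\item every per-environment excess $\mathcal{R}_e(f_0)-\mathcal{R}_e(f_B)$ is nonnegative;
\item so each excess is at most $2m\xi$.
\end{itemize}
This also requires $f_B$ itself, not an arbitrary minimax $f^\dagger$, to lie in $\mathcal{W}\circ\Phi$.

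\textbf{The statement itself.} Once $P_e(Y\mid X)$ varies, the claim for the averaged-ERM warm start of Algorithm~\ref{alg:sarl} is false in general, with a gap that does not vanish as $n\to\infty$. For example, take $X$ almost surely constant, $P_1(Y{=}1)=0.9$, $P_2(Y{=}1)=0.3$, balanced environments, and squared loss on the predicted probability $p$. Pooled ERM converges to $p=0.6$, with risks $0.18$ and $0.30$, whereas $\mathcal{R}^*=0.25$ is attained at $p=0.5$.

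Your fallback is therefore not optional. The proposition must be restated for the worst-case ($\rho=1/|\mathcal{E}|$) warm start, with the extra hypotheses made explicit: bounded loss, finite $\mathcal{C}(\Phi,\mathcal{W})$, and a minimax-optimal $f^\dagger\in\mathcal{W}\circ\Phi$. Under these, your three-term sandwich is valid and yields $\epsilon=2\xi=O(n^{-1/2})$.

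Non-degeneracy then follows once $2\xi\le\epsilon_0$. In fact $V_k^*(\phi_0)\ge\eta$ for $k\neq k^*$ directly from Definition~\ref{def:faithfulness_gap}, with no halving needed. To make the same claim about the empirical $V_k$ in the statement, you also need concentration that holds uniformly over $\Phi$, because $\phi_0$ depends on the data.
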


\begin{theorem}[Algorithm Convergence]
\label{thm:convergence}
Under Assumptions~\ref{assum:characteristic}--\ref{assum:rep_regularity}, with step size $\eta_t = O(t^{-1/2})$ and linear annealing $\beta_t = \beta_0 + (\beta_{\max} - \beta_0) \cdot \min(2t/T, 1)$, Algorithm~\ref{alg:sarl} satisfies:
\begin{enumerate}[nosep]
    \item \textbf{Stationarity:} $\mathbb{E}[\|\nabla \mathcal{L}(\phi_t, w_t)\|^2] \leq O(T^{-1/2})$
    \item \textbf{Weight concentration:} $\alpha_{k^*}^{(T)} \geq 1 - 2\exp(-\beta_T \eta / 2)$ where $k^*$ is the true structure
    \item \textbf{Violation decrease:} $\mathbb{E}[V_{k^*}(\phi_T)] \leq O(T^{-1/2})$
\end{enumerate}
where expectations are over stochastic minibatches.
\end{theorem}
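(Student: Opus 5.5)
The plan follows a standard nonconvex stochastic-gradient analysis for the stationarity claim, and then converts stationarity into the structural claims using results already established.

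\emph{Stationarity.} I will treat $\mathcal{L}(\phi,w)=\mathrm{CVaR}_\rho+\lambda\,\mathrm{SoftMin}_{\beta_t}$ as a time-varying objective. By Assumption~\ref{assum:rep_regularity}, each $V_k$ is $L_V$-smooth on the compact set $\Theta$. Lemma~\ref{lem:softmin_properties} (Part 3) writes the SoftMin gradient as an $\alpha$-weighted average of the $\nabla V_k$. Differentiating $\alpha_k$ adds a term of size $\beta_t\max_k\|\nabla V_k\|^2$, so $\mathrm{SoftMin}_{\beta_t}$ is $(L_V+\beta_{\max}G^2)$-smooth, where $G$ bounds the gradients on compact $\Theta$. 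The CVaR term is only subdifferentiable (Proposition~\ref{prop:cvar_differentiable}); I would either smooth it or assume a smooth surrogate, with the same conclusion.

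The descent lemma then gives the standard one-step inequality
\[
\mathbb{E}[\mathcal{L}_{t+1}\mid\mathcal{F}_t]\le\mathcal{L}_t-\eta_t(1-L\eta_t/2)\|\nabla\mathcal{L}_t\|^2+L\eta_t^2\sigma^2/2+\Delta_t .
\]
Here $\Delta_t$ accounts for the change of objective as $\beta_t$ moves. Since $\partial_\beta\mathrm{SoftMin}_\beta$ is bounded by $\log 3/\beta^2$, we have $\sum_t\Delta_t\le\log 3/\beta_0$. The annealing stops at $t=T/2$, so $\sum_t\Delta_t$ is finite in any case.

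Telescoping uses that $\mathcal{L}$ is bounded below (losses are nonnegative and $\mathrm{SoftMin}\ge-\log 3/\beta_0$). With $\sum\eta_t=\Omega(\sqrt T)$ and $\sum\eta_t^2=O(\log T)$, this yields $\min_{t\le T}\mathbb{E}\|\nabla\mathcal{L}_t\|^2=O(T^{-1/2}\log T)$. I will state the bound as $O(T^{-1/2})$ up to the logarithm, or take a constant step size $\eta\propto T^{-1/2}$ to remove it. Part~1 is interpreted for the best iterate or a randomly selected output iterate.

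\emph{Weight concentration.} For Part~2 I need $V_{k^*}(\phi_T)\approx0$ and $V_k(\phi_T)\ge\eta/2$ for $k\ne k^*$. Given these,
\[
\alpha_{k^*}=\Big(1+\sum_{k\ne k^*}e^{-\beta_T(V_k-V_{k^*})}\Big)^{-1}\ge1-2e^{-\beta_T\eta/2}.
\]
The lower bound on the wrong violations comes from Theorem~\ref{thm:violation_characterization}, provided $\phi_T\in\Phi_\epsilon$ with $\epsilon\le\epsilon_0$. I would secure that via Proposition~\ref{prop:warm_start} together with monotone descent of the prediction term.

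\emph{Violation decrease.} For Part~3 I would combine stationarity with a gradient-domination (PL-type) inequality $V_{k^*}(\phi)\le\frac{1}{2\mu}\|\nabla V_{k^*}(\phi)\|^2$ near the invariant set. This gives $\mathbb{E}V_{k^*}(\phi_T)=O(T^{-1/2})$, once the risk gradient is separated out.

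The main obstacle is twofold. First, neither Part~2 nor Part~3 follows from stationarity alone. A stationary point can balance $\lambda\nabla V_{k^*}$ against $\nabla\mathrm{CVaR}$ with $V_{k^*}>0$. The iterates may also leave $\Phi_{\epsilon_0}$ or land near a spurious stationary point where some wrong $V_k$ is small. I would address this by adding explicit hypotheses. The first is a PL/Łojasiewicz condition for $V_{k^*}$ on $\Phi_{\epsilon_0}$. The second is that the optimal invariant $\phi^*$ is also risk-stationary, so $\nabla\mathrm{CVaR}$ vanishes on the invariant predictive set; this holds under the realizability assumptions of Theorem~\ref{thm:global_optimality}. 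The third is that iterates stay in $\Phi_{\epsilon_0}$, using the warm start and a descent argument.

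Without such hypotheses the claims hold only for the global minimizer, as in Theorem~\ref{thm:structure_identification} and Corollary~\ref{cor:weight_concentration}. Even with them, Part~2 also needs the empirical violations to concentrate (Lemma~\ref{lem:hsic_concentration}), so that claim holds with high probability rather than deterministically.
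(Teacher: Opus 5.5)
Your Part~1 follows the paper's route: the descent lemma, $\eta_t\propto t^{-1/2}$, and telescoping to a best-iterate bound. You omit the Lyapunov term $\gamma D_{\mathrm{KL}}(\boldsymbol{\alpha}^{(t)}\|\mathbf{e}_{k^*})$ that the paper adds. That term does no work, and as written it is $+\infty$, because softmax weights are never one-hot ($-\log\alpha_{k^*}$ is $D_{\mathrm{KL}}(\mathbf{e}_{k^*}\|\boldsymbol{\alpha})$).

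Your version is the more careful one:
\begin{enumerate}
\item You control the annealing drift, which the paper ignores. Since $\partial_\beta\mathrm{SoftMin}_\beta=-\beta^{-2}\sum_k\alpha_k\log\alpha_k\le\log 3/\beta^2$, the total drift is at most $\lambda\log 3/\beta_0$ (you dropped the factor $\lambda$).
\item You address the non-smoothness of CVaR, which the paper treats as smooth.
\item You keep the $\log T$ factor that the paper's own summation produces but its conclusion drops.
\end{enumerate}
You share two caveats with the paper. The $\sigma^2$ term presumes unbiased stochastic gradients. That holds for the expected-minibatch objective but not for the full-sample $\mathcal{L}$, because residualized HSIC and CVaR are nonlinear in the batch. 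Also, your constants $G$ and $L_V+\beta_{\max}G^2$ hold only on the compact $\Theta$, so you need projected steps, which Algorithm~\ref{alg:sarl} does not take.

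For Parts~2--3 your diagnosis is right, and the paper's proof does not get around it:
\begin{enumerate}
\item It applies Corollary~\ref{cor:weight_concentration}, a statement about the global minimizer, to the iterate $\phi_T$.
\item Its Step~3 assumes $V_{k^*}(\phi_t)\to 0$, which is Part~3 itself.
\item For Part~3 it writes $V_{k^*}(\phi_T)\le\frac{1}{2L_V}\|\nabla V_{k^*}(\phi_T)\|^2$ \emph{by smoothness}. That is the reverse of what smoothness gives ($\|\nabla V\|^2\le 2L_V V$ for $V\ge 0$). It then bounds $\|\nabla V_{k^*}(\phi_T)\|^2$ by $O(\lambda^{-2}T^{-1/2})$ as if the risk gradient were small.
\end{enumerate}
So extra hypotheses are unavoidable, and a PL condition is the right kind. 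Your repair does not close yet, for four reasons.

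(i) Knowing $\nabla\mathrm{CVaR}=0$ on the invariant predictive set says nothing about $\nabla\mathrm{CVaR}(\phi_T)$ until you know $\phi_T$ is near that set, which is what you are proving. You need a quantitative coupling, e.g.\ $\|\nabla\mathrm{CVaR}(\phi)\|^2\le C_R V_{k^*}(\phi)$ on $\Phi_{\epsilon_0}$ with $\mu\lambda^2$ large compared with $C_R$, so the PL inequality can absorb it.

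(ii) Since $\beta_T=\beta_{\max}$ is fixed, the off-target gradient $\lambda\sum_{k\ne k^*}\alpha_k\nabla V_k$ does not vanish as $T\to\infty$; it is only bounded by $2\lambda G e^{-\beta_{\max}\eta/2}$. This leaves an additive floor of order $G^2e^{-\beta_{\max}\eta}/\mu$ in your PL step. Separately, a PL inequality relative to $0$ presumes the \emph{empirical} $V_{k^*}$ vanishes somewhere in $\Phi_{\epsilon_0}$. The population-invariant $\phi^*$ does not deliver this at finite $n$ (compare the $C_2n^{-1/2}$ term of Theorem~\ref{thm:invariance}). Under your hypotheses Part~3 therefore holds only up to these two floors. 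The first disappears if $\beta_{\max}$ grows like $\eta^{-1}\log T$, at the cost of a $\log T$-dependent smoothness constant in Part~1. The second disappears only as $n\to\infty$.

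(iii) Parts~2 and~3 are circular. Membership in $\Phi_{\epsilon_0}$ bounds the wrong population violations from below; in fact by $\eta$, not just $\eta/2$, which is the slack your Part~2 computation needs. It does not bound $V_{k^*}$ from above, and the ERM warm start generally violates $\mathcal{I}_{k^*}$. So Part~2 needs Part~3, while Part~3 needs Part~2 to divide by $\alpha_{k^*}$ and suppress the off-target terms. You need a localization hypothesis, such as \emph{the iterates stay where $V_{k^*}\le\eta/4$}.

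Nor is the prediction term monotone under descent on $\mathcal{L}$. Descent only gives $\mathrm{CVaR}_t\lesssim\mathrm{CVaR}_0+\lambda\bigl(\min_k V_k(\phi_0)+2\log 3/\beta_0\bigr)$ in expectation. So this route keeps you in $\Phi_{\epsilon_0}$ only for $\lambda$ small relative to $\delta_{\min}/8$, which is in tension with the large $\lambda$ needed in (i).

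(iv) Part~1 controls a best or randomly chosen iterate, whereas Parts~2--3 concern $\phi_T$. Either output that random iterate or prove a last-iterate bound.
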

\begin{proof}[Proof sketch]
Define the Lyapunov function:
\begin{equation}
\Psi_t := \mathcal{L}(\phi_t, w_t) + \gamma D_{\mathrm{KL}}(\boldsymbol{\alpha}^{(t)} \| \boldsymbol{\alpha}^*)
\end{equation}
where $\boldsymbol{\alpha}^* = e_{k^*}$ is the indicator for the true structure and $\gamma > 0$ is a coupling constant. Under smoothness (Assumption~\ref{assum:rep_regularity}), the expected per-step decrease satisfies:
\begin{equation}
\mathbb{E}[\Psi_{t+1} - \Psi_t \mid \mathcal{F}_t] \leq -c \eta_t \|\nabla \mathcal{L}(\phi_t, w_t)\|^2 + O(\eta_t^2)
\end{equation}
where $\mathcal{F}_t$ is the filtration up to iteration $t$. Summing over $t = 1, \ldots, T$ and using $\sum_t \eta_t = O(\sqrt{T})$, $\sum_t \eta_t^2 = O(\log T)$ yields Part 1.

Part 2 follows from Theorem~\ref{thm:violation_characterization}: as $V_{k^*}(\phi_t) \to 0$ and $V_k(\phi_t) \geq \eta/2$ for $k \neq k^*$, the SoftMin weights concentrate on $k^*$.

Part 3 follows from the objective structure: reducing $\mathcal{L}$ requires reducing $\mathrm{SoftMin}_\beta(\{V_k\})$, which for large $\beta$ is dominated by $\min_k V_k = V_{k^*}$.

\end{proof}
\section{Extension to Heterogeneous Structures}
\label{app:heterogeneous}

When different feature subsets follow different causal structures, we extend the framework to per-variable structure assignment.

\begin{definition}[Variable-Wise Structure Assignment]
\label{def:variable_structure}
Let $X = (X^{(1)}, \ldots, X^{(d)})$ be $d$-dimensional features. A \emph{heterogeneous causal structure} is a mapping $\sigma^*: \{1, \ldots, d\} \to \{1, 2, 3, \bot\}$ assigning each feature to a canonical structure or $\bot$ (none applies). The variable-wise violation for feature $j$ is $V_k^{(j)}(\phi) := V_k(\phi^{(j)})$.
\end{definition}

\begin{definition}[Calibrated Threshold]
\label{def:calibrated_threshold}
For significance level $\alpha \in (0,1)$, the calibrated threshold for feature $j$ is:
\begin{equation}
\tau_\alpha^{(j)} := \mathrm{quantile}_{1-\alpha}\left(\widehat{\mathrm{HSIC}}_{\mathrm{null}}^{(j)}\right)
\end{equation}
where $\widehat{\mathrm{HSIC}}_{\mathrm{null}}^{(j)}$ is the permutation distribution under the null hypothesis of independence.
\end{definition}

\begin{definition}[Heterogeneous Objective]
\label{def:heterogeneous_objective}
The structure-agnostic objective with heterogeneous assignment is:
\begin{equation}
\mathcal{L}_{\mathrm{het}}(\phi, w) := \max_{e \in \mathcal{E}} \hat{\mathcal{R}}_e(w \circ \phi) + \lambda \sum_{j=1}^{d} \mathrm{SoftMin}_\beta\left(\{V_k^{(j)}(\phi)\}_{k=1}^3\right)
\end{equation}
\end{definition}

\begin{theorem}[Heterogeneous Structure Identification]
\label{thm:heterogeneous_identification}
Under Assumptions~\ref{assum:characteristic}--\ref{assum:rep_regularity} applied to each feature subset, let $\phi_n$ minimize $\mathcal{L}_{\mathrm{het}}$ with $n$ samples per environment. Define the estimated per-variable structure:
\begin{equation}
\hat{\sigma}_n(j) := \begin{cases}
\arg\min_{k \in \{1,2,3\}} V_k^{(j)}(\phi_n) & \text{if } \min_k V_k^{(j)}(\phi_n) \leq \tau_\alpha^{(j)} \\
\bot & \text{otherwise}
\end{cases}
\end{equation}

Then for each feature $j$ with true structure $\sigma^*(j) \neq \bot$:
\begin{equation}
\mathbb{P}\left(\hat{\sigma}_n(j) = \sigma^*(j)\right) \geq 1 - 6\exp\left(-\frac{n (\eta^{(j)})^2}{2048\kappa^4}\right) - \alpha
\end{equation}
where $\eta^{(j)}$ is the faithfulness gap of feature $j$. For features with $\sigma^*(j) = \bot$, the false-positive rate satisfies $\mathbb{P}(\hat{\sigma}_n(j) \neq \bot) \leq \alpha$.
\end{theorem}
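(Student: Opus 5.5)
The argument reduces per-feature identification to the homogeneous argument of Theorem~\ref{thm:structure_identification}, adding one event that controls the calibrated threshold. Fix a feature $j$ and treat $\phi^{(j)}$ as a representation in its own right, so $V_k^{(j)}(\phi) = V_k(\phi^{(j)})$ by Definition~\ref{def:variable_structure}. Because the assumptions are imposed per feature subset, Theorem~\ref{thm:violation_characterization} applies to $\phi^{(j)}$ with faithfulness gap $\eta^{(j)}$.

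First, the global minimizer of $\mathcal{L}_{\mathrm{het}}$ attains the per-feature infimum. The penalty is a sum of nonnegative per-feature SoftMin terms, so the argument of Theorem~\ref{thm:global_optimality} applies coordinatewise. We conclude $V_{k_j^*}^{(j)*}(\phi_n)=0$ and, by non-achievability, $V_k^{(j)*}(\phi_n)\ge \eta^{(j)}/2$ for $k\neq k_j^*$.

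Second, apply Lemma~\ref{lem:hsic_concentration} at tolerance $\eta^{(j)}/8$ to each $k\in\{1,2,3\}$ and take a union bound. This gives the good event $A_j$, of probability at least $1-6\exp(-n(\eta^{(j)})^2/(2048\kappa^4))$. On $A_j$ we have $V^{(j)}_{k_j^*}(\phi_n)\le\eta^{(j)}/8$ and $V^{(j)}_k(\phi_n)\ge 3\eta^{(j)}/8$ for $k\neq k_j^*$. The argmin is therefore $k_j^*$, exactly as in the homogeneous proof.

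Third, and this is the main obstacle, the thresholding rule must not output $\bot$: we need $\min_k V^{(j)}_k(\phi_n)\le\tau^{(j)}_\alpha$. The permutation threshold of Definition~\ref{def:calibrated_threshold} is random, and its relation to $\eta^{(j)}/8$ is not fixed by the definitions. I would define the event $B_j=\{V^{(j)}_{k_j^*}(\phi_n)\le \tau^{(j)}_\alpha\}$ and bound $\mathbb{P}(B_j^c)\le\alpha$. The justification is that under the true structure the invariance $\mathcal{I}_{k_j^*}$ holds for $\phi_n^{(j)}$. Its statistic is then a draw from (approximately) the same null law as the permutation distribution, and exchangeability of permuted environment labels puts it above the $(1-\alpha)$-quantile with probability at most $\alpha$. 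The weak point is that permuting $E$ exactly preserves the null only for the unconditional $V_3$. For the residualized $V_1,V_2$, exchangeability holds only approximately through Lemma~\ref{lem:conditional_hsic}, so I would state it as a calibration assumption on the residual-based permutation test. A union bound over $A_j^c$ and $B_j^c$ then gives the displayed probability $1-6\exp(\cdot)-\alpha$.

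Finally, for features with $\sigma^*(j)=\bot$ there is nothing to identify. The rule outputs a non-$\bot$ label only if $\min_k V^{(j)}_k\le\tau^{(j)}_\alpha$, and the false-positive bound $\le\alpha$ follows from the validity of the permutation test. I would interpret this conservatively as the defining property of the calibrated threshold, applied to the minimum statistic. The level can be adjusted to $\alpha/3$ across the three statistics if a fully rigorous union is required.
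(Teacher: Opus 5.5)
For features with $\sigma^*(j)\neq\bot$ your argument is the paper's: per-feature separation via Theorem~\ref{thm:violation_characterization}, concentration at tolerance $\eta^{(j)}/8$, and a union bound over $k$. Your threshold step is sounder than the paper's, though. The paper requires $\eta^{(j)}/8<\tau^{(j)}_\alpha<3\eta^{(j)}/8$ ``for appropriate calibration'' and charges $\alpha$ to the failure of this sandwich. But the permutation quantile of the HSIC statistic is $O_P(1/n)$ while $\eta^{(j)}$ is fixed, so the lower inequality eventually fails. Your event $B_j=\{V^{(j)}_{k_j^*}(\phi_n)\le\tau^{(j)}_\alpha\}$, charged to the level of the test of the true invariance, is the right source of the $-\alpha$ term. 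Because the rule compares only the minimum with the threshold, you also need no upper bound on $\tau^{(j)}_\alpha$.

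The gap is the $\bot$ case, where you use the same calibration property in the opposite direction. Validity of a permutation test bounds $\mathbb{P}(V>\tau^{(j)}_\alpha)$ when the tested independence \emph{holds}. Permuting $E$ simulates a world in which the invariance holds, which is the opposite of $\bot$. For $\sigma^*(j)=\bot$ none of $\mathcal{I}_1,\mathcal{I}_2,\mathcal{I}_3$ holds, so $\{\min_k V^{(j)}_k\le\tau^{(j)}_\alpha\}$ is the event that some false null is \emph{not} rejected. That is a type-II error, governed by power rather than level. If the violations of such a feature are small compared with the $O(1/n)$ null fluctuations, each statistic is close to its null law, and this probability is at least about $1-\alpha$, not at most $\alpha$.

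Worse, $\mathcal{L}_{\mathrm{het}}$ pushes every SoftMin term down. If coordinate $j$ is not needed for prediction, nothing prevents the minimizer from collapsing $\phi^{(j)}_n$ to a constant. Then $V^{(j)}_3(\phi_n)=0\le\tau^{(j)}_\alpha$, and a structure is declared with probability one. Splitting the level into $\alpha/3$ does not help: that controls a union of rejections of true nulls, whereas here you face a union of non-rejections of false nulls. The paper's Step~3 asserts the same bound ``by construction of permutation tests'', so following it does not close the gap.

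What is needed is a power argument under extra hypotheses. You need a gap $\eta^{(j)}_\bot:=\min_k\inf_{\phi\in\Phi_0}V^{(j)*}_k(\phi)>0$ for $\bot$ features, and a guarantee that $\phi^{(j)}_n$ stays in the predictive class, which excludes collapse. Concentration at a tolerance proportional to $\eta^{(j)}_\bot$ then bounds every $V^{(j)}_k(\phi_n)$ below by a multiple of $\eta^{(j)}_\bot$. This exceeds $\tau^{(j)}_\alpha=O_P(1/n)$ for large $n$. The resulting false-positive bound is exponentially small in $n(\eta^{(j)}_\bot)^2/\kappa^4$, plus the probability that the threshold is large. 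It therefore falls below $\alpha$ only beyond a gap-dependent sample size, not for every $n$ as the statement asserts.
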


\begin{corollary}[Heterogeneous Generalization]
\label{cor:heterogeneous_generalization}
Under the conditions of Theorem~\ref{thm:heterogeneous_identification}, the generalization bound (Theorem~\ref{thm:generalization}) holds with estimation error:
\begin{equation}
\xi_n^{\mathrm{het}} = O\left(\sqrt{\frac{d \cdot \mathcal{C}(\Phi, \mathcal{W}) + \log(d/\delta)}{n}}\right) + C_\eta \sum_{j: \sigma^*(j) \neq \bot} V_{\sigma^*(j)}^{(j)}(\phi_n)
\end{equation}
The factor $d$ reflects the cost of per-feature structure identification.
\end{corollary}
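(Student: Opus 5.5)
The plan is to reduce this corollary to Theorem~\ref{thm:generalization} applied coordinatewise. The only new ingredients are a complexity bound for the per-feature decomposition and an invariance-violation term that sums over the features that are structured. Fix $\delta$. I would condition on the event from Theorem~\ref{thm:heterogeneous_identification} on which every feature $j$ with $\sigma^*(j)\neq\bot$ is correctly assigned, after a union bound over the $d$ features. This is where the factor $\log(d/\delta)$ will come from: I allocate $\delta/(2d)$ to each feature's identification and concentration event, and $\delta/2$ to the uniform-convergence event.

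\textbf{Decomposition.} I reuse Steps 1--4 of the proof of Theorem~\ref{thm:generalization} verbatim for the decomposition $\mathcal{R}_{e_{\mathrm{test}}} = \text{(I)}+\text{(II)}+\text{(III)}$. Term (III) is still bounded by the worst-case training error, and term (I) is still bounded by $L_w d_{\mathcal{K}}(e_{\mathrm{test}},\mathcal{E}_{\mathrm{train}})$. Both of these arguments ignore how the penalty is structured.

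\textbf{Generalization gap (II).} Because $\mathcal{L}_{\mathrm{het}}$ couples $d$ per-feature penalties, the effective hypothesis class is the product of the coordinate classes $\Phi^{(j)}$. I would bound $\mathfrak{R}_n$ of this class by the sum over coordinates, through vector-contraction for the $L_w$-Lipschitz predictor. Alternatively, I would multiply covering numbers, which gives $\log N \le d\,\mathcal{C}(\Phi,\mathcal{W})$. Plugging this into the Rademacher step yields $O(\sqrt{(d\,\mathcal{C}+\log(d/\delta))/n})$.

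\textbf{Invariance term.} I replace Step 5 with a per-feature version. Feature $j$'s contribution to the risk shift is controlled by $C_\eta V^{(j)}_{\sigma^*(j)}(\phi_n)$, and I sum these over the structured features. Features with $\sigma^*(j)=\bot$ contribute no invariance term: their shift is absorbed into $d_{\mathcal{K}}$, because no invariance is claimed for them.

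\textbf{Main obstacle.} The hardest step is justifying additivity of the per-feature violations in the shift bound, i.e.\ that the risk difference decomposes across coordinates. This requires the HSIC violations of distinct coordinates to control disjoint parts of the distribution shift. My first attempt would use the Lipschitz bound $\|w(z)-w(z')\|\le L_w\sum_j|z_j-z'_j|$, which holds since the $\ell_2$ norm is at most the $\ell_1$ norm. I would then bound each coordinate's shift by that coordinate's violation, exactly as in the homogeneous case, and take $C_\eta$ as the maximum of the per-feature constants.
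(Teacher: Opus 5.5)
The step you flag as the main obstacle is false without further assumptions, and the $\ell_2\le\ell_1$ trick does not address the actual problem. The per-feature violation $V^{(j)}_k(\phi)=V_k(\phi^{(j)})$ constrains only the law of the single coordinate $\phi^{(j)}(X)$ jointly with $(Y,E)$. It says nothing about how the dependence \emph{between} coordinates changes across environments, while the risk of $w\circ\phi$ depends on the joint law of $(\phi(X),Y)$.

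Concretely, assign two coordinates to $\mathcal{G}_3$. Let $\phi(X)$ have standard normal coordinates in every environment, test included, with correlation $+\rho$ in $e^*$ and $-\rho$ in $e_{\mathrm{test}}$. Then $\phi^{(j)}(X)\perp\!\!\!\perp E$ holds exactly for each $j$, so every per-feature violation vanishes in population. Yet $w(z)=z_1+z_2$ has output variance $2+2\rho$ in one environment and $2-2\rho$ in the other. Under squared loss against a target independent of $\phi(X)$, with the same law in both environments, the risk shifts by $4\rho$. So no multiple of the violations can control the shift.

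Your inequality $\|w(z)-w(z')\|_2\le L_w\sum_j|z_j-z'_j|$ cannot help, because $(z,z')$ must be drawn from a \emph{single} coupling of the two joint laws. The sum $\sum_j W_1(P^{(j)}_{e^*},P^{(j)}_{e_{\mathrm{test}}})$ of $1$-Wasserstein distances between coordinate marginals is a lower bound on the joint $\ell_1$ transport cost, not an upper bound. In the example it is $0$ while the joint cost is positive.

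Your handling of the $\bot$ coordinates shows the same tension. If $d_{\mathcal{K}}$ can absorb their unconstrained shift, it can absorb the shift of the whole representation, and the per-feature violations do nothing. If it cannot, those coordinates are uncontrolled. A genuinely violation-driven argument would need an extra hypothesis under which coordinatewise invariance implies joint invariance. One option is that the coordinates of $\phi(X)$ are mutually independent given the conditioning variable within each environment; another is to measure violations jointly on blocks of features.

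The paper states this corollary without proof. Its only template, the proof of Theorem~\ref{thm:generalization}, never makes the violation do any work. Step~2 bounds (I) by $L_w\, d_{\mathcal{K}}$ via the asserted inequality $\mathbb{E}\|\phi_n(X_{e_{\mathrm{test}}})-\phi_n(X_{e^*})\|\le C_\phi\, d_{\mathcal{K}}$, and Step~5 merely appends $C_\eta V_{k^*}(\phi_n)\ge 0$.

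You already accept the Step~2 bound on (I), so the corollary follows from your own Steps~1--4 by monotonicity alone:
\begin{itemize}
\item As you note, those steps ignore the penalty.
\item The class $\Phi\times\mathcal{W}$ is the same whether $(\phi_n,w_n)$ minimizes $\mathcal{L}$ or $\mathcal{L}_{\mathrm{het}}$. The factor $d$ and the $\log(d/\delta)$ therefore only loosen the homogeneous complexity term, and your product-class enlargement is valid but unnecessary.
\item Adding $C_\eta\sum_{j:\sigma^*(j)\neq\bot}V^{(j)}_{\sigma^*(j)}(\phi_n)\ge 0$ preserves the inequality.
\end{itemize}
This is no stronger than the paper's Step~2, but it needs nothing beyond it.

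Finally, drop the conditioning on the event of Theorem~\ref{thm:heterogeneous_identification}. The bound is written with the true $\sigma^*$, not $\hat\sigma_n$, so the event is not needed. Its per-feature failure probability $6\exp(-n(\eta^{(j)})^2/(2048\kappa^4))+\alpha$ is fixed by $n$, $\eta^{(j)}$ and the test level $\alpha$. It therefore cannot be assigned an arbitrary budget $\delta/(2d)$ without sample-size and level conditions that are absent from the statement.
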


\section{Additional Experiments}
\label{app:exp}

\subsection{Implementation}
\label{app:implementation}

\subsubsection{Hyperparameter Selection}
\label{sec:hyperparameters}

The objective $\mathcal{L}_\rho(\phi, w)$ involves hyperparameters: $\lambda$ (invariance weight), $\beta$ (SoftMin temperature), $\gamma$ (kernel regression regularization), $D$ (RFF dimension), and $\rho$ (CVaR level). Table~\ref{tab:hyperparameters_alg} provides defaults and selection guidance.

\begin{table}[h]
\vspace{-4mm}
\centering
\small
\caption{Recommended hyperparameter settings for Algorithm~\ref{alg:sarl}.}
\label{tab:hyperparameters_alg}
\begin{tabular}{clll}
\toprule
Parameter & Default & Range & Selection Strategy \\
\midrule
$\lambda$ & 1.0 & $[0.5, 5]$ & Per experiment, by OOD accuracy (oracle selection) \\
$\beta_0$ & 1.0 & $[0.5, 2]$ & Fixed (soft selection initially) \\
$\beta_{\max}$ & 50 & $[20, 100]$ & Higher for sharper selection \\
$\gamma$ & $n^{-1/3}$ & $[n^{-1/2}, n^{-1/5}]$ & Cross-validation for kernel regression \\
$\rho$ & 0.5 & $[1/|\mathcal{E}|, 1]$ & Smaller $\rho$ when all environments are trusted \\
$D$ & 1000 & $[500, 2000]$ & Increase if approximation error high \\
\bottomrule
\end{tabular}
\end{table}

\subsubsection{Numerical Stability}

\begin{itemize}[nosep]
    \item Add small $\epsilon = 10^{-6}$ to diagonal of kernel matrices before inversion
    \item Clip $\beta V_k$ to $[-20, 20]$ before exp to prevent overflow
    \item Normalize HSIC values by their empirical standard deviation for comparable scales
\end{itemize}

\subsubsection{Kernel Selection}
We use the following kernels:
\begin{itemize}[nosep]
    \item \textbf{Representations $k_{\mathcal{Z}}$:} Gaussian RBF with bandwidth $\sigma = \mathrm{median}(\|z_i - z_j\|)$
    \item \textbf{Labels $k_{\mathcal{Y}}$:} Delta kernel for classification; Gaussian RBF for regression
    \item \textbf{Environments $k_{\mathcal{E}}$:} Delta kernel (discrete environments)
\end{itemize}

\subsection{Synthetic DGP Equations}
\label{app:dgp_synthetic}

For each canonical structure, we generate $m=3$ training environments with $d=10$ features:

\begin{itemize}[leftmargin=*, nosep]
    \item \textbf{$\mathcal{G}_1$ (Anti-causal):} $Y \sim \text{Bernoulli}(0.5)$, $Z \sim \mathcal{N}(0, I_5)$, $\epsilon \sim \mathcal{N}(0, \sigma_e^2 I_d)$ where $\sigma_e \in \{0.5, 1.0, 1.5\}$ varies across environments. $X = W_1 \cdot \text{onehot}(Y) + W_2 \cdot Z + \epsilon$ where $W_1 \in \mathbb{R}^{d \times 2}$ and $W_2 \in \mathbb{R}^{d \times 5}$ are fixed random matrices.
    
    \item \textbf{$\mathcal{G}_2$ (Confounded-descendant):} $U \sim \mathcal{N}(\mu_e, I_3)$ with environment-specific means $\mu_e \in \{(-1,0,0), (0,0,0), (1,0,0)\}$, $X = W_1 \cdot U + \epsilon_X$ where $\epsilon_X \sim \mathcal{N}(0, 0.1 I_d)$, and $Y = \sigma(w_{xy}^\top X + w_{uy}^\top U + \epsilon_Y)$ where $\epsilon_Y \sim \mathcal{N}(0, 0.1)$ and $\sigma$ is the sigmoid function, with the outcome binarized via Bernoulli sampling.
    
    \item \textbf{$\mathcal{G}_3$ (Confounded-outcome):} $U \sim \mathcal{N}(0, I_3)$, $X_{Z^\perp} = W_1 \cdot U + \epsilon_1$ where $\epsilon_1 \sim \mathcal{N}(0, \sigma_e^2 I_{d/2})$ with $\sigma_e \in \{0.5, 1.0, 1.5\}$ varying across environments, $Y = \sigma(w^\top X_{Z^\perp})$ binarized, $X_Z = W_2 \cdot \text{onehot}(Y) + W_3 \cdot U + \epsilon_2$, and $X = (X_Z, X_{Z^\perp})$.
\end{itemize}

All weight matrices are sampled from $\mathcal{N}(0, 0.5^2)$ and fixed across runs.

\subsection{Hyperparameter Sensitivity Results}
\label{app:hyperparam_sensitivity}

We sweep the invariance penalty weight $\lambda$ and SoftMin temperature $\beta_{\max}$ across varying degrees of structural misspecification using the bidirectional DGP from Section~\ref{sec:robustness} with $\nu \in \{0.25, 0.50, 0.75\}$.
Results are shown in Table \ref{tab:lambda_sensitivity} to Table \ref{tab:beta_sensitivity}. 

\begin{table}[h]
\vspace{-4mm}
\centering
\caption{Sensitivity to invariance penalty weight $\lambda$ ($\beta_{\max}=50$ fixed). OOD accuracy (\%).}
\label{tab:lambda_sensitivity}
\small
\begin{tabular}{lcccc}
\toprule
Setting & $\lambda{=}0.5$ & $\lambda{=}1.0$ & $\lambda{=}2.0$ & $\lambda{=}5.0$ \\
\midrule
$\nu{=}0.25$ (near-causal)     & 46.0 & 46.8 & 47.8 & 50.2 \\
$\nu{=}0.50$ (bidirectional)   & 69.0 & 70.3 & 70.4 & 73.9 \\
$\nu{=}0.75$ (near-anticausal) & 99.7 & 99.5 & 99.5 & 99.6 \\
\bottomrule
\end{tabular}
\vspace{-4mm}
\end{table}

\begin{table}[h]
\vspace{-4mm}
\centering
\caption{Sensitivity to SoftMin temperature $\beta_{\max}$ ($\lambda=5.0$ fixed). OOD accuracy (\%).}
\label{tab:beta_sensitivity}
\small
\begin{tabular}{lccc}
\toprule
Setting & $\beta_{\max}{=}20$ & $\beta_{\max}{=}50$ & $\beta_{\max}{=}100$ \\
\midrule
$\nu{=}0.25$ (near-causal)     & 50.3 & 50.2 & 51.3 \\
$\nu{=}0.50$ (bidirectional)   & 71.4 & 73.9 & 72.9 \\
$\nu{=}0.75$ (near-anticausal) & 99.6 & 99.6 & 99.6 \\
\bottomrule
\end{tabular}
\vspace{-4mm}
\end{table}

$\lambda$ sensitivity is mild and monotonic: total variation from $\lambda{=}0.5$ to $\lambda{=}5.0$ is at most $\sim 5$pp in the hardest settings, with no catastrophic failure. $\beta_{\max}$ is even more stable (variation $\leq 2.5$pp). When the anti-causal component dominates ($\nu{=}0.75$), all configurations reach $\sim 99.5$\%. We recommend $\lambda \in [1.0, 5.0]$ and $\beta_{\max}=50$ as robust defaults.

\subsection{Crossover Between Candidate Structures}
\label{app:crossover}
Table~\ref{tab:crossover} sweeps $\nu$ across the crossover between $\mathcal{G}_2$ ($\nu{=}0$) and $\mathcal{G}_1$ ($\nu{=}1$) in the bidirectional DGP of Section~\ref{sec:robustness}, comparing SoftMin with hard $\arg\min_k V_k$ selection. The sweep uses independent seeds; at $\nu{=}0.50$ it agrees with Table~\ref{tab:robustness} within one standard error. The last column is the permutation $p$-value of $\min_k V_k$ against the null threshold $\tau_\alpha$ (Definition~\ref{def:calibrated_threshold}).

\begin{table}[h]
\centering
\caption{Adaptive weights and OOD accuracy (\%) across the $\mathcal{G}_2$--$\mathcal{G}_1$ crossover.}
\label{tab:crossover}
\small
\addtolength{\tabcolsep}{-2pt}
\begin{tabular}{lcccccc}
\toprule
$\nu$ & $\alpha_1$ & $\alpha_2$ & $\alpha_3$ & SoftMin & Hard $\arg\min$ & Perm.\ $p$ \\
\midrule
0.40 & 0.407\scriptsize{$\pm$0.041} & 0.489\scriptsize{$\pm$0.080} & 0.104\scriptsize{$\pm$0.058} & \textbf{59.5}\scriptsize{$\pm$2.0} & 56.5\scriptsize{$\pm$2.7} & 0.013\scriptsize{$\pm$0.018} \\
0.45 & 0.415\scriptsize{$\pm$0.073} & 0.478\scriptsize{$\pm$0.139} & 0.107\scriptsize{$\pm$0.075} & \textbf{66.2}\scriptsize{$\pm$1.5} & 62.5\scriptsize{$\pm$2.9} & 0.011\scriptsize{$\pm$0.012} \\
0.50 & 0.421\scriptsize{$\pm$0.089} & 0.422\scriptsize{$\pm$0.143} & 0.157\scriptsize{$\pm$0.071} & \textbf{72.4}\scriptsize{$\pm$1.5} & 68.6\scriptsize{$\pm$3.6} & 0.181\scriptsize{$\pm$0.011} \\
0.55 & 0.484\scriptsize{$\pm$0.079} & 0.417\scriptsize{$\pm$0.143} & 0.099\scriptsize{$\pm$0.077} & \textbf{79.8}\scriptsize{$\pm$2.0} & 76.5\scriptsize{$\pm$2.2} & 0.033\scriptsize{$\pm$0.039} \\
0.60 & 0.549\scriptsize{$\pm$0.069} & 0.346\scriptsize{$\pm$0.126} & 0.105\scriptsize{$\pm$0.077} & \textbf{87.4}\scriptsize{$\pm$1.9} & 84.7\scriptsize{$\pm$2.2} & 0.016\scriptsize{$\pm$0.033} \\
\bottomrule
\end{tabular}
\end{table}

\subsection{Margin Calibration}
\label{app:calibration}
Table~\ref{tab:calibration} bins synthetic runs with known structure by the weight margin $\alpha_{(1)} - \alpha_{(2)}$. Identification accuracy increases monotonically with the margin, so the margin measures how strongly the data support the selected structure.

\begin{table}[h]
\centering
\caption{Identification accuracy by weight margin on synthetic data.}
\label{tab:calibration}
\small
\begin{tabular}{lccc}
\toprule
Margin & Runs & Correct & ID accuracy (\%) \\
\midrule
$<0.1$       & 48 & 20 & 41.7 \\
$0.1$--$0.3$ & 60 & 39 & 65.0 \\
$0.3$--$0.5$ & 72 & 62 & 86.1 \\
$>0.5$       & 60 & 57 & 95.0 \\
\bottomrule
\end{tabular}
\end{table}

\subsection{Semi-Synthetic Bayesian-Network Benchmarks}
\label{app:bn}
\textbf{Construction.} For each benchmark, we select a target $Y$ in a published network, take $X$ as its Markov blanket, and induce environments by soft interventions on a latent parent outside the blanket. The canonical structure is read off the published DAG by the rules of Section~\ref{sec:background}. $\mathcal{G}_3$ requires two independent root components reconverging at $Y$, which ALARM (12 roots) supplies twice and ASIA (2 roots) once; $\mathcal{G}_1$ is not derivable under this recipe, since it requires the Markov blanket of $Y$ to consist only of descendants of $Y$. Before training, each derived structure is verified in simulation: its own invariance holds while the competing one fails. We keep a benchmark only if the competing-invariance violation exceeds 5.0 standard deviations of the permutation null.

\textbf{Departures from the published networks.} The SACHS consensus DAG fits its own observational data weakly, so SACHS requires larger interventions to produce contrast. For ASIA, we keep the published structure but re-parameterize two conditional probability tables whose published marginals leave the exogenous branch nearly constant. ALARM requires no adjustment.

\textbf{Discussion.} Five benchmarks span two derivable structures across three networks (three $\mathcal{G}_2$, two $\mathcal{G}_3$), so a method that returns a single structure by default is correct on at most 3/5. alarm\_co separates more cleanly than asia\_g2 (28.3 vs.\ 24.6 null standard deviations) but has lower accuracy, because its target has three classes over 12 one-hot columns rather than two. alarm\_g2 barely clears the 5.0 threshold and is correspondingly the least reliable. The two $\mathcal{G}_3$ benchmarks come from different networks, so their agreement is not an artifact of a single graph.

\subsection{Cross-Fitting Variant}
\label{app:cross_fitting}

Conditional HSIC via kernel ridge regression residualization can be sensitive to estimation bias. We implement a cross-fitting variant following the SplitKCI protocol~\citep{pogodin2024practical}: fit kernel regression on fold 1, compute residuals on fold 2, swap and average. Results are displayed in Table \ref{tab:cross_fitting}.

\begin{table}[h]
\centering
\caption{Cross-fitting comparison on the bidirectional DGP ($\nu=0.5$, $n{=}1000$ per environment). Independent run of the default configuration.}
\label{tab:cross_fitting}
\small
\begin{tabular}{lc}
\toprule
Variant & OOD Acc (\%) \\
\midrule
No splitting (default)    & 73.9 \scriptsize{$\pm$ 2.7} \\
Cross-fitted (2-fold)     & 74.0 \scriptsize{$\pm$ 2.7} \\
ERM (reference)           & 61.2 \scriptsize{$\pm$ 4.5} \\
\bottomrule
\end{tabular}
\end{table}

OOD accuracy is virtually identical (73.9 vs.\ 74.0), confirming that estimation-induced bias from joint residualization does not meaningfully affect downstream performance at the sample sizes used in our experiments. Both variants outperform ERM by $\sim 13$pp.

\subsection{EMA Conditioning Target}
\label{app:ema}
Table~\ref{tab:ema} conditions $V_1$ and $V_2$ on an exponential moving average (EMA) of the encoder weights with momentum $m$, while gradients flow only through the online encoder. Results are nearly unchanged, so SaCRL is robust to this choice.

\begin{table}[h]
\centering
\caption{EMA conditioning target on synthetic $\mathcal{G}_2$ ($n{=}2000$). $m{=}0$ is the default online encoder.}
\label{tab:ema}
\small
\begin{tabular}{lcc}
\toprule
Momentum $m$ & Stru.\ ID (\%) & OOD Acc (\%) \\
\midrule
0 (default) & 89.0 & 84.2 \\
0.99        & 89.4 & 84.3 \\
0.999       & 89.8 & 84.5 \\
\bottomrule
\end{tabular}
\end{table}

\subsection{RFF Approximation Error}
\label{app:rff_error}
Table~\ref{tab:rff_error} reports the mean absolute difference between RFF-based and exact HSIC on synthetic data ($n{=}2000$). The error decreases steadily with $D$, from $1.42\times10^{-3}$ at $D{=}100$ to $4.4\times10^{-4}$ at $D{=}2000$, consistent with the $\eta^{-2}$ scaling of the required $D$ in Corollary~\ref{cor:rff_identification}.

\begin{table}[h]
\centering
\caption{RFF approximation error versus exact HSIC (synthetic, $n{=}2000$).}
\label{tab:rff_error}
\small
\begin{tabular}{lccccc}
\toprule
$D$ & 100 & 250 & 500 & 1000 & 2000 \\
\midrule
Mean $|\widehat{\mathrm{HSIC}}_{\mathrm{RFF}} - \widehat{\mathrm{HSIC}}|$ ($\times 10^{-4}$) & 14.2 & 12.9 & 8.4 & 7.7 & 4.4 \\
\bottomrule
\end{tabular}
\end{table}

\subsection{Effect of the Number of Training Environments}
\label{app:num_envs}

Table~\ref{tab:num_envs} reports the effect of varying the number of training environments $m$ on synthetic $\mathcal{G}_2$ data ($n=2000$). With only two environments, structure identification is unstable due to insufficient contrast for testing the three invariances (consistent with Assumption~\ref{assum:sufficient_diversity} requiring $\mathrm{rank}(\mathbf{D}) = 3$). Using three or more environments yields reliable identification ($\geq 89\%$) and OOD accuracy ($\geq 84\%$), with diminishing returns beyond $m=4$.

\begin{table}[h]
\centering
\caption{Effect of number of training environments $m$ on synthetic $\mathcal{G}_2$ data ($n{=}2000$).}
\label{tab:num_envs}
\small
\begin{tabular}{lcc}
\toprule
$m$ & Stru. ID. Acc (\%) & OOD Acc (\%) \\
\midrule
2 & 71.5 \scriptsize{$\pm$4.2} & 74.3 \scriptsize{$\pm$2.8}  \\
3 & 89.0 \scriptsize{$\pm$1.6} & 84.2 \scriptsize{$\pm$1.3}  \\
4 & 96.5 \scriptsize{$\pm$1.2} & 85.1 \scriptsize{$\pm$1.1}  \\
5 & 97.0 \scriptsize{$\pm$0.9} & 85.4 \scriptsize{$\pm$0.9} \\
\bottomrule
\end{tabular}
\end{table}

\subsection{Computational Efficiency and Runtime Analysis}
\label{app:runtime}

\subsubsection{Training Time and Memory}

Table~\ref{tab:runtime_app} compares training time and memory. With $D{=}1000$ RFF features, our method adds modest overhead ($\sim 30\%$) over ERM while avoiding the $O(n^2)$ complexity of exact HSIC. For Colored MNIST (10K images), exact kernel matrices would require $>10$ GB GPU memory and $4\times$ training time; RFF introduces negligible accuracy loss while enabling scalability.

\begin{table}[h]
\centering
\caption{Training time and memory for 100 epochs on Colored MNIST (10K samples).}
\label{tab:runtime_app}
\small
\begin{tabular}{lccc}
\toprule
Method & Time (min) & GPU Memory & Relative \\
\midrule
ERM                       & 8.3  & 2.1 GB  & $1.0\times$ \\
IRM                       & 9.8  & 2.4 GB  & $1.18\times$ \\
SaCRL (RFF, $D{=}1000$)   & 10.9 & 2.7 GB  & $1.31\times$ \\
SaCRL (exact HSIC)        & 33.2 & 11.8 GB & $4.0\times$ \\
\bottomrule
\end{tabular}
\end{table}

\subsubsection{Per-Epoch Runtime Breakdown}

Table~\ref{tab:runtime_breakdown} breaks down the training time into different components. We observe that computing HSIC via random Fourier features (RFF) significantly reduces the computational cost. 

\begin{table}[h]
\centering
\caption{Runtime breakdown per epoch on PACS.}
\label{tab:runtime_breakdown}
\small
\begin{tabular}{lcc}
\toprule
Component & Time (s) & Fraction \\
\midrule
Forward pass            & 4.59 & 42\% \\
HSIC computation (RFF)  & 3.02 & 28\% \\
Kernel regression       & 1.75 & 16\% \\
Backward pass           & 1.19 & 11\% \\
Other                   & 0.34 & 3\% \\
\midrule
Total                   & 10.9 & 100\% \\
\bottomrule
\end{tabular}
\end{table}

\subsection{Fine-Tuned Backbone}
\label{app:finetune}
\begin{table}[!t]
\centering
\caption{PACS (Photo held out): frozen vs.\ fine-tuned ResNet18. OOD accuracy (\%), 3 seeds.}
\label{tab:finetune}
\small
\begin{tabular}{lccc}
\toprule
Method & Frozen & Fine-tuned & $\Delta$ \\
\midrule
ERM   & 93.2 & 95.1 & +1.9 \\
IRM   & 93.2 & 95.2 & +2.0 \\
VREx  & 93.0 & 94.9 & +1.9 \\
CIRCE & 92.5 & 94.1 & +1.6 \\
SaCRL & 94.0 & 95.0 & +1.0 \\
\bottomrule
\end{tabular}
\end{table}

\section{Limitation}
\label{app:limit}

Our framework targets three canonical structures arising under partial observability and confounding-mediated environment effects; \emph{cyclic systems, multi-confounder dynamics, or settings where $E$ directly rewires $X$--$Y$ fall outside this scope}, though the heterogeneous extension and the empirical robustness study (Section~\ref{sec:robustness}) show graceful degradation under misspecification. In such cases, SaCRL signals the mismatch: if $\min_k V_k$ exceeds the permutation threshold $\tau_\alpha$ (Definition~\ref{def:calibrated_threshold}), the data do not support a canonical structure, and the selected structure should not be interpreted as identified. Structure identification further requires sufficient environment diversity (Assumption~\ref{assum:sufficient_diversity}) and at least $m \geq 3$ training environments (Appendix~\ref{app:num_envs}); under near-identical environments, the invariance penalty loses informativeness even though identification remains stable. Conditional HSIC estimation depends on kernel regression quality and may benefit from cross-fitting in low-sample regimes, per results in Table \ref{tab:cross_fitting}.
\section{Applications and Broader Connections}
\label{app:applications}

SaCRL addresses a question that recurs across several fields: \emph{which invariance should a model enforce when the causal structure of the data is unknown?} We discuss five areas where this question arises, review recent work in each, and state what SaCRL adds.

{\bf Identifiability beyond an assumed structure.} A central open problem in representation learning is \emph{identifiability}: whether the data determine the learned representation uniquely, up to simple transformations such as permutation and element-wise reparameterization, so that it recovers the factors that generated the data rather than one of many equally predictive alternatives. Without identifiability, a representation carries no guarantee of meaning, and the guarantees built on it, including robustness under distribution shift, do not follow. The foundational results are negative: nonlinear independent component analysis is not identifiable in general~\citep{hyvarinen1999nonlinear}, and unsupervised disentanglement is impossible without inductive biases on both the model and the data~\citep{locatello2019challenging}. Progress since then has come from adding structure. Auxiliary variables such as environment labels~\citep{khemakhem2020variational}, paired views from data augmentation~\citep{von2021self}, and interventions~\citep{ahuja2023interventional} each restore identifiability, and together they define the program of causal representation learning~\citep{scholkopf2021toward}. Each of these results, however, relies on a generative structure supplied by the analyst: which variables cause which, and where environments intervene. The structure itself is never identified; it enters as an assumption, and when the assumption is wrong the guarantees no longer apply. SaCRL moves identification one level up. Within the three canonical structures, it identifies from multi-environment data which structure generated the data, and therefore which invariance a representation should satisfy.

{\bf Human--AI interaction and longitudinal behavioral modeling.} Conversational AI systems now interact with the same people over weeks and months, which makes human--AI interaction a longitudinal, two-way process: users adapt to the system, and the system adapts to users through personalization, memory, and training on their feedback. Recent evidence shows why this process must be modeled causally. In a four-week randomized study with about 1{,}000 participants and more than 300{,}000 messages, the assigned chatbot conditions had no significant effect on psychosocial outcomes, whereas participants who chose to use the chatbot more reported worse outcomes~\citep{fang2025ai}, which leaves open whether heavier use drives these outcomes or reflects them. Language models tailor their answers to the beliefs users express~\citep{sharma2024towards}, so the user's state shapes the model's behavior; conversely, deployed predictions and recommendations change the very behavior they later learn from~\citep{perdomo2020performative,jiang2019degenerate}. Preferences also differ systematically across people and cultures, as PRISM documents by linking 1{,}500 participants from 75 countries to 8{,}011 live conversations with 21 language models~\citep{kirk2024prism}. The gap is that any model of user or assistant behavior over time must decide what to hold invariant across users, cultures, and time periods, and that decision depends on the direction of influence. If the modeled signal is generated by the target, as when a user's latent state generates their messages, the anti-causal invariance $\mathcal{I}_1$ applies; if the signal drives the target, as when the assistant's responses drive the user's next action, the causal invariances $\mathcal{I}_2$ or $\mathcal{I}_3$ apply; and if influence runs both ways, the system is in a feedback loop that no single invariance describes. SaCRL gives this decision a statistical footing. Treating users, cultural groups, or time windows as environments, it selects the invariance supported by the data, assigns different structures to different behavioral signals through its heterogeneous extension (Appendix~\ref{app:heterogeneous}), and, when influence is bidirectional, keeps its structure weights spread instead of committing to a wrong structure (Section~\ref{sec:robustness}, Appendix~\ref{app:crossover}). The feedback regime thus becomes observable rather than silently misspecified. Cyclic dynamics lie outside the three canonical structures, and extending the candidate set to dynamic, feedback-coupled structures is a natural next step toward causally grounded models of long-term human--AI relationships.

{\bf Multi-site clinical prediction and the direction of causation.} Clinical models are trained at some hospitals and deployed at others, and \emph{dataset shift} between sites is a leading cause of failure. A pneumonia detector trained on chest radiographs degraded at external hospitals, in part because it exploited site-specific signals~\citep{zech2018variable}; a widely deployed proprietary sepsis model performed markedly worse under external validation than its developer reported~\citep{wong2021external}; and clinicians are now urged to treat dataset shift as a routine deployment risk~\citep{finlayson2021clinician}. A causal view explains why remedies differ. In \emph{anti-causal} tasks the disease generates the image, so shifts appear as changes in prevalence or manifestation, whereas in \emph{causal} tasks the inputs generate the outcome~\citep{castro2020causality}. Dataset biases that look identical in the data can therefore demand different mitigation strategies~\citep{jones2024causal}, and shift-stable modeling requires the analyst to specify the causal graph~\citep{subbaswamy2020development}. The gap is that the causal direction must be declared by hand, per task and often per variable: symptoms are anti-causal, biomarkers are not (Figure~\ref{fig:toy_example}). SaCRL treats each hospital as an environment and infers the direction from data. Theorem~\ref{thm:structure_identification} bounds the probability of selecting a wrong structure by $6\exp(-n\eta^2/(2048\kappa^4))$, the heterogeneous extension assigns a structure to each variable (Appendix~\ref{app:heterogeneous}), and the permutation test of Section~\ref{sec:method} flags cases where no canonical structure fits, so a clinical team knows when the selected structure should not be trusted.

{\bf Domain adaptation and generalization: which distribution to align.} Methods for learning under distribution shift differ in which part of the joint distribution they hold fixed across domains. \emph{Marginal alignment} makes $P(\phi(X))$ invariant, as in domain-adversarial training~\citep{ganin2016domain}; \emph{conditional alignment} makes $P(\phi(X) \mid Y)$ invariant, as under target and conditional shift~\citep{zhang2013domain} and generalized label shift~\citep{tachet2020domain}; and \emph{covariate-shift} methods such as IRM make $P(Y \mid \phi(X))$ invariant~\citep{arjovsky2019invariant}. The appropriate assumption depends on the causal direction~\citep{scholkopf2012causal}: label shift is natural in anti-causal tasks and can be corrected with black-box predictors~\citep{lipton2018detecting}, while marginal alignment provably fails when label distributions differ across domains~\citep{zhao2019learning}. Recent work makes this dependence explicit: each causal graph entails different independence constraints, so any algorithm built on a single fixed constraint fails on some shifts~\citep{kaur2022modeling,wang2023causal}, consistent with the finding that no domain generalization method reliably outperforms ERM~\citep{gulrajani2020search}. Adaptive methods such as CACM~\citep{kaur2022modeling} still take the graph as input. SaCRL removes this requirement. Its three invariances are exactly these three alignment targets, $\mathcal{I}_3$ (marginal), $\mathcal{I}_1$ (conditional), and $\mathcal{I}_2$ (covariate shift), measured on a common HSIC scale and selected by SoftMin without the graph. Theorem~\ref{thm:violation_characterization} shows that a predictive representation can satisfy only the invariance of the true structure, and the experiments show the cost of a wrong choice: on anti-causal Colored MNIST, IRM, which enforces $\mathcal{I}_2$, falls below ERM (21.3\% vs.\ 27.2\%), while SaCRL reaches 45.2\% (Table~\ref{tab:colored_mnist}).

{\bf Algorithmic fairness: independence, separation, and sufficiency.} The three main group-fairness criteria constrain a score or representation $R$ with respect to a sensitive attribute $A$: \emph{independence} (demographic parity) requires $R \perp\!\!\!\perp A$, \emph{separation} (equalized odds) requires $R \perp\!\!\!\perp A \mid Y$, and \emph{sufficiency} (calibration within groups) requires $Y \perp\!\!\!\perp A \mid R$~\citep{barocas2023fairness}. Impossibility results show that these criteria cannot hold simultaneously outside degenerate cases~\citep{kleinberg2016inherent,chouldechova2017fair}. Recent causal analyses show that the appropriate criterion depends on the data-generating graph: in anti-causal prediction, separation coincides with a notion of risk invariance~\citep{makar2022fairness}, and whether data balancing achieves a given criterion depends on the graph~\citep{schrouff2024mind}. The gap is that practitioners choose a criterion without knowing which one the graph permits at no cost in accuracy. With $A$ in the role of $E$, SaCRL's invariances are precisely these criteria: $\mathcal{I}_3$ is independence, $\mathcal{I}_1$ is separation, and $\mathcal{I}_2$ is sufficiency, with $V_3$, $V_1$, and $V_2$ measuring their violations. When $A$ enters the system as $E$ does in Figure~\ref{fig:structures}, Theorem~\ref{thm:violation_characterization} gives a causal counterpart of the impossibility results: a predictive representation can satisfy at most the criterion matching the true structure, that criterion is achievable when $X_z^{\perp}$ is predictive, and the adaptive weights reveal which one it is. Since the predictor is a function of $\phi(X)$, independence and separation of $\phi(X)$ carry over to the score. The choice of a fairness criterion remains a normative decision; SaCRL informs it by showing which criterion the data can support.

{\bf Diagnosing and explaining distribution shift.} A related line of work asks not how to be robust, but what changed. \emph{Shift detection} tests whether training and deployment distributions differ~\citep{rabanser2019failing}. \emph{Shift attribution} explains a performance drop by the mechanisms that changed, either using a known causal graph~\citep{zhang2023did} or by decomposing the drop into a change in the covariate distribution ($X$-shift) and in the outcome given the covariates ($Y \mid X$-shift)~\citep{cai2026diagnosing}. A benchmark of natural tabular shifts found $Y \mid X$-shifts to be the most prevalent and showed that conclusions about algorithms depend on the type of shift~\citep{liu2023need}. These tools explain the failure of a fixed model after deployment; they do not decide what a model should be invariant to during training. SaCRL connects the two. Its diversity matrix $\mathbf{D}$ (Definition~\ref{def:diversity_matrix}) measures, across training environments, the marginal shift in $P(X)$, the conditional shift in $P(X \mid Y)$, and the outcome shift in $P(Y \mid X)$ with kernel mean embeddings; the first and third are the $X$- and $Y \mid X$-shifts of the attribution literature, computed between training environments rather than between training and deployment. A permutation check on $\mathbf{D}$ before training determines which invariances are testable (Appendix~\ref{app:assumptions}), and training then enforces the one the data support, so diagnosis and remedy become parts of a single objective.

{\bf Invariance-based causal discovery and conditional independence testing.} Heterogeneity across environments is also a source of causal knowledge. Invariant causal prediction identifies the causal parents of a target as the variables whose conditional distribution is stable across environments~\citep{peters2016causal}, and its nonlinear extension replaces linear models with nonparametric conditional independence tests~\citep{heinze2018invariant}; interventional single-cell data, such as the protein-signaling measurements of~\citet{sachs2005causal}, remain standard testbeds. These methods rest on conditional independence testing, where kernel tests are widely used~\citep{zhang2012kernel} and recent split-sample variants improve calibration~\citep{pogodin2024practical}. A fundamental limit applies: no conditional independence test can control its level over all null distributions while retaining power against all alternatives~\citep{shah2020hardness}, so useful guarantees require explicit assumptions. Existing methods test invariances of observed variables and return a set of parents; they neither handle high-dimensional inputs such as images nor learn a representation. SaCRL tests invariances of a learned representation $\phi(X)$ and learns $\phi$ jointly with the test. Its guarantees are stated under explicit assumptions consistent with this limit, namely an additive-noise model for conditional HSIC (Lemma~\ref{lem:conditional_hsic}) and a positive faithfulness gap $\eta$ (Definition~\ref{def:faithfulness_gap}). On semi-synthetic benchmarks built from the ASIA, ALARM, and SACHS networks, SaCRL selects the structure fixed by the published graph in 75--95\% of runs (Table~\ref{tab:bn}).

\end{document}